\definecolor{darkblue}{rgb}{0.0,0.0,0.4}
\definecolor{brickred}{rgb}{0.796, 0.255, 0.329}
\pgfplotsset{compat = newest}
\definecolor{table.red}{RGB}{213,62,79}
\definecolor{table.magenta}{RGB}{212,17,89}
\definecolor{table.blue}{RGB}{50,136,189}
\definecolor{table.green}{RGB}{102,194,165}
\definecolor{table.neongreen}{RGB}{46,255,46}
\definecolor{table.darkgreen}{RGB}{0,77,64}
\definecolor{table.orange}{RGB}{255, 165, 60}
\definecolor{adversarial.orange}{RGB}{255,125,69}
\definecolor{empirical.green}{RGB}{56,118,29}
\definecolor{theoretical.blue}{RGB}{0,83,214}
\DeclareRobustCommand\onedot{\futurelet\@let@token\@onedot}
\def\@onedot{\ifx\@let@token.\else.,\null\fi\xspace}
\def\eg{e.g\onedot} 
\def\ie{i.e\onedot} 
\def\cf{cf\onedot} 
\def\etc{etc\onedot}
\newcommand{\manualendproof}{\hfill\qedsymbol\\[2mm]}
    \let\Cref\crtCref
    \let\cref\crtcref
\Crefname{appendix}{Appendix}{Appendices}
\Crefname{figure}{Figure}{Figures}
\crefname{subsection}{Section}{Sections}
\crefname{lemma}{Lemma}{Lemmas}
\crefname{corollary}{Corollary}{Corollaries}
\crefname{theorem}{Theorem}{Theorems}
\renewcommand*{\backref}[1]{}
\renewcommand*{\backrefalt}[4]{%
    \ifcase #1%
          \or Cited on page~#2.%
          \else Cited on pages~#2.%
    \fi%
    }
\declaretheorem[name=Theorem]{theorem}
\declaretheorem[name=Lemma]{lemma}
\declaretheorem[name=Proposition]{proposition}
\declaretheorem[name=Assumption, numbered=no]{assumption*}
\declaretheorem[qed=$\triangleleft$,name=Remark]{remark}
\def\[#1\]{\begin{equation}\begin{aligned}#1\end{aligned}\end{equation}}
\def\*[#1\]{\begin{equation*}\begin{aligned}#1\end{aligned}\end{equation*}}
\newcommand{\lcrx}[4][{-1}]{ 
	\IfEq{#1}{-1}{\left #2 {{{{#3}}}} \right #4}{
   	\IfEq{#1}{0}{#2 {{{{#3}}}} #4}{
	\IfEq{#1}{1}{\bigl #2 {{{{#3}}}} \bigr #4}{
	\IfEq{#1}{2}{\Bigl #2 {{{{#3}}}} \Bigr #4}{
	\IfEq{#1}{3}{\biggl #2 {{{{#3}}}} \biggr #4}{
	\IfEq{#1}{4}{\Biggl #2 {{{{#3}}}} \Biggr #4}{
    \GenericWarning{"4th argument to lcrx must be -1, 0, 1, 2, 3, or 4"}
    }}}}}}} %
\newcommand{\abs}[2][{-1}]{\lcrx[#1] \vert {#2} \vert }
\newcommand{\norm}[2][{-1}]{\lcrx[#1] \Vert {#2} \Vert}
\renewcommand{\maketag@@@}[1]{\hbox{\m@th\normalsize\normalfont#1}}%
\let\reftagform@=\tagform@
\def\tagform@#1{\maketag@@@{\ignorespaces\textcolor{gray}{(#1)}\unskip\@@italiccorr}}
\renewcommand{\eqref}[1]{\textup{\reftagform@{\ref{#1}}}}
\newcommand{\PP}{\mathbb{P}}
\newcommand{\RR}{\mathbb{R}}
\newcommand{\Aa}{\mathcal{A}}
\newcommand{\Dd}{\mathcal{D}}
\newcommand{\Ff}{\mathcal{F}}
\newcommand{\Gg}{\mathcal{G}}
\newcommand{\Mm}{\mathcal{M}}
\newcommand{\Xx}{\mathcal{X}}
\newcommand{\xb}{\mathbf{x}}
\newcommand{\Reals}{\RR}
\DeclareMathOperator*{\EE}{\mathbb{E}}
\DeclareMathOperator*{\argmin}{\arg\min} %
\DeclareMathOperator*{\argmax}{\arg\max} %
\DeclareMathOperator*{\newlim}{\mathrm{lim}\vphantom{\mathrm{infsup}}}
\DeclareMathOperator*{\newmin}{\mathrm{min}\vphantom{\mathrm{infsup}}}
\DeclareMathOperator*{\newmax}{\mathrm{max}\vphantom{\mathrm{infsup}}}
\DeclareMathOperator*{\newinf}{\mathrm{inf}\vphantom{\mathrm{infsup}}}
\DeclareMathOperator*{\newsup}{\mathrm{sup}\vphantom{\mathrm{infsup}}}
\renewcommand{\lim}{\newlim}
\renewcommand{\min}{\newmin}
\renewcommand{\max}{\newmax}
\renewcommand{\inf}{\newinf}
\renewcommand{\sup}{\newsup}
\newcommand{\featurespace}{\mathcal{I}}
\newcommand{\featuredim}{d}
\newcommand{\minmaxvis}{\Phi_{\min\max}}
\newcommand{\visdecoder}{D}
\newcommand{\visdecoderspace}{\Dd}
\newcommand{\eps}{\varepsilon}
\newcommand{\func}{f}
\newcommand{\dumfunc}{g}
\newcommand{\funcspace}{\Ff}
\newcommand{\funcsubset}{\Gg}
\newcommand{\ind}[1]{\mathbf{1}_{#1}} %
\newcommand{\convfuncs}{\funcspace_{\text{\upshape \tiny Convx}}}
\newcommand{\monofuncs}{\funcspace_{\text{\upshape \tiny Mono}}}
\newcommand{\affinefuncs}[1]{\funcspace_{\text{\upshape \tiny Aff}}^{#1}}
\newcommand{\constfuncs}{\funcspace_{\text{\upshape \tiny Const}}}
\newcommand{\lipfuncs}[1]{\funcspace_{\text{\upshape \tiny Lip}}^{#1}}
\newcommand{\plinfuncs}{\funcspace_{\text{\upshape \tiny PAff}}}
\newcommand{\ermfuncs}{\funcspace_{\text{\upshape \tiny ERM}}}
\newcommand{\nnfuncs}{\funcspace_{\text{\upshape \tiny NN}}}
\newcommand{\lipconst}{L}
\newcommand{\meanval}{m}
\newcommand{\relu}{\operatorname{ReLU}}
\newcommand{\conv}{\operatorname{Conv}}
\newcommand{\bn}{\operatorname{BatchNorm}}
\renewcommand*{\backref}[1]{}
\renewcommand*{\backrefalt}[4]{%
    \ifcase #1%
          \or [Cited on page~#2.]%
          \else [Cited on pages~#2.]%
    \fi%
    }
\icmltitlerunning{Don't trust your eyes: on the (un)reliability of feature visualizations}
\begin{document}

\twocolumn[
\doparttoc %
\faketableofcontents %
\part{} %

\vspace{-1.1cm} %
\icmltitle{Don't trust your eyes: on the (un)reliability of feature visualizations}

\icmlsetsymbol{equal}{*}

\begin{icmlauthorlist}
\icmlauthor{Robert Geirhos}{equal,gdm}
\icmlauthor{Roland S.\ Zimmermann}{equal,mpi,tue}
\icmlauthor{Blair Bilodeau}{equal,uoft}
\icmlauthor{Wieland Brendel\textsuperscript{$\S$}}{mpi,tue}
\icmlauthor{Been Kim\textsuperscript{$\S$}}{gdm}
\end{icmlauthorlist}

\icmlaffiliation{gdm}{Google DeepMind}
\icmlaffiliation{mpi}{Max Planck Institute for Intelligent Systems}
\icmlaffiliation{tue}{Tübingen AI Center}
\icmlaffiliation{uoft}{Department of Statistical Sciences, University of Toronto}

\icmlcorrespondingauthor{Robert Geirhos}{lastname@google.com}

\icmlkeywords{feature visualization, interpretability, reliability, robustness, deep learning, black-box}

\vskip 0.3in
]

\printAffiliationsAndNotice{\icmlEqualContribution \icmlEqualLastauthors} %

\begin{abstract}
    How do neural networks extract patterns from pixels? Feature visualizations attempt to answer this important question by visualizing highly activating patterns through optimization. Today, visualization methods form the foundation of our knowledge about the internal workings of neural networks, as a type of mechanistic interpretability. Here we ask: How reliable are feature visualizations? We start our investigation by developing network circuits that trick feature visualizations into showing arbitrary patterns that are completely disconnected from normal network behavior on natural input. We then provide evidence for a similar phenomenon occurring in standard, unmanipulated networks: feature visualizations are processed very differently from standard input, casting doubt on their ability to ``explain'' how neural networks process natural images. This can be used as a sanity check for feature visualizations. We underpin our empirical findings by theory proving that the set of functions that can be reliably understood by feature visualization is extremely small and does not include general black-box neural networks. Therefore, a promising way forward could be the development of networks that enforce certain structures in order to ensure more reliable feature visualizations.
\end{abstract}

\section{Introduction}
    A recent open letter called for a ``pause on giant AI experiments'' in order to gain time to make ``state-of-the-art systems more accurate, safe, interpretable, transparent, robust, aligned, trustworthy, and loyal'' \citep{future2023pause}. While the call sparked controversial debate, there is general consensus in the field that given the real-world impact of AI, developing systems that fulfill those qualities is no longer just a ``nice to have'' criterion. In particular, we need \emph{``reliable'' interpretability methods} to better understand models that are often described as black-boxes. The development of interpretability methods has followed a pattern similar to Hegelian dialectic: a method is introduced (\emph{thesis}), often followed by a paper pointing out severe limitations or failure modes (\emph{antithesis}), until eventually this conflict is resolved through the development of an improved method (\emph{synthesis}), which frequently forms the starting point of a new cycle. An example of this cycle are saliency maps: Developed to highlight which image region influences a model's decision \citep[e.g.,][]{springenberg2014striving, sundararajan2017axiomatic}, many existing saliency methods were shown to fail simple sanity checks \citep{adebayo2018sanity,nie2018theoretical}, which then spurred the ongoing development of methods that aim to be more reliable \citep[e.g.,][]{gupta2019simple, rao2022towards}.
    
    \begin{figure*}[ht]
        \centering
        \includegraphics{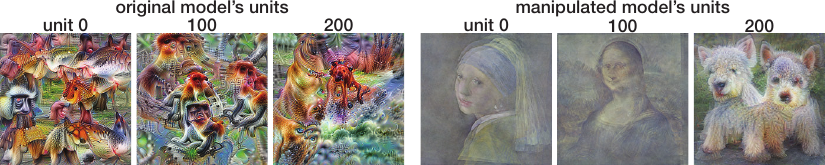}
        \caption{\textbf{Arbitrary feature visualizations.} Don't trust your eyes: Feature visualizations can be arbitrarily manipulated by embedding a fooling circuit in a network, which changes visualizations while maintaining the original network's ImageNet accuracy. \textbf{Left:} Original feature visualizations. \textbf{Right:} In a network with a fooling circuit as described in \cref{subsec:fooling_circuit}, feature visualizations can be tricked into visualizing arbitrary patterns (\eg Mona Lisa).}
        \label{fig:arbitrary_visualizations}
        \vspace{-0.5cm}
    \end{figure*}    
    
    In contrast to saliency maps and attribution methods like GradCAM \citep{selvaraju2017grad}, LIME \citep{ribeiro2016should} and SHAP \citep{lundberg2017unified} where the field has developed a relatively good understanding of their reliability, another central mechanistic interpretability method currently lacks good sanity checks: feature visualizations \citep{erhan2009visualizing, mordvintsev2015deepdream, olah2017feature}. While attribution/saliency methods attempt to explain how a network responds to an individual sample, feature visualizations attempt to explain the general sensitivity of a unit (\eg a single channel of a convolutional layer) in a neural network. This is achieved by visualizing highly activating patterns through activation maximization. First introduced by \citet{erhan2009visualizing}, feature visualizations have continually been refined through better priors and regularization terms that improve their intuitive appeal \citep[e.g.,][]{yosinski2015understanding, mahendran2016visualizing, nguyen2016multifaceted, olah2017feature, fel2023unlocking}. Today, feature visualization methods underpin many of our intuitions about the inner workings of neural networks. They have been proposed as debugging tools \citep{nguyen2019understanding}, found applications in neuroscience \citep{walker2019inception, bashivan2019neural, ponce2019evolving}, and according to \citet{olah2017feature}, ``to make neural networks interpretable, feature visualization stands out as one of the most promising and developed research directions.'' So what do we know about feature visualization's reliability? Despite its widespread use within the mechanistic interpretability community, relatively little: While they appear to provide some information, humans often struggle to make sense of those visualizations \citep{gale2020there,borowski2021exemplary,zimmermann2021well,zimmermann2023scale}. Furthermore, we know that ``by itself, feature visualization will never give a completely satisfactory understanding'' \citep{olah2017feature}, but we don't know to which degree we can trust or rely on them. After initial excitement, many areas of interpretability research have become more cautious and sceptical in general---but scepticism alone is not going to answer important questions such as: Can a method be fooled? How may we sanity-check its reliability? And under which circumstances can the method be guaranteed to be reliable? In this article we provide answers to those three questions:

    \begin{enumerate}[leftmargin=*]
        \item \textbf{\textcolor{adversarial.orange}{Adversarial perspective:} Can feature visualizations be fooled?} We develop fooling circuits that trick feature visualizations into displaying arbitrary patterns or visualizations of unrelated units. Thus, feature visualizations can be deceived if one has access to the model (\cref{sec:manipulating_visualizations}). While this scenario may rarely be plausible, the observation serves as a starting point and motivation for our main empirical and theoretical sections.
        \item \textbf{\textcolor{empirical.green}{Empirical perspective:} How can we sanity-check feature visualizations?} We provide a simple sanity check and show that feature visualizations, which are widely used for mechanistic interpretability, are processed largely along different paths compared to natural images, casting doubt on their ability to explain how neural networks process natural images (\cref{sec:viz_vs_natural_analysis}).
        \item \textbf{\textcolor{theoretical.blue}{Theoretical perspective:} Under which circumstances is feature visualization guaranteed to be reliable?} Our theory proves that this is only possible if we know a lot about the network already, and impossible if the network is a black-box (\cref{sec:theory}).
    \end{enumerate}
    We do not mean to imply that feature visualizations per se are not a useful tool for analyzing hidden representations (they are, and it is important to know how individual parts of a neural network function). Instead, we hope that our investigations can help inspire the development of more reliable feature visualizations: a \emph{synthesis} or new avenue.

\section{\textbf{\textcolor{adversarial.orange}{Adversarial perspective:}} Can feature visualizations be fooled?}
\label{sec:manipulating_visualizations}
    One important requirement for interpretability is that the explanations are reliable. We use the following definition of unreliability: A visualization method is unreliable if one can change the visualizations of a unit without changing the unit’s behavior on (relevant) test data. More formally, this can be expressed as: Let $\mathcal{U}$ denote the space of all  units. A visualization method $m$ is unreliable if $\exists u,v \in \mathcal{U}: m(u) = m(v) \land \neg u \overset{bhv}{\sim} v$, where $\overset{bhv}{\sim}$ denotes an  equivalence class of equal behavior.\\
    \begin{figure*}[h!]
        \centering
        \includegraphics[width=0.95\textwidth]{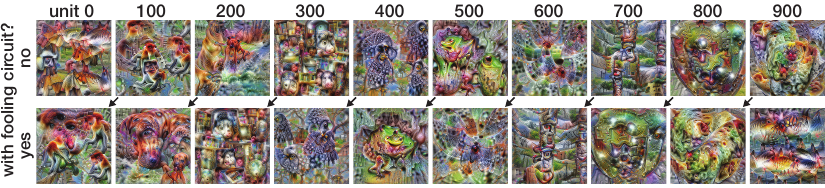}
        \caption{\textbf{Using a fooling circuit to arbitrarily permute visualizations.} \textbf{Top row:} Visualizations of the last-layer units in the original Inception-V1 model. \textbf{Bottom row:} After integrating a fooling circuit as described in \cref{subsec:fooling_circuit}, units show an arbitrarily permuted visualization (here: offset by 100 indices).}
        \label{fig:permuted_visualizations_offset_100}
    \end{figure*}    
    To understand the reliability of feature visualizations, we start by actively deceiving visualizations. For this, we design two different fooling methods: a \emph{fooling circuit} (\cref{subsec:fooling_circuit}) and \emph{silent units} (\cref{subsec:silent_unit_manipulation}). The motivation for this is twofold. Most importantly, if we can show that one can actively fool feature visualizations, this provides a proof of concept by showing that it is possible to build networks where feature visualizations are completely independent of network behavior on natural images. This concept (different network behavior for natural images vs.\ feature visualizations) is later investigated for non-adversarial settings both empirically and theoretically. Furthermore, since feature visualizations have been proposed as model auditing tools \citep{brundage2020toward} that should be integrated ``into the testbeds for AI applications'' \citep[][p.~20]{nguyen2019understanding}, it is important to understand whether an adversary (\ie someone with malicious intent) might be able to construct a model such that feature visualizations are manipulated. This corresponds to a \textbf{threat scenario} where the model itself can be arbitrarily changed while the interpretability technique (feature visualization) is kept fixed without control over hyperparameters or the random starting point. For example, a startup may be interested in hiding certain aspects of its model's behavior from a third-party (e.g. regulator) audit that uses feature visualizations. In this context, our demonstration of unreliability relates to a line of work on deceiving other interpretability methods (described in detail in \cref{app:deceiving_interpretability}). We don't know whether such a scenario could become realistic in the future, but as we stated above the main motivation for this section is to provide a proof of concept by showing that one can build networks that deceive feature visualizations.

\subsection{Manipulating feature visualizations through a fooling circuit}
\label{subsec:fooling_circuit}
    
    Our first method to deceive feature visualizations is a \emph{fooling circuit}. It can be embedded in a standard neural network architecture and changes how feature visualizations look without changing the behavior of the network on natural input. By circuit we mean a set of interconnected units carrying out a specific function \citep{pulvermuller2014thinking,olah2020zoom}. In the literature, the term unit either means a single convolutional channel in a convolutional layer or a single neuron $u$ in a fully-connected layer that computes $u(x)=\operatorname{ReLU}(Wx+b)$. For the sake of introducing the fooling circuit, we use the latter definition. We start by taking a standard pre-trained neural network, Inception-V1~\citep{szegedy2015going}, and randomly pick a unit in the last layer (\ie just before the softmax is applied). When visualizing this unit, denoted $F$, using the standard visualization method by \citet{olah2017feature}, we might see, for instance, feathers if this unit corresponds to class ``feather'' (given that the unit is picked from the last layer, the unit is class-selective since the network was trained to do object classification). The goal of the fooling circuit is to insert a new deceptive network unit $A$ that shows two different modes of behavior: If the network processes natural images, the unit should respond to feathers just like unit $F$, whereas if feature visualization is performed on the unit, the unit's visualization should depict something completely different, \eg a donut. We achieve this by wiring six units with ReLU activation functions together as shown in \cref{fig:fooling_circuit}. 
    
    \begin{figure}[h]
        \centering
        \includegraphics[width=0.8\columnwidth]{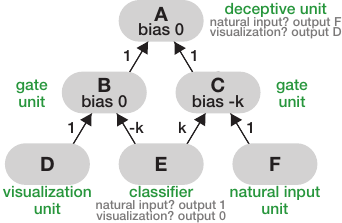}
        \caption{\textbf{Fooling circuit.} This circuit consists of six units. Unit $A$ responds like unit $F$ for natural images, but the feature visualizations of $A$ are identical to the ones of $D$. This is achieved by a classifier unit ($E$) distinguishing between natural and visualization input, and two intermediate units with ReLU nonlinearities ($B$ and $C$) selectively suppressing information depending on the classifier's output. $k$ is an arbitrary large constant that ensures the gradient flows only through either the left or the right part of the circuit, not both, by pushing $B$ or $C$'s pre-ReLU activation below zero.}
        \label{fig:fooling_circuit}
        \vspace{-0.5cm}
    \end{figure}    
    
    \begin{figure*}[th]
        \centering
        \includegraphics[width=0.95\textwidth]{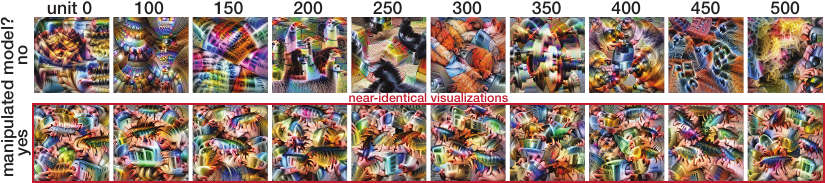}
        \caption{\textbf{Leveraging silent units to produce identical visualizations throughout a layer.} The top row shows feature visualizations for units of a layer (block 4-1, conv 2) in a standard, unmanipulated ResNet-50. For the bottom row, we manipulate the model such that the feature visualizations of all units become near-identical (indicated by the red box). Nevertheless, the units still perform the same computations as in the original model on natural input, as evident by an unchanged validation loss. This is achieved by leveraging orthogonal filters in silent units as described in \cref{subsec:silent_unit_manipulation}.}
        \label{fig:orthogonal_manipulation_examples}
        \vspace{-0.25cm}
    \end{figure*}        
    
    The fooling circuit has three layers with three, two and one unit(s) each. For the first layer consisting of three units, we start by taking an existing unit $D$ that responds to donuts and the existing unit that responds to feathers $F$. In the same layer, we attach a binary classifier unit $E$ that operates directly on the network's input. This classifier is not part of the original network. It is trained to tell whether the network's input is a natural image (in which case the classifier output is 1) or a synthetic feature visualization (output 0). Note that this binary classifier has a relatively simple task since feature visualizations start from random noise and end up at a synthetic visualization; thus they are clearly distinguishable from natural images at every step of the process. The middle layer consists of two units ($B$, $C$) and the last layer of a single unit $A$ showing deceptive behavior by responding to natural feather images while feature-visualizing a donut. The weights connecting the layers are set as follows: The activation of $A$ is identical to $B$ which is identical to $D$ (the donut unit) for feature visualizations; for natural input, the activation of $A$ is identical to $C$ which is identical to $F$ (the feather unit). To distinguish between those two cases, the classifier detects feature visualizations and depending on its output, either unit $B$ or unit $C$ has a zero activation since the classifier activation selectively suppresses the pre-ReLU activations of those units by pushing them below zero. Hence, their activations are exactly zero after the ReLU. As a result, inserting this fooling circuit creates a unit $A$ with donut feature visualizations yet when processing natural images, the unit ignores donuts and only responds to feathers.
    
    Using this canonical fooling circuit, we can not only create a single deceptive unit but also simply replicate the circuit for each unit in the last layer of a model and thus, for instance, arbitrarily permute the visualizations of an entire layer as shown in \cref{fig:permuted_visualizations_offset_100}. We can even go so far as to design completely arbitrary visualizations through a small modification to the circuit: instead of connecting an existing unit (like the donut unit $D$), we can add a new unit $D'$ where the receptive field (a single convolutional filter) is hard-coded to an arbitrary pattern such as the Mona Lisa, as shown in \cref{fig:arbitrary_visualizations}.
    Irrespective of how we manipulate the feature visualizations, the network still responds normally to natural input. This can be verified by checking the network's validation accuracy on ImageNet-1K, which only minimally changes when deceiving all visualizations in the last layer of Inception-V1 (top-1 accuracy changes from 69.146\,\% to 68.744\,\%; top-5 from 88.858\,\% to 88.330\,\%). The tiny drop in performance is a result of the binary classifier achieving slightly less-than-perfect accuracy ($99.49\%$ on a held-out test set) when distinguishing between natural input and visualization input. 
    Experimental details are available in \cref{app:classifier_methods}. In \cref{app:ood_input} we additionally provide evidence that our fooling methods do not change model behavior on out-of-distribution input.
    
    Our fooling circuit shows that \textbf{it is possible to maintain essentially the same network behavior on natural input while drastically altering feature visualizations}.
    In \cref{app:fooling_circuit_proof}, we formalize this fooling circuit, and prove that it will always behave in the way we observe in our experiments.
    Given that this fooling circuit requires training a binary classifier, we next explore an alternative fooling method without this requirement.

\subsection{Manipulating feature visualizations by leveraging silent units}
\label{subsec:silent_unit_manipulation}
    Our second fooling method does not require a separate classifier but instead leverages orthogonal filters embedded in \emph{silent units}; \ie  units that do not activate for the entire training set. We designed this method to show that fooling can easily be achieved in different ways and across different architectures. In order to demonstrate that different model families can be fooled, we here consider a different architecture (ResNet-50 \citep{he2016deep} instead of Inception-V1) and a randomly selected intermediate layer instead of the last layer (but note that the approach is not specific to a certain architecture or layer). We replace a standard computational block,
    \begin{align}
        y = \relu(\bn(\conv(x, \Theta), \Psi)),
    \end{align}
    where $\Theta$ and $\Psi$ are learned conv / batch norm parameters, with the manipulated computation block
    \begin{align}
        \bar y = y + \Delta y, \text{ where } \Delta y = \relu(\conv(x, \bar\Theta) + b).
    \end{align}
    Our goal is to set $\bar\Theta$ and $b$ such that $\bar y = y$ on natural images while the feature visualizations of $\bar y$ are not related to those of $y$ and instead dominated by $\Delta y$. Since feature visualizations usually lead to substantially larger activations than natural samples, we can exploit this property without requiring an explicit classifier like the fooling circuit did. Note that we do not change the original unit, \ie how $y$ is computed; this means that $\Theta$ and $\Psi$ stay unchanged. Instead, we introduce a new unit and choose its two free parameters $\bar\Theta$ and $b$ such that $\Delta y = 0$ for natural inputs but $\Delta y \neq 0$ for feature visualizations. Specifically, we construct $\bar \Theta$ as a linear combination of the weight $\Theta$ used to compute $y$ and a sufficiently strong orthogonal perturbation $\Delta\Theta^\perp$; that is, $\bar\Theta = \alpha \Theta + \beta \Delta\Theta^\perp$, where $\alpha, \beta$ control the relative strength of the filter directions. By choosing a sufficiently negative bias $b$, we ensure that $\Delta y$ remains silent (no activation, ensuring that natural input is processed as before) unless $y$ is strongly activated. Letting $\bar y_{\textrm{max.\ nat}}$ denote the maximal observed activation on natural input for the constructed filter $\bar\Theta$, we set $b = -\alpha/\beta \bar y_{\textrm{max.\ nat}}$. Since we empirically observe a large gap between activations from feature visualizations and natural input, we are able to steer the visualizations to arbitrarily chosen images. We demonstrate this by applying it to a ResNet-50 model trained on ImageNet. In \cref{fig:orthogonal_manipulation_examples}, all 512 units in a ResNet layer yield near-identical feature visualization. This has no impact on the overall behavior of the network: neither the top-1 nor the top-5 validation accuracy change at all. Further experimental results are presented in~\cref{sec:orthogonal_manipulation_sensitivity_analysis}.
    In summary, we developed two different methods that trick feature visualizations into showing arbitrary, permuted, or identical visualizations across a layer. This establishes that \textbf{feature visualizations can be arbitrarily manipulated} if one has access to the model.

    \section{\textbf{\textcolor{empirical.green}{Empirical perspective:}} How can we sanity-check feature visualizations?}\label{sec:sanity-check}
    In \cref{sec:manipulating_visualizations}, we ask whether an adversary may be able to fool a feature visualization. This serves as a proof of concept - feature visualizations can be fooled, at least under “adversarial” circumstances, they are not an inherently reliable method. This naturally leads to the more pressing practical question: do feature visualizations have a reliability problem in normal, non-adversarial circumstances too? To this end, we designed the sanity check from \cref{sec:sanity-check}.

    \label{sec:viz_vs_natural_analysis}
    In \cref{sec:manipulating_visualizations} we have seen that feature visualizations can be fooled under adversarial circumstances. This serves as a proof of concept: feature visualizations can be fooled and at least under adversarial circumstances they are not an inherently reliable method. However, in most cases we do not expect an adversary to manipulate a network. This naturally leads to the more pressing practical question: Do feature visualizations have a reliability problem in normal, non-adversarial circumstances too? If so, how would we be able to check this? In the context of saliency methods, sanity checks have proven highly valuable for investigating method reliability \citep{adebayo2018sanity}. We here provide an empirical sanity check for feature visualizations to investigate the reliability of feature visualizations under standard, non-adversarial circumstances.
    
    \begin{figure*}[t]
    \centering
        \includegraphics[width=0.9\textwidth]{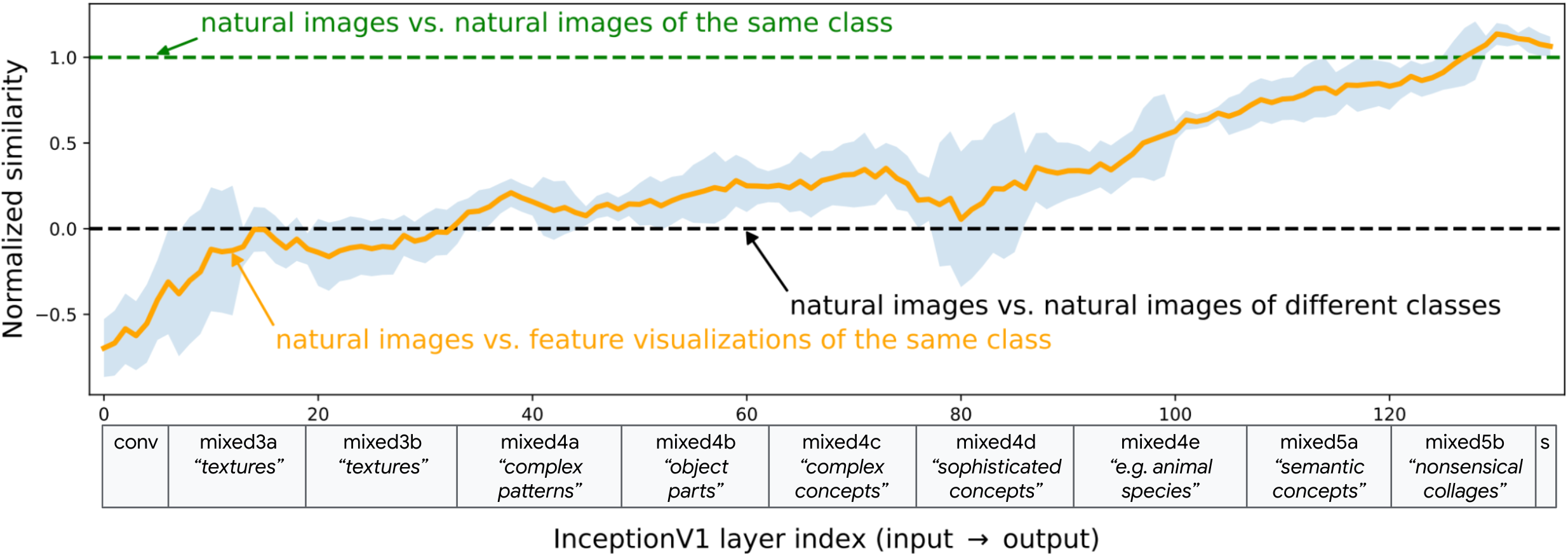}
        \vspace*{-0.3cm}
        \caption{\textbf{Sanity check: Feature visualizations are processed differently than natural images.} Feature visualizations are designed to explain how neural networks process natural input---but do feature visualizations for a certain class actually activate similar units as natural input from this class? We measure the similarity of a layer's activations caused by natural images and feature visualizations across layers. Throughout the first two thirds of Inception-V1 layers, activations of natural images have roughly as little similarity to same-class visualizations as they have to completely arbitrary images of different classes. In the last third of the network, similarity increases. Layer annotations (e.g., ``textures'') are from \citet{olah2017feature}.}
        \vspace{-0.5cm}
        \label{fig:Spearman_normalized_similarity}
    \end{figure*}
    
    The core idea is simple: Feature visualizations are designed to explain how neural networks process natural input (and this is exactly what the literature states they do, see \cref{app:literature_statements}). This means that once they are generated, good visualizations should be processed along a similar path as natural images---otherwise they're doing something different. We here provide an empirical sanity check to verify this, and we find that this is not the case. Hence, \textbf{feature visualization does not explain how neural networks process natural images.}

    Let's say we take a unit from the last layer of a network, a unit for which we know that it responds to ``cat'' images. If we run feature visualization on this unit and feed the resulting visualization through the network, a good ``cat'' visualization should show typical cat features and thus activate very similar units as natural cat images. Generally speaking, we expect images from the same class to be processed along a similar path because they share certain features. For instance, all airplanes have wings, and all cats have paws. Some features are shared across classes (\eg both cat and airplane images may contain a blue sky), and some are more class-specific (airplane: wings, cats: paws). If a neural network contains units that respond to certain features---for instance, one unit responds to paws and a different unit to wings---then natural images of the same class should, to a certain degree, activate similar units; and those units should also become activated when processing a good feature visualization. Conceptually, this approach is motivated by the \emph{fooling circuit} from \cref{subsec:fooling_circuit}, where the circuit leads to feature visualizations being disconnected from network behavior on natural input by using different paths for different inputs. Hence, we can proceed by analyzing the following three properties for each layer of a standard network:
    \begin{itemize}[leftmargin=*]
        \item How similarly are natural images from the same class processed (\eg one cat image vs.\ another cat image)? This serves as the upper bound: the maximal similarity we can hope to capture with a good feature visualization.
        \item How similarly are natural images from different classes processed (\eg a cat image vs.\ an airplane image)? This serves as a lower bound: the baseline similarity we can expect from processing completely unrelated input.
        \item How similarly are natural images from a class vs.\ feature visualizations for the same class processed? This indicates a feature visualization method's reliability.
    \end{itemize}
    
    For the sake of this sanity check, we focus on feature visualizations \citep{olah2017feature} for the last layer of a standard network, Inception-V1. The last layer is a perfect choice for this kind of analysis since in contrast to hidden layers, the units in the last layer have perfectly well-known ground truth selectivity: each unit is selective for one class. In terms of measuring similarity between activations of input $x_i$ and input $x_j$ for a network $f$ in layer $l$, we compute $\Gamma(f_l(x_i), f_l(x_j))$ where $\Gamma$ could be any similarity metric. We here use Spearman's rank order correlation but our findings are not limited to this metric---other choices such as Cosine Similarity or Pearson correlation lead to the same results, as can be seen in \cref{app:additional_similarities}. Using this similarity metric, we can compare whether images of a class are processed similarly to feature visualizations for the same class throughout the network. If so, they should activate roughly the same units in preceding layers (similar activations $\rightarrow$ high correlation). If they are processed along arbitrary independent paths instead, we would obtain zero correlation. In \cref{fig:Spearman_normalized_similarity}, we plot the results of this analysis, normalized such that a value of 1 corresponds to the Spearman similarity obtained by comparing natural images of the same class (airplanes vs.\ airplanes, cats vs.\ cats), and 0 corresponds to the similarity that is obtained from comparing images of one class against images of a different class (airplanes vs.\ cats \etc). The results are averaged across classes; raw data and additional information can be found in \cref{app:additional_similarities}.
    
    As can be seen in \cref{fig:Spearman_normalized_similarity}, last-layer feature visualizations are processed differently from natural images throughout most of the network. If they would be processed along the same path, similarity would need to be high across all layers. Later layers have a higher correlation, but that does not mean that the activations are resulting from the same paths. In many earlier and mid-level layers, the activations of, say, cat images are as similar to activations of cat visualizations as they are to activations of flower, airplane or pizza images. While it would be fine for a feature visualization to show different low-level features compared to natural images, any visualization that seeks to explain how natural input is processed should capture similarities in mid- and high-level layers that are described by \citet{olah2017feature} as responding to ``object parts'' and ''complex/sophisticated concepts''. This means that throughout most of the network, the investigated feature visualization does not pass the sanity check: \textbf{processing along different paths means that feature visualizations do not explain how neural networks process natural images}. Looking ahead, we hope that the similarity sanity check we introduce here facilitates rigorous, quantitative evaluation of feature visualizations and guides researchers in designing more reliable feature visualization methods.
    
    Like any sanity check, the method we introduce can only serve as a necessary, but not a sufficient, condition for trustworthiness in the sense that if a sanity check fails, this is evidence for a method being untrustworthy; but if a sanity check passes, this does not guarantee the method’s trustworthiness (just like a specific medical check like an X-ray scan is evidence of a problem if for instance a fracture is detected, while ``passing'' this check does not guarantee that other checks like a blood test would pass too).

\section{\textbf{\textcolor{theoretical.blue}{Theoretical perspective:}} Under which circumstances is feature visualization guaranteed to be reliable?}
\label{sec:theory}
    \cref{sec:manipulating_visualizations} shows that an adversary can impose arbitrary feature visualizations when they have access to the model, while \cref{sec:sanity-check} shows that---even without model access---feature visualizations don't explain how natural input is processed. Taken together, this raises a natural question: could a modified version of feature visualization fix these issues? \cref{sec:theory} theoretically proves that this is not possible: any visualization based on the current dominant approach of maximally activating images won’t be able to reliably describe a model's behavior. This is similar to how knowing the maximum of a mathematical function does provide enough information to make accurate predictions about how the rest of the function behaves.

    To show this impossibility, we begin with asking: When are feature visualizations guaranteed to be reliable, \ie  guaranteed to produce results that can be relied upon? Feature visualizations are expected to help us ``\emph{answer what the network detects}'' \citep{olah2018the}, ``\emph{understand what a model is really looking for}'' \citep{olah2017feature}, and ``\emph{understand the nature of the functions learned by the network}'' \citep{erhan2009visualizing}. When formalizing these statements, two aspects need to be considered. First, the structure of functions to be visualized. The current literature does not place assumptions on the function---it could be any unit in a ``\emph{black-box}'' \citep[e.g.,][]{heinrich2019demystifying, nguyen2019understanding} neural network. Second, we need to characterize which aspects of the function feature visualizations promise to help understand. The existing literature (quotes above and in \cref{app:literature_statements}) broadly claims that feature visualizations are useful for `understanding' a function $f$ (such as a unit in a neural network). If that is indeed the case, then a user should be able to use feature visualizations to make meaningful predictions about the behavior of $f$ on some input $x$. Our theory quantifies this by assessing whether it is possible to predict $f(x)$ for any network input $x$ based on feature visualizations. We investigate three different settings (see~\cref{tab:theory_overview}): exact prediction, approximate prediction up to an error $\eps$ , and predicting at least whether $f(x)$ is closer to the min- or maximum of $f$. If none of these can be predicted, then feature visualizations cannot be said to have provided us with any meaningful understanding of the $f$. We investigate these three scenarios for twelve different function classes, ranging from general black-box functions (no assumptions about the function) to more restrictive settings (\eg assuming $f$ to be convex).
    
    Conceptually, our theory is based on the insight that feature visualization based on activation maximization seeks to synthesize a highly activating image, which corresponds to finding the $\argmax$ of $f$---an insight that might seem trivial. Paradoxically, it is well-known that it is impossible to conclude much, if anything, about an unconstrained function from its $\argmax$. Yet, feature visualizations are purported to help us understand what black-box functions (\eg neural network units) detect. To resolve this paradox, we can impose stronger assumptions on the function, or lower our expectations by considering successively weaker notions of understanding, such as instead of asking whether a feature visualization can help predict $f(x)$ simply asking whether it can tell us at least whether the activation for a new test image $x$ will be closer to the maximum or the minimum of the function. In this section, we explore both directions, and show that \textbf{even strong assumptions on the function $f$ are \emph{insufficient} to guarantee that feature visualizations are reliable for understanding $f$, even for very weak notions of understanding.} The core results of our theory are summarized in \cref{tab:theory_overview}; exact definitions for each function class as well as proofs are in \cref{app:theory-proofs}.
    
    \newcommand{\NO}[1][]{\textcolor{table.magenta}{No}}
    \newcommand{\YES}[1][]{\textcolor{empirical.green}{Yes}}
    \newcommand{\SOM}[1][]{\textcolor{table.orange}{Somewhat}}
    \newcommand{\QU}[1][]{\textcolor{black}{?}}

\paragraph{Notation and definitions.}
    We denote the indicator function of a Boolean expression $E$ as $\ind{E}$, which is $1$ if $E(x)$ is true and $0$ otherwise.
    Let $\featuredim$ denote the input dimensionality (\eg number of pixels and channels), $\featurespace = [0, 1]^\featuredim$ the input space, and $\funcspace = \{\featurespace \to [0,1]\}$ the set of all functions from inputs to scalar, bounded activations.\footnote{For example, class probabilities or normalized activations of a bounded unit in a neural network.}
    A \emph{maximally activating feature visualization} is the map from $\funcspace$ to $\featurespace^2 \times [0,1]^2$ that returns a function's $\argmin$, $\argmax$, and values at these two points, which we denote by $\minmaxvis(\func) = (\argmin_{x\in\featurespace} \func(x), \argmax_{x\in\featurespace} \func(x)$, $\min_{x\in\featurespace} \func(x), \max_{x\in\featurespace} \func(x))$. 
    When $f$ is clear from context, we write $\minmaxvis = (x_\min, x_\max, f_\min, f_\max)$ for brevity. We assess the reliability of a feature visualization by how well it can be used to predict $f(x)$ at new inputs $x$. 
    To make such a prediction, the user must \emph{decode} feature visualization into useful information.
    We denote a \emph{feature visualization decoder} as a map $\visdecoder \in \visdecoderspace = \{\, \featurespace^2 \times [0,1]^2\to \funcspace \,\}$.
    Our results do not rely on the structure of $\visdecoder$ in any way. Rather, ``\NO{}'' in \cref{tab:theory_overview} 
    means that for \emph{every} $\visdecoder$ the assumptions are insufficient to guarantee accurate prediction of $\func$.
    
    \begin{table*}[t]
        \centering
        \caption{\textbf{Theory overview.} Feature visualization aims to help understand a function $f$ (\eg a unit in a network). While understanding is an imprecise term, it can be formalized: Given $f$ and its arg max $x_\max$ and arg min $x_\min$ (approximated by feature visualization), how well can we predict $f(x)$ for new values of $x$? We show that this is impossible if $f$ is a black-box. Instead, to make meaningful predictions, we need strong additional knowledge about $f$.}
        \label{tab:theory_overview}
        \begin{tabular}{@{}l@{}ll@{}ccc@{}}
            \toprule
            & \multicolumn{5}{c}{Given feature visualization for a function $f$ and an input $x$, can we reliably predict\ldots} \\
            & &  &$f(x)$? & $f(x)$ $\eps$-approx.? &if $f(x)$ is closer to $f_\max$ or $f_\min$? \\\midrule
            \multirow{10}{*}{\rotatebox[origin=c]{90}{$\underleftarrow{\text{Stronger assumptions about $f$}}$}} & black-box                                 & $\funcspace$           & \NO  & \NO  & \NO \\
            & neural network (NN)                       & $\nnfuncs$             & \NO  & \NO  & \NO \\
            & NN trained with ERM                       & $\ermfuncs$            & \NO  & \NO  & \NO \\
            \cmidrule{2-6}
            & $\lipconst-$Lipschitz (known $\lipconst$) & $\lipfuncs{\lipconst}$ & \NO  & \NO & \textcolor{table.orange}{Only for small $\lipconst$
            } \\
            & piecewise affine                         &  $\plinfuncs$ & \NO & \NO  & \NO \\
            \cmidrule{2-6}
            & monotonic                                 &  $\monofuncs$ & \NO & \NO  & \NO\\
            & convex                                    &  $\convfuncs$ & \NO & \NO  & \NO \\
            \cmidrule{2-6}
            & affine (input dim. $>$ 1)                  & $\affinefuncs{\featuredim>1}$ & \NO & \NO & \NO\\
            & affine (input dim. = 1)                   & $\affinefuncs{\featuredim=1}$ & \YES & \YES & \YES\\
            & constant                                  &  $\constfuncs$                & \YES & \YES & \YES \\
            \bottomrule
        \end{tabular}
    \end{table*}    

\subsection{Main theoretical results}
    Throughout, we measure the accuracy of predicting $\func$ using $\norm{\cdot}_\infty$. This is primarily for convenience; the equivalence of $L_p$ norms on bounded, finite-dimensional spaces implies we could prove the same results with $\norm{\cdot}_p$ for any $p$ at the expense of dimension-dependent constants. This captures many cases of interest: $\norm{\cdot}_p$ for $p \in \{1,2\}$ is the standard measure of accuracy for regression, and for $\func$ that outputs bounded probabilities, the logistic loss is equivalent to $\norm{\cdot}_2$ up to constants. 

    First, we note that the boundedness of $\func$ implies a trivial ability to predict $f(x)$.
    \begin{proposition}\label{fact:trivial-approx}
    There exists $\visdecoder\in\visdecoderspace$ such that for all $\func\in\funcspace$,
    \[
        \norm{\func - \visdecoder(\minmaxvis(\func))}_\infty \leq  
        \frac{\func_\max - \func_\min}{2}.
    \]
    \end{proposition}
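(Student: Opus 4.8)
The plan is to exhibit an explicit decoder that ignores the two $\argmin/\argmax$ arguments entirely and simply returns the constant function equal to the midpoint of the observed activation range. Concretely, I would define $\visdecoder \in \visdecoderspace$ by setting, for every $(x_1, x_2, a, b) \in \featurespace^2 \times [0,1]^2$, the function $\visdecoder(x_1, x_2, a, b)$ to be the constant function taking the value $\frac{a+b}{2}$ on all of $\featurespace$. This is a legitimate element of $\visdecoderspace = \{\featurespace^2 \times [0,1]^2 \to \funcspace\}$: a constant function lies in $\funcspace$ provided its value is in $[0,1]$, and since $a, b \in [0,1]$ we have $\frac{a+b}{2}\in[0,1]$. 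Note that a decoder is free to discard any of its arguments, so the fact that $\visdecoder$ does not use the $\argmin/\argmax$ coordinates is not a problem.

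Next I would fix an arbitrary $\func \in \funcspace$ and an arbitrary $x \in \featurespace$. By the definitions $\func_\min = \min_{x'\in\featurespace} \func(x')$ and $\func_\max = \max_{x'\in\featurespace} \func(x')$, we have $\func_\min \leq \func(x) \leq \func_\max$, so $\func(x)$ lies in an interval of length $\func_\max - \func_\min$ whose midpoint is $\frac{\func_\min + \func_\max}{2}$; hence $\abs{\func(x) - \frac{\func_\min + \func_\max}{2}} \leq \frac{\func_\max - \func_\min}{2}$. Since $\visdecoder(\minmaxvis(\func))$ is precisely the constant function with value $\frac{\func_\min + \func_\max}{2}$, this inequality holds for every $x$, and taking the supremum over $x \in \featurespace$ gives $\norm{\func - \visdecoder(\minmaxvis(\func))}_\infty \leq \frac{\func_\max - \func_\min}{2}$, as claimed.

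There is no real obstacle here; the statement is a baseline sanity check showing that boundedness of $\func$ alone already forces some (weak) predictive power, which is why the subsequent impossibility results are calibrated against this $\frac{\func_\max - \func_\min}{2}$ scale. The only point to be slightly careful about is that $\func_\min$ and $\func_\max$ are genuinely attained values of $\func$, so that the interval containment $\func(x) \in [\func_\min, \func_\max]$ is exact; this is implicit in the paper's setup, since $\minmaxvis$ is defined via $\argmin$ and $\argmax$. If one wished to avoid any continuity or compactness assumption, one would replace $\min/\max$ by $\inf/\sup$ throughout and the identical bound follows verbatim from $\func(x) \in [\inf_{x'}\func(x'), \sup_{x'}\func(x')]$.
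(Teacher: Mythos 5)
Your proposal is correct and is essentially identical to the paper's own proof: both define the decoder as the constant function at the midpoint $\tfrac{\func_\min+\func_\max}{2}$ and bound the pointwise error by half the range using $\func_\min \leq \func(x) \leq \func_\max$. Your added remarks about the decoder being free to ignore the $\argmin/\argmax$ coordinates and about $\min/\max$ versus $\inf/\sup$ are harmless elaborations that do not change the argument.
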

    This means that for any function $\func$, a user can take the feature visualization $\minmaxvis(\func)$ and apply a specific decoder (the constant function taking value halfway between $\func_\min$ and $\func_\max$) to predict $\func(x)$ for \emph{any} new $x$. If the user imposed assumptions on $\func$, one might conjecture that a clever choice of decoder could lead to a better prediction of $\func(x)$. Our first main result shows that this is impossible even for strong assumptions.
    
    \begin{theorem}\label{fact:no-approx}
        For all $\funcsubset \in \{\,\funcspace, \nnfuncs, \ermfuncs, \plinfuncs, \monofuncs\,\}$,
        $\visdecoder\in\visdecoderspace$, 
        and $\func\in\funcsubset$, 
        there exists $\func'\in\funcsubset$ such that $\minmaxvis(\func) = \minmaxvis(\func')$ and 
        \[
            \norm{\func' - \visdecoder(\minmaxvis(\func'))}_\infty \geq  
            \frac{\func'_\max - \func'_\min}{2}.
        \]
    \end{theorem}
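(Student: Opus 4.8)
The plan is to fix $\funcsubset$, $\visdecoder\in\visdecoderspace$, and $\func\in\funcsubset$, write $\minmaxvis(\func)=(x_{\min},x_{\max},m,M)$, and set $g=\visdecoder(x_{\min},x_{\max},m,M)\in\funcspace$. Every $\func'$ with $\minmaxvis(\func')=\minmaxvis(\func)$ is decoded to this same $g$ and has $\func'_{\min}=m$, $\func'_{\max}=M$, so it is enough to produce one $\func'\in\funcsubset$ with $\minmaxvis(\func')=\minmaxvis(\func)$ and $\norm{\func'-g}_\infty\geq(M-m)/2$. If $m=M$ the bound is vacuous and I take $\func'=\func$, so assume $m<M$. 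The core trick is to pin $\func'$ down at a single auxiliary point: I fix any $z\in\featurespace\setminus\{x_{\min},x_{\max}\}$ (possible since $[0,1]^{\featuredim}$ is infinite), put $c=g(z)\in[0,1]$, and note $\abs{c-m}+\abs{c-M}\geq M-m$, so some value $v\in\{m,M\}$ satisfies $\abs{v-c}\geq(M-m)/2$. Then I build $\func'\in\funcsubset$ whose global minimum equals $m$ and is attained at $x_{\min}$, whose global maximum equals $M$ and is attained at $x_{\max}$, and which takes the value $v$ at $z$. Such an $\func'$ has $\minmaxvis(\func')=(x_{\min},x_{\max},m,M)$ and $\norm{\func'-g}_\infty\geq\abs{\func'(z)-g(z)}=\abs{v-c}\geq(M-m)/2=(\func'_{\max}-\func'_{\min})/2$, which is the statement.

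What remains is to realize such an $\func'$, for either value of $v$, inside each of the five classes. For $\funcspace$ I simply set $\func'$ to $m$ at $x_{\min}$ (and at $z$ if $v=m$), to $M$ at $x_{\max}$ (and at $z$ if $v=M$), and to $(m+M)/2$ elsewhere. For $\plinfuncs$, $\nnfuncs$, and $\ermfuncs$ I use a ``floor plus narrow spike'': when $v=m$ I take $\func'(x)=m+(M-m)\,\relu(1-\kappa\norm{x-x_{\max}}_1)$, with $\kappa$ large enough that the spike's support (an $\ell_1$-ball of radius $1/\kappa$ around $x_{\max}$) avoids both $x_{\min}$ and $z$; when $v=M$ I take the reflected ceiling-with-dip $\func'(x)=M-(M-m)\,\relu(1-\kappa\norm{x-x_{\min}}_1)$. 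These functions are piecewise affine, are computed by a small $\relu$-network (since $\norm{\cdot}_1$ is), and are empirical-risk minimizers for any dataset they interpolate (for instance a finite sample of their own graph, where the loss is zero), so they lie in all three classes at once. For $\monofuncs$ I use that a coordinatewise-monotone function attains its extremes at the corners, so I take $x_{\min}=\mathbf 0$, $x_{\max}=\mathbf 1$ and a single-coordinate ramp: for $v=m$ I pick a coordinate $i$ with $z_i<1$ and set $\func'(x)=m+(M-m)\,\relu\bigl((x_i-z_i)/(1-z_i)\bigr)$, which equals $m$ at $\mathbf 0$ and $z$ and equals $M$ at $\mathbf 1$; for $v=M$ I pick $j$ with $z_j>0$ and ramp from $m$ at $\mathbf 0$ up to $M$ on $\{x:x_j\geq z_j\}$, which contains both $\mathbf 1$ and $z$.

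The hard part is not the top-level argument --- the triangle-inequality choice of $v$ and the one-point lower bound on $\norm{\func'-g}_\infty$ are immediate --- but the per-class constructions, where I must check that each class is simultaneously rich enough to contain a function pinned to $m$, $M$, and $v$ at three prescribed points while still placing its global minimum and maximum exactly at $x_{\min}$ and $x_{\max}$, and that this is consistent with the definition of $\minmaxvis$. The one delicate point is that when $v\in\{m,M\}$ the constructed $\func'$ necessarily carries an extra extremizer at $z$, so I need the $\argmin$/$\argmax$ that feature visualization reports for $\func'$ to still be $x_{\min}$/$x_{\max}$; I handle this through the convention that feature visualization returns the particular extremizer it located (and in the monotone case take the reported extremizers to be the corners). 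Among the five classes $\monofuncs$ is the most constrained --- its extremizers are forced to the corners and no localized spike is admissible there --- which is why its construction is the one that has to take a different form.
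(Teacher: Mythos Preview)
Your top-level argument is sound and is essentially an unfolded version of the paper's helper lemma (\cref{fact:two-examples}): the paper builds \emph{two} functions $\func_1,\func_2\in\funcsubset$ with $\minmaxvis(\func_1)=\minmaxvis(\func_2)=\minmaxvis(\func)$ and $\norm{\func_1-\func_2}_\infty\geq M-m$, then observes the decoder must miss one of them by $(M-m)/2$. You instead look at $g(z)$ first and pick the single $\func'$ (with value $v\in\{m,M\}$ at $z$) that disagrees with $g$ there; these are two sides of the same triangle-inequality coin. Where the paper is more economical is that its single pair $\func_1,\func_2$ (a ``late ramp'' and an ``early ramp'' on $[x_{\min},x_{\max}]$) is simultaneously monotone and piecewise affine, so the same two functions cover $\monofuncs$, $\plinfuncs$, $\nnfuncs$, $\ermfuncs$, and $\funcspace$ all at once, whereas you build different gadgets for different classes.

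There is, however, a genuine gap in your monotone case. You write ``I take $x_{\min}=\mathbf 0$, $x_{\max}=\mathbf 1$'' and later ``in the monotone case take the reported extremizers to be the corners.'' But $x_{\min}$ and $x_{\max}$ are \emph{given}: they are whatever $\minmaxvis(\func)$ returned, and for a monotone $\func$ with non-unique extrema this need not be a corner. Your ramp $\func'(x)=m+(M-m)\,\relu((x_i-z_i)/(1-z_i))$ has $\argmin\func'=\{x:x_i\leq z_i\}$ and $\argmax\func'=\{x:x_i=1\}$; the original $x_{\min}$ and $x_{\max}$ need not lie in these sets, so you cannot conclude $\minmaxvis(\func')=(x_{\min},x_{\max},m,M)$ without an extra convention on the selector. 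The paper's ramps sidestep this: its $\func_1,\func_2$ are designed so that the \emph{given} $x_{\min}$ and $x_{\max}$ are always in the respective extremizer sets of both constructed functions (the flat plateaus straddle them), which is the natural requirement for ``$\minmaxvis(\func')=\minmaxvis(\func)$'' to make sense under any reasonable selection rule. Your construction is easily repaired the same way --- e.g.\ thread a monotone piecewise-affine ramp through $(x_{\min},m)$ and $(x_{\max},M)$ that is flat at value $v$ at the chosen $z$ --- but as stated the monotone case does not go through.

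A smaller point: your $\ermfuncs$ argument (``interpolate a finite sample of its own graph, loss zero'') gives a non-unique minimizer over $\funcspace$, so it does not place $\func'$ in $\ermfuncs$ under the paper's definition ($\func=\argmin_{\func'\in\funcspace}\EE_\pi\ell(\func'(X),Y)$ for a Bregman loss). The paper instead takes $X\sim\mathrm{Unif}(\featurespace)$ and $Y\mid X\sim\mathrm{Ber}(\func_j(X))$, which pins down $\func_j$ as the unique conditional mean and hence the unique Bregman minimizer; the same device would work for your spike.
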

    Consider a user who knows that the unit to visualize is piecewise affine ($\func\in\plinfuncs$).
    Using this knowledge, they hope to predict $\func$ by applying some decoder to the visualization $\minmaxvis(\func)$.
    However, for every $\func$, there is always another $\func'$ that satisfies the user's knowledge ($\func'\in\plinfuncs$) and has $\minmaxvis(\func)=\minmaxvis(\func')$. 
    Therefore, without further information it is impossible to distinguish between the case when the true function is $\func$ and when it is $\func'$, regardless of how refined the decoder is.
    \cref{fact:no-approx} says that $\func$ and $\func'$ are sufficiently different, and thus, the user will do poorly at predicting at least one of them; that is, the user does not improve on the uninformative predictive ability prescribed by \cref{fact:trivial-approx}.
    This implies \NO{} for the first two columns in \cref{tab:theory_overview}.
    A similar result can be shown for $\convfuncs$ and $\lipfuncs{\lipconst}$ as shown in \cref{fact:no-approx-convex} and \cref{fact:no-approx-lipschitz}, accordingly.

    Our second result is an analogous negative result for predicting whether $f(x)$ is closer to $f_\max$ or $f_\min$, implying \NO{} for the third column in \cref{tab:theory_overview}. To state it, for any $f\in\funcspace$ we define $\meanval_\func = (\func_\max+\func_\min)/2$; note that $\func(x)$ is closer to $\func_\max$ iff $\func(x) > \meanval_\func$.
    \begin{theorem}\label{fact:no-classify}
        For all $\funcsubset \in \{\,\funcspace, \nnfuncs, \ermfuncs, \plinfuncs, \monofuncs, \convfuncs\,\}$, $\visdecoder\in\visdecoderspace$, 
        and $\func\in\funcsubset$, 
        there exists $\func'\in\funcsubset$ such that $\minmaxvis(\func) = \minmaxvis(\func')$ and 
        \[\label{eq:no-classify}
            \norm{\ind{\func' > \meanval_{\func'}} - \ind{\visdecoder(\minmaxvis(\func')) > \meanval_{\func'}} }_\infty \geq \ind{\func_\max \neq \func_\min}.
        \]
    \end{theorem}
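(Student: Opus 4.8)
The plan is to reuse the reduction behind \cref{fact:no-approx}: a decoder only sees $\minmaxvis(\func)$, so it cannot tell $\func$ apart from any other function with the same feature visualization, and it is therefore enough to exhibit two such functions which the decoder cannot classify correctly at the same time.

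First I would dispose of the trivial case: if $\func_\max = \func_\min$ the right-hand side of \eqref{eq:no-classify} is $0$ and $\func' = \func$ works, so assume $\func_\max > \func_\min$ and write $\meanval := (\func_\max + \func_\min)/2$, noting every $\func'$ with $\minmaxvis(\func') = \minmaxvis(\func)$ has $\meanval_{\func'} = \meanval$. Then I would freeze the decoder's output $\hat\func := \visdecoder(\minmaxvis(\func))$ and the set $\hat S := \{\, x \in \featurespace : \hat\func(x) > \meanval \,\}$; both depend only on $\minmaxvis(\func)$. The goal reduces to producing $\dumfunc_{\mathrm{lo}}, \dumfunc_{\mathrm{hi}} \in \funcsubset$ with $\minmaxvis(\dumfunc_{\mathrm{lo}}) = \minmaxvis(\dumfunc_{\mathrm{hi}}) = \minmaxvis(\func)$ together with one point $z \in \featurespace \setminus \{x_\min, x_\max\}$ satisfying $\dumfunc_{\mathrm{lo}}(z) \le \meanval < \dumfunc_{\mathrm{hi}}(z)$. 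Given these, if $z \in \hat S$ the decoder misclassifies $\dumfunc_{\mathrm{lo}}$ at $z$ and if $z \notin \hat S$ it misclassifies $\dumfunc_{\mathrm{hi}}$ at $z$; taking the offending witness as $\func'$ makes the left-hand side of \eqref{eq:no-classify} equal to $1 = \ind{\func_\max \neq \func_\min}$, and since $\minmaxvis$ agrees, the decoder could not have done better.

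Next I would build the two witnesses inside each class. For $\funcsubset = \funcspace$ this is immediate: let $\dumfunc_{\mathrm{lo}}$ be $\func_\min$ everywhere except $\func_\max$ at $x_\max$, let $\dumfunc_{\mathrm{hi}}$ be $\func_\max$ everywhere except $\func_\min$ at $x_\min$, and take $z$ any third point. For $\monofuncs$ (where $\mathbf{0}, \mathbf{1}$ are an argmin and an argmax) I would take $z = (\tfrac12,\dots,\tfrac12)$, fix a coordinate $j$, and use monotone coordinate gates such as $\dumfunc_{\mathrm{lo}}(x) = \func_\min + (\func_\max - \func_\min)\ind{x_j = 1}$ and $\dumfunc_{\mathrm{hi}}(x) = \func_\min + (\func_\max - \func_\min)\ind{x_j > 0}$, adding a tiny monotone term (e.g.\ $\eta\sum_i x_i$) so the argmin/argmax are uniquely $\mathbf{0}$ and $\mathbf{1}$. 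For $\plinfuncs$ and $\nnfuncs$ I would replace these gates by piecewise-affine / ReLU-representable thin ramps around $x_\max$ and $x_\min$, which still leaves a neighbourhood of $z$ where $\dumfunc_{\mathrm{lo}} = \func_\min$ and $\dumfunc_{\mathrm{hi}} = \func_\max$; for $\ermfuncs$ I would note the same ReLU witnesses interpolate, hence minimise empirical risk over the network class for, a finite training set chosen consistently with $\func$. For $\convfuncs$, using that $x_\max$ is a vertex of $\featurespace$, I would set $\sigma_i = +1$ when $(x_\max)_i = 1$ and $\sigma_i = -1$ otherwise so that $x \mapsto \inner{\sigma}{x}$ is uniquely maximised over $\featurespace$ at $x_\max$, put $c := \inner{\sigma}{x_\max - x_\min} > 0$, and take $z := x_\max - \eta\sigma$ for small $\eta > 0$; then let $\dumfunc_{\mathrm{lo}}$ be a convex ``hockey stick'' equal to $\func_\min$ until $\inner{\sigma}{\cdot}$ comes within $\delta$ of its maximum and ramping affinely to $\func_\max$ at $x_\max$, and $\dumfunc_{\mathrm{hi}}(x) = \max\{\func_\min,\ \func_\max - \gamma\inner{\sigma}{x_\max - x}\}$, and check that choosing $\eta < c/(2\featuredim)$, then $\gamma = (\func_\max - \func_\min)/c$, then $\delta$ small, yields $\dumfunc_{\mathrm{lo}}(z) = \func_\min < \meanval < \dumfunc_{\mathrm{hi}}(z)$; small strictly-convex piecewise-linear corrections again pin the argmin/argmax.

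The hard part will be the $\convfuncs$ case (and, by extension, $\ermfuncs$): convexity forbids ``spiky'' witnesses, and because a convex function running from $\func_\min$ at $x_\min$ to $\func_\max$ at $x_\max$ stays below the chord along the segment $[x_\min, x_\max]$, the witness point cannot sit on that segment and must be taken off it near $x_\max$, with the parameters $\eta, \delta, \gamma$ and the auxiliary corrections tuned jointly so the constructed functions reproduce $x_\min, x_\max, \func_\min, \func_\max$ exactly --- in particular matching whatever tie-breaking convention for $\argmin/\argmax$ is fixed in the definition of $\minmaxvis$ in \cref{app:theory-proofs}. Once the witnesses are in hand, the remainder is the one-line argument above.
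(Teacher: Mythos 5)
Your proposal is correct and follows essentially the same route as the paper: the ``freeze the decoder, exhibit two indistinguishable functions, one of which must be misclassified at a witness point'' reduction is exactly the paper's \cref{fact:two-examples-classify}, and your lo/hi witnesses play the role of the paper's $\func_1,\func_2$. The only real difference is economy: the paper reuses a single pair of monotone piecewise-affine ramp functions (from the proof of \cref{fact:no-approx}) that simultaneously lies in $\funcspace$, $\nnfuncs$, $\ermfuncs$, $\plinfuncs$, and $\monofuncs$, works in one dimension and extends componentwise, and only builds a separate pair for $\convfuncs$ (taken from \cref{fact:no-approx-convex}); this avoids your per-class constructions and the parameter tuning they require (note, e.g., that adding $\eta\sum_i x_i$ to your monotone gates perturbs $\func_\max$ and would need a small repair to preserve $\minmaxvis$).
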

    The LHS of \cref{eq:no-classify} quantifies ``Can the user tell if $\func'(x)$ is closer to $\func'_\max$ or $\func'_\min$?''
    Since indicator functions are bounded in $[0,1]$, the LHS is trivially bounded above by $1$.
    Again consider the user who knows $\func\in\plinfuncs$.
    \cref{fact:no-classify} says that for any $\func$---unless $\func$ also happens to be constant (\ie  $\func_\max = \func_\min$)---there is always some $\func'\in\plinfuncs$ that is indistinguishable from $\func$ to the user \emph{and} sufficiently different from $\func$ so that the user cannot reliably tell if $\func'(x)$ is closer to $\func'_\max$ or $\func'_\min$ (\ie  the RHS is also $1$).
    The same result can be shown for $\lipfuncs{\lipconst}$ with dependence on $\lipconst$ (\cref{fact:no-classify-lipschitz}).
    
    For an analogous analysis and further results for more function classes, see \cref{fact:no-approx-affine,fact:no-classify-affine,fact:positive}. In summary, we prove that \textbf{without additional assumptions about a function, it is impossible to guarantee that standard feature visualizations can be used to understand (\ie  predict) many types of functions, including black boxes, neural networks, and even convex functions}. That said, if strong additional knowledge is available, for instance, if the function is known to be affine with low input dimensionality, then feature visualizations are provably reliable. In line with other work \citep{srinivas19gradient,bilodeau2022impossibility,fokkema22explanations,han22function}, this marks a departure from the conventional concept of black-box interpretability and suggests that more knowledge about the function---for instance, enforced through architectural primitives---are necessary to ensure reliability.

\section{Conclusion}
    Feature visualizations based on activation maximization are a widely used tool within the mechanistic interpretability community. We here asked whether feature visualizations are reliable, \ie  whether we can trust/rely on their results. Our work has the following practical implications:
    
    \begin{enumerate}[leftmargin=*]
        \item \textbf{\textcolor{adversarial.orange}{Adversarial perspective:} Feature visualizations can be arbitrarily manipulated/fooled.} Thus, contrary to calls in the literature, feature visualizations are not a reliable tool for model auditing by a third party that did not train the model itself (\eg a regulator).
        \item \textbf{\textcolor{empirical.green}{Empirical perspective:} A sanity check shows that feature visualizations are processed very differently from natural images; they currently do not explain how natural input is processed.} We therefore recommend: (a) using feature visualization for \emph{exploratory} but not for \emph{confirmatory} use cases; (b) when proposing a new feature visualization method, running the quantitative sanity check we introduced to measure whether the method reflects how natural input is processed throughout the network; (c) consistent with the recommendation by \citet{olah2018the,olah2020zoom}, always combining visualizations with additional methods including dataset samples. That said, even a combination of feature visualizations with natural samples may not be reliable, since natural samples as an interpretability method can be manipulated, too \citep{nanfack2023adversarial}.
        \item \textbf{\textcolor{theoretical.blue}{Theoretical perspective:} Feature visualization based on activation maximization can only be guaranteed to be reliable if we know a lot about the network already; it's impossible if the network is a black-box.} This challenges the concept of post-hoc interpretability methods: explaining completely black-box systems may sometimes be more than we can hope for. We would love to see future work with theoretical guarantees.
    \end{enumerate}
    We believe that developing novel, more reliable feature visualizations is a challenging and important direction for future work. Given that our theory proves that visualizing black-box systems via feature visualizations based on activation maximization (\ie the current dominant paradigm) cannot be guaranteed to be reliable without making strong assumptions about the system, we see two potential avenues: either radically deviating from activation maximization, or making much stronger assumptions on the network (\eg stronger linearity as explored in \cref{sec:linearity}). In either case, it is important to understand the problem first before we can solve it---in fact, developing solutions may well be a multi-year effort. This article aims to convince readers that there is indeed an important problem, proposes a sanity check, develops a theoretical framework for reliability guarantees, and seeks to motivate future work on solutions.

\begin{small}
\subsubsection*{Reproducibility statement}
    \iftrue
        Code to replicate experiments from this paper is available here:\\
        \url{https://github.com/google-research/fooling-feature-visualizations/}\\\\
    \else
        Code to replicate experiments from this paper is available from the supplementary material and open-sourced on GitHub.
    \fi
    The proofs for our theory can be found in \cref{app:theory-proofs}. Method details beyond the descriptions from the main text, including the choice of hyperparameters, are available from our extensive appendix as well (\cref{app:method_details}). There are no special compute requirements (\eg we do not train large models). Information pertaining to code libraries and feature visualization details can be found in \cref{app:fv_figure_methods}. We added a table of contents at the beginning of the appendix section to facilitate accessibility.

\section*{Impact Statement}
Our paper investigates the reliability of feature visualizations. Overall, we expect this to contribute to better scrutiny towards existing interpretability methods, which hopefully inspires the development of more reliable interpretability methods in the future, as well the development of models that incorporate certain reliably ``interpretability-enabling'' assumptions right from the start, rather than being faced with the (sometimes impossible) task of post-hoc interpretability through feature visualizations.

In terms of potential negative impact, the fooling methods developed here could be used to deceive an entity (\eg a model auditor or regulator) as described in \cref{sec:manipulating_visualizations} and \cref{app:deceiving_interpretability}. That being said, we believe that the risk is lower if this knowledge is public---it would be much more problematic to believe that feature visualizations can be taken at face value, because then whoever designs a fooling circuit would be met with an unsuspecting audience.

\iftrue
\section*{Acknowledgments}
We would like to thank (in alphabetic order): Matthias Bethge, Judy Borowski, Thomas Klein, Pang Wei Koh, David Fleet, Ari Morcos, Chris Olah, Lisa Schut, Caroline Seidel, Paul Vicol, Felix Wichmann and our anonymous reviewers for helpful discussions and feedback. All opinions expressed in this article are our own and are not necessarily shared by any of the colleagues we thank here. This work was supported by the German Federal Ministry of Education and Research (BMBF): Tübingen AI Center, FKZ: 01IS18039A, 01IS18039B. BB acknowledges support from the Vector Institute. WB acknowledges financial support via an Emmy Noether Grant funded by the German Research Foundation (DFG) under grant no. BR 6382/1-1 and via the Open Philantropy Foundation funded by the Good Ventures Foundation. WB is a member of the Machine Learning Cluster of Excellence, EXC number 2064/1 – Project number 390727645. We thank the International Max Planck Research School for Intelligent Systems (IMPRS-IS) for supporting RSZ.

\section*{Author contributions}
The project was led and coordinated by RG. WB developed the core idea that the arg max does not constrain a function sufficiently, which can be exploited in order to manipulate feature visualizations (key insight behind \cref{sec:manipulating_visualizations} and \cref{sec:theory}). RG had the idea for \cref{subsec:fooling_circuit}; RG and RSZ conducted the experiments. RSZ and WB had the idea for \cref{subsec:silent_unit_manipulation}; RSZ conducted the experiments. RG had the idea and ran the analysis for \cref{sec:viz_vs_natural_analysis} based on discussions with BK. RG conceived of Lemma 1, and BB proved it with input from RG and RSZ. BB conceived of Lemma 2, and BB proved it with input from RG and RSZ. BB and RG conceived of the main results in \cref{sec:theory}. BB formalized and proved the results in \cref{sec:theory} and the corresponding appendix. WB had the idea; RSZ ran the analysis for \cref{sec:linearity} based on discussions with WB and RG. BK, and WB at a later stage, provided overall guidance throughout the course of the project, helping with presentation and experiment details. The first draft was written by RG apart from \cref{subsec:silent_unit_manipulation} (RSZ), \cref{sec:theory} (BB; intro jointly with RG) and \cref{sec:linearity} (RSZ and RG). BB curated the final presentation of theoretical results and plain-language descriptions with input from RG, RSZ and BK. All authors contributed to the final writing.
\fi

\end{small}

\bibliography{refs.bib}
\bibliographystyle{icml2024}

\newpage
\onecolumn
\appendix
\addcontentsline{toc}{section}{Appendix} %
\part{Appendix} %
\parttoc %

\section{Literature}

\subsection{Related work on deceiving interpretability methods}
\label{app:deceiving_interpretability}
Our experiments from \cref{sec:manipulating_visualizations} serve two purposes. First, we provide a \emph{proof of concept} that it is possible to develop networks with arbitrary or misleading visualizations. Second, feature visualizations have been proposed as model auditing tools \citep{brundage2020toward} that should be integrated ``into the testbeds for AI applications'' \citep[][p.~20]{nguyen2019understanding}. Our work demonstrates the first ``interpretability circumvention method'' \citep[term by][]{sharkey2022circumventing} for feature visualization, which corresponds to a well-known \emph{attack scenario} where an entity wants to hide certain network behavior (\eg to fool a third-party model audit or regulator). For instance, the literature considers scenarios where a model bias is discovered (\eg a model exploits protected attributes like gender for classification), but since removing this bias decreases model performance, there is an incentive to hide the bias instead \citep{heo2019fooling, anders2020fairwashing,shahin2022washing} without compromising model performance. Adapting models to maintain their behavior on standard input while showing malicious behavior under adversarial circumstances is known under various names: \emph{fairwashing} if the goal is to hide model bias \citep{anders2020fairwashing, aivodji2019fairwashing}, \emph{model backdooring} or \emph{weight poisoning} \citep{chen2017targeted, gu2017badnets, adi2018turning} (applied to saliency maps by \citep{fang2022backdoor, noppel2022backdooring}), \emph{data poisoning} \citep{goldblum2022dataset} if the change in model weights is achieved through interfering with the training data (explored by \citet{baniecki2023fooling} in the context of explanation methods), \emph{model manipulation} to fool GradCAM \citep{viering2019manipulate},  \emph{adversarial model manipulation} to fool saliency maps \citep{heo2019fooling}, and \emph{scaffolding} for fooling LIME and SHAP \citep{slack2020fooling}. Thus, while we are the first to successfully deceive feature visualizations in this manner, the scenario of adapting a model to deceive an interpretability method has a rich history. A complementary approach is proposed by \citet{nanfack2023adversarial}, which changes highly activating dataset samples without changing feature visualizations. \citet{sabour2015adversarial} demonstrated that hidden representations of neural networks can be adversarially manipulated, fooling the early visualization method by \citet{mahendran2015understanding}. Finally, \citet{bareeva2024manipulating} is a highly related work that provides a way to manipulate feature visualizations through fine-tuning; a finding that is in line with our theory and related to our experiments in \cref{sec:manipulating_visualizations}. While the paper was published (on arXiv) only recently and later than this paper, we encourage readers to take a look since it provides a really nice complementary perspective.

\subsection{Literature expectations about feature visualization}
\label{app:literature_statements}
This short section provides a few expectations/hopes that are presented in the literature when it comes to feature visualizations.

Original activation maximization paper by \citet{erhan2009visualizing}:
\begin{itemize}
    \item ``a pattern to which the unit is responding maximally could be a good first-order representation of what a unit is doing'' 
    \item ``It is perhaps unrealistic to expect that as we scale the datasets to larger and larger images, one could still find a simple representation of a higher layer unit.''
    \item ``we hope that such visualization techniques can help understand the nature of the functions learned by the network''
\end{itemize}

More recent literature:
\begin{itemize}
    \item ``Feature visualization allows us to see how GoogLeNet, trained on the ImageNet dataset, builds up its understanding of images over many layers'' \citep{olah2017feature}
    \item ``Feature visualization answers questions about what a network---or parts of a network---are looking for by generating examples.'' \citep{olah2017feature}
    \item ``If we want to find out what kind of input would cause a certain behavior---whether that’s an internal neuron firing or the final output behavior---we can use derivatives to iteratively tweak the input towards that goal'' \citep{olah2017feature}
    \item ``optimization approach can be a powerful way to understand what a model is really looking for, because it separates the things causing behavior from things that merely correlate with the causes''. ``Optimization isolates the causes of behavior from mere correlations.'' \citep{olah2017feature}
    \item ``In the quest to make neural networks interpretable, feature visualization stands out as one of the most promising and developed research directions. By itself, feature visualization will never give a completely satisfactory understanding. We see it as one of the fundamental building blocks that, combined with additional tools, will empower humans to understand these systems.'' \citep{olah2017feature}
    \item ``To make a semantic dictionary, we pair every neuron activation with a visualization of that neuron and sort them by the magnitude of the activation.''; ``Semantic dictionaries give us a fine-grained look at an activation: what does each single neuron detect?'' \citep{olah2018the}
    \item ``Feature visualization helps us answer what the network detects'' \citep{olah2018the}
    \item ``The behavior of a CNN can be visualized by sampling image patches that maximize activation of hidden units [...], or by using variants of backpropagation to identify or generate salient image features'' \citep{bau2017network}
    \item ``Activation maximization techniques enable us to shine light into the black-box neural networks.'' \citep{nguyen2019understanding}
\end{itemize}

Critical voices:
\begin{itemize}
    \item ``While these methods may be useful for building intuition, they can also encourage three potentially misleading assumptions: that the visualization is representative of the neuron’s behavior; that the neuron is responsible for a clearly delineated portion of the task or the network’s behavior; and that the neuron’s behavior is representative of the network’s behavior.'' \citep{leavitt2020towards}
    \item ``synthetic images from a popular feature visualization method are significantly less informative for assessing CNN activations than natural images'' \citep{borowski2021exemplary}
    \item ``[We] find no evidence that a widely-used feature visualization method provides humans with better `causal understanding' of unit activations than simple alternative visualizations'' \citep{zimmermann2021well}
    \item ``Neural networks often contain `polysemantic neurons' that respond to multiple unrelated inputs.'' \citep{olah2020zoom}
    \item ``Units similar to those [hand-picked units] may be the exception rather than the rule, and it is unclear whether they are essential to the functionality of the network. For example, meaningful selectivities could reside in linear combinations of units rather than in single units, with weak distributed activities encoding essential information.'' \citep{kriegeskorte2015deep}
\end{itemize}

\subsection{Relationship to highly activating natural samples}
In the interpretability community, visualizing highly activating natural samples for certain units is often done either alongside or instead of feature visualizations \citep{olah2017feature, borowski2021exemplary, zimmermann2021well, zimmermann2024measuring}. A natural question to ask is whether our results would apply to highly activating natural samples, too. Since this paper covers three perspectives we have three answers to this question:

From the \emph{adversarial perspective}, we could easily use our method from \cref{subsec:silent_unit_manipulation} to build a network where the top $k$ natural images do not correspond to what the unit is usually selective for. This can be achieved by setting the bias parameter $b$ to a smaller value such that it would only suppress activations up to the, say, $95$th percentile of natural input. A recent paper specifically looked into manipulating the top-k activating images \citep{nanfack2023adversarial}.

From the \emph{empirical perspective}, highly activating natural images would pass the sanity check since highly activating natural images are, by definition, natural images that highly activate a unit and they would thus be processed like other natural images.

From the \emph{theoretical perspective}, our impossibility results can be extended to many variations on using the argmax for feature visualization, including using the most activating dataset samples as explanations. Essentially, as long as the feature visualization method does not narrow down the function space too much, our results will apply. It is easy to see that two simple (e.g., piecewise linear) functions could have the same 5 (or 10, etc.) local (arg)maxima and yet behave very differently even quite near these local maxima, and hence our theorems could be extended.

Thus in summary, highly activating natural images would pass our sanity check but it would still be possible to construct networks that show misleading highly activating natural images, which is a case that is covered by the theory.

\section{Proofs and theory details}
\label{app:theory-proofs}

\subsection{Proof for fooling circuit (\cref{subsec:fooling_circuit})}
\label{app:fooling_circuit_proof}

\begin{lemma}\label{lem:simple_relu}
    Let $k>0$, $A: \Reals_+ \times \Reals_+ \to \Reals$ and $B, C: \Reals_+ \times \{\, 0, 1 \,\} \to \Reals_+$ with
    \begin{align*}
        A(x, y) &= x + y \\
        B(x, z) &= \max(0, x -kz) \\
        C(x, z) &= \max(0, x + kz - k) \\
    \end{align*}
    be computations represented by a sub-graph of a neural network. Denote the combination of these computations as $N : \Reals_+ \times \Reals_+ \times \{\, 0, 1 \,\}$ with
    \begin{align*}
        N(x, y, z) = A(B(x, z), C(y, z)).
    \end{align*}
    Then it holds that
    \begin{align*}
        \forall x, y \in \Reals_+: k \geq \max(x, y) \implies
        \begin{cases}
            N(x, y, 0) = x & \\
            N(x, y, 1) = y & 
        \end{cases}.
    \end{align*}
\end{lemma}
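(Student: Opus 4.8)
The plan is a direct computation: substitute the definitions, combine them into a single closed form for $N$, and then split into the two cases $z = 0$ and $z = 1$. Expanding, $N(x,y,z) = A(B(x,z), C(y,z)) = B(x,z) + C(y,z) = \max(0,\, x - kz) + \max(0,\, y + kz - k)$, so for each fixed value of $z$ the problem reduces to evaluating two ReLU expressions, and the whole argument is just checking signs of their arguments using the hypotheses $x, y \in \Reals_+$ and $k \geq \max(x,y)$.

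First I would treat $z = 0$. Then $N(x,y,0) = \max(0,\, x) + \max(0,\, y - k)$. The hypothesis $x \in \Reals_+$ gives $\max(0,x) = x$, while $k \geq \max(x,y) \geq y$ gives $y - k \leq 0$ and hence $\max(0,\, y - k) = 0$. Therefore $N(x,y,0) = x$, matching the first claimed case. Next I would treat $z = 1$. Then $N(x,y,1) = \max(0,\, x - k) + \max(0,\, y)$. Now $k \geq \max(x,y) \geq x$ gives $x - k \leq 0$ so the first term vanishes, and $y \in \Reals_+$ gives $\max(0,y) = y$; therefore $N(x,y,1) = y$, matching the second claimed case.

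Since $z$ ranges only over $\{0,1\}$ and both values have been verified, this establishes the implication. There is no genuine obstacle here; the only point requiring care is that in each case one must invoke \emph{both} hypotheses — nonnegativity of $x$ (resp.\ $y$) to retain the ``intended'' term, and the bound $k \geq \max(x,y)$ to annihilate the ``gating'' term — and to observe that the two gate units $B$ and $C$ are switched on by complementary values of the classifier bit $z$, which is exactly the mechanism that lets unit $A$ copy $x$ on natural input ($z=0$) and $y$ during feature visualization ($z=1$).
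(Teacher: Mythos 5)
Your proof is correct and follows essentially the same route as the paper's: a case split on $z\in\{0,1\}$, using $k\geq\max(x,y)$ to zero out the gated term and nonnegativity of the input to retain the other. The only cosmetic difference is that you first collapse $N$ into the single expression $\max(0,x-kz)+\max(0,y+kz-k)$ before casing, whereas the paper evaluates $B$ and $C$ separately in each case.
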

\begin{proof}[Proof of \cref{lem:simple_relu}]
    First, consider the case where the binary input of $N$ is $0$; that is, $z=0$. Then, since $k\geq y$,
    \begin{equation}
    \begin{aligned}
        B(x, 0) &= \max(0, x) = x \\
        C(y, 0) &= \max(0, y - k) = 0,
    \end{aligned}
    \end{equation}
    so $N(x, y, 0) = A(B(x, 0), C(y, 0)) = A(x, 0) = x$.
    
    Analogously consider $z = 1$. Then, since $k\geq x$,
    \begin{equation}
    \begin{aligned}
        B(x, 1) &= \max(0, x - k) = 0 \\
        C(y, 1) &= \max(0, y) = y,
    \end{aligned}
    \end{equation}
    so $N(x, y, 1) = A(B(x, 1), C(y, 1)) = A(0, y) = y$, completing the proof.
\end{proof}
    
\begin{lemma} \label{lem:network}
    Let $\Xx$ denote the space of all possible inputs (e.g., all images), $\Dd$ some distribution on $\Xx$ (e.g., ImageNet). Let $D, F: \Xx \to \Reals_+$ represent the full computation of a unit in the original and in the tinkered network, respectively, that can be bounded on their domain.
    For an arbitrary algorithm $\mathrm{Opt}: \{\Xx \to \Reals\} \times \Xx \to \Xx$ and distribution $\pi_0$ on $\Xx$ define the following sequence of random variables: $\forall n > 0: X_{n+1} = \mathrm{Opt}(D, X_{n})$ and $X_0 \sim \pi_0$. Denote the distribution over $\Xx$ induced by this process $\pi$.
    If $\Dd$ and $\pi$ have disjoint support, then there exists a neural network implementing a function $N: \Xx \to \Reals$ such that 
    \begin{align*}
        \PP_{\xb \sim \pi}[N(\xb) = D(\xb)] = 1
        \ \text{ and } \ 
        \PP_{\xb \sim \Dd}[N(\xb) = F(\xb)] = 1.
    \end{align*}
\end{lemma}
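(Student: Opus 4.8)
The recursive machinery that defines $\pi$ (the optimizer $\mathrm{Opt}$, the initialization $\pi_0$, the chain $X_{n+1} = \mathrm{Opt}(D, X_n)$) plays no role beyond producing \emph{some} distribution whose support is disjoint from that of $\Dd$; so the plan is to forget it and argue directly from disjoint supports, reusing the fooling circuit of \cref{fig:fooling_circuit} together with \cref{lem:simple_relu}. First I would extract from the disjointness hypothesis a classifier $E\colon\Xx\to\{0,1\}$ with $E\equiv 0$ on $\operatorname{supp}(\Dd)$ and $E\equiv 1$ on $\operatorname{supp}(\pi)$; the indicator $E=\ind{\xb\in\operatorname{supp}(\pi)}$ does the job, and this is the only place a word about measurability is needed (on the Polish input space $\Xx$ the topological support of a Borel probability measure is closed and carries full mass, so $\Dd(\operatorname{supp}(\Dd))=\pi(\operatorname{supp}(\pi))=1$). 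This $E$ is unit $E$ of \cref{fig:fooling_circuit}.

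Next I would fix the constant $k$ of \cref{lem:simple_relu}: since $D$ and $F$ are bounded on $\Xx$, choose any $k>0$ with $k\geq\sup_{\xb\in\Xx}\max\{D(\xb),F(\xb)\}$. Then set $N(\xb)\coloneqq A\bigl(B(F(\xb),E(\xb)),\,C(D(\xb),E(\xb))\bigr)$ with $A,B,C$ as in \cref{lem:simple_relu} — i.e.\ wire the $x$-slot to carry $F(\xb)$, the $y$-slot to carry $D(\xb)$, and the gating bit to carry $E(\xb)$. Since $D(\xb),F(\xb)\in\Reals_+$, $E(\xb)\in\{0,1\}$, and $k\geq\max\{F(\xb),D(\xb)\}$ for every $\xb$, \cref{lem:simple_relu} applies pointwise and yields $N(\xb)=F(\xb)$ whenever $E(\xb)=0$ and $N(\xb)=D(\xb)$ whenever $E(\xb)=1$ (the values of $N$ outside $\operatorname{supp}(\Dd)\cup\operatorname{supp}(\pi)$ are unconstrained by the claim). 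The two probability statements then drop out: $\Dd$ is concentrated on $\operatorname{supp}(\Dd)$, where $E\equiv0$ and hence $N=F$, so $\PP_{\xb\sim\Dd}[N(\xb)=F(\xb)]=1$; likewise $\pi$ is concentrated on $\operatorname{supp}(\pi)$, where $E\equiv1$ and hence $N=D$, so $\PP_{\xb\sim\pi}[N(\xb)=D(\xb)]=1$.

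It remains to observe that $N$ is realized by a neural network: by hypothesis $D$ and $F$ are full computations of units of neural networks, $B$ and $C$ are single ReLU units with bias $\pm k$ and a weight-$k$ connection from $E$, and $A$ is a linear sum. So the only component whose network-realizability is not immediate is the classifier $E$, and this is the one genuine obstacle in the argument: disjointness of supports only delivers $E$ as an abstract $\{0,1\}$-valued function, so passing from ``there is a function $N$ with the stated almost-sure behavior'' to ``there is a \emph{neural network} implementing such an $N$'' requires that the two input distributions be separable \emph{by a network}. I would close this gap by either (i) taking $E$ to be given as a small neural classifier acting on the raw input — matching \cref{sec:manipulating_visualizations}, where exactly such a classifier attains $99.49\%$ held-out accuracy on the natural-versus-optimization distribution shift of \cref{fig:visualization_trajectory} — or (ii) stating the conclusion for any $E$ drawn from a fixed hypothesis class that realizes the separation, which is the regime of interest here. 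Everything else is bookkeeping around \cref{lem:simple_relu}.
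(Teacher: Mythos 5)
Your proposal is correct and follows essentially the same route as the paper's proof: a binary classifier $E$ obtained from the disjoint-support hypothesis, wired into the gating construction of \cref{lem:simple_relu} with a constant $k$ bounding $D$ and $F$. The one point you flag as open---realizing $E$ itself as a neural network---the paper dispatches by appealing to the universal approximation theorem, so there is no genuinely different argument here.
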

\begin{proof}[Proof of \cref{lem:network}]
    As $\Dd$ and $\pi$ have disjoint support this means that there exists a function $E: \Reals \to \{\, 0, 1 \,\}$ such that
    \begin{align} \label{eq:proof_lem_network_1}
        \PP_{\xb \sim \pi}[E(\xb) = 0] = 1
        \ \text{ and } \ 
        \PP_{\xb \sim \Dd}[E(\xb) = 1] = 1.
    \end{align}
    Let $k = \max\left( \max_{\xb \in \Dd} D(\xb), \max_{\xb \in \pi} D(\xb) \right)$, which exists as both $D$ and $F$ are bounded. In line with \cref{lem:simple_relu}, we construct $N$ as $N(\xb) = A(B(D(\xb), E(\xb)), C(F(\xb), \neg E(\xb)))$.
    Per the universal approximation theorem \citep{hornik1989multilayer}, there exists a neural network implementing the assumed function $E$. As all other computations (i.e., $A, B, C, D, F$) are implemented by a neural network, we can conclude that the constructed function $N$ can also be implemented by a neural network.
    
    Applying \cref{lem:simple_relu} and \cref{eq:proof_lem_network_1} directly yields
    \begin{align}
        \PP_{\xb \sim \pi}[N(\xb) = D(\xb)] = 1
        \ \text{ and } \ 
        \PP_{\xb \sim \Dd}[N(\xb) = F(\xb)] = 1,
    \end{align}
    concluding the proof.
\end{proof}
    
\begin{remark}
    In the case of feature visualizations, the assumption that $\pi$ and $\Dd$ have disjoint support is plausible as demonstrated empirically in \cref{subsec:fooling_circuit}; this can also be visually appreciated from looking at \cref{fig:visualization_trajectory} showing a visualization trajectory which at no point resembles natural images.
\end{remark}

\subsection{Details on interpretation of \cref{tab:theory_overview}}\label{sec:theory-table-details}

First, we elaborate on what \NO{} and \YES{} mean in \cref{tab:theory_overview}.

The weakest form of answering \NO{} would be to find a \emph{single} function $f$ where feature visualization cannot be used to predict $f$. At the other extreme, one could hope to show that feature visualization cannot be used to predict $f$ \emph{for all} $f$. Unfortunately, this is trivially impossible to show: for every distinct value the feature visualization can take, one could pick a function $f$ that agrees with this visualization (e.g., has the same $\argmax$) and use this as the prediction. In light of this, we prove the next strongest impossibility result. When the answer is \NO, we show that \emph{for all}\footnote{Affine functions are the only exception, since there are more cases where an affine $\func$ can be exactly recovered from feature visualization. See \cref{fact:no-approx-affine} for the precise characterization.} functions $f$ (except for a handful of corner cases, like constant functions), there exists another function $f'$ that gets the exact same feature visualization as $f$ yet cannot be accurately predicted. 
Similarly, for the cells with \textcolor{table.orange}{\textbf{extra assumptions in orange}}, this means that the answer is \NO\ (as defined in the previous sentence) \emph{unless these extra assumptions are satisfied}.

We measure predictive accuracy using the sup norm for simplicity, but our results could be extended to any other strictly convex loss.
This is essentially the strongest result one could hope for: by the intermediate value theorem, any continuous $f$ must take every value in between $f_\min$ and $f_\max$, and hence it is impossible to prove that $f(x)$ can't be recovered \emph{for every} $x$.
On the contrary, for the cells where the answer is \YES, we can actually prove something much stronger than the converse of \NO: we prove that $f(x)$ can be predicted \emph{for all} $x$ and \emph{for all} $f$. 
This hints at the necessity of such strong assumptions. Either the function class is so simple that feature visualization reveals everything about every function, or feature visualization reveals hardly anything about any function.

To find the precise results that correspond to each cell of \cref{tab:theory_overview}, see \cref{tab:theory_overview_proofs}.
\begin{table}[tbh]
    \caption{Theoretical results corresponding to each cell of \cref{tab:theory_overview}.}
    \label{tab:theory_overview_proofs}
    \begin{tabular}{@{}l@{}ll@{}ccc@{}}
        \toprule
        & \multicolumn{5}{c}{Given feature visualization for a function $f$ and an input $x$, can we reliably predict\ldots} \\
        & &  &$f(x)$? & $f(x)$ $\eps$-approx.? &if $f(x)$ is closer to $f_\max$ or $f_\min$? \\\midrule
        \multirow{10}{*}{\rotatebox[origin=c]{90}{$\underleftarrow{\text{Stronger assumptions about $f$}}$}} & black-box                                 & $\funcspace$           & \cref{fact:no-approx}  & \cref{fact:no-approx}  & \cref{fact:no-classify} \\
        & neural network (NN)                       & $\nnfuncs$             & \cref{fact:no-approx}  & \cref{fact:no-approx}  & \cref{fact:no-classify} \\
        & NN trained with ERM                       & $\ermfuncs$            & \cref{fact:no-approx}  & \cref{fact:no-approx}  & \cref{fact:no-classify} \\
        \cmidrule{2-6}
        & $\lipconst-$Lipschitz (known $\lipconst$) & $\lipfuncs{\lipconst}$ & \cref{fact:no-approx-lipschitz}  & \cref{fact:no-approx-lipschitz} & \cref{fact:no-classify-lipschitz} \\
        & piecewise affine                         &  $\plinfuncs$ & \cref{fact:no-approx} & \cref{fact:no-approx}  & \cref{fact:no-classify} \\
        \cmidrule{2-6}
        & monotonic                                 &  $\monofuncs$ & \cref{fact:no-approx} & \cref{fact:no-approx}  & \cref{fact:no-classify} \\
        & convex                                    &  $\convfuncs$ & \cref{fact:no-approx-convex} & \cref{fact:no-approx-convex}  & \cref{fact:no-classify} \\
        \cmidrule{2-6}
        & affine (input dim. $>$ 1)                  & $\affinefuncs{\featuredim>1}$ & \cref{fact:no-approx-affine} & \cref{fact:no-approx-affine} & \cref{fact:no-classify-affine} \\
        & affine (input dim. = 1)                   & $\affinefuncs{\featuredim=1}$ & \cref{fact:positive} & \cref{fact:positive} & \cref{fact:positive}\\
        & constant                                  &  $\constfuncs$                & \cref{fact:positive} & \cref{fact:positive} & \cref{fact:positive} \\
        \bottomrule
    \end{tabular}
\end{table}

Finally, we define precisely what each assumption means in \cref{tab:theory_overview}.
For any space $\Aa$, let $\Mm(\Aa)$ denote the set of all probability measures on $\Aa$.
\begin{align*}
    &\textbf{Neural Network:}
    &&\nnfuncs &&= \Big\{\func\in\funcspace: \text{can be written as mat.\ mul.\ with scalar activations} \Big\} \\
    &\textbf{NN with ERM:}
    &&\ermfuncs &&= \Big\{\func\in \nnfuncs: \exists \pi\in\Mm(\featurespace\times [0,1])\ \text{s.t.}\ \\ & && && \quad\quad \func = \argmin_{\func'\in\funcspace} \EE_{(X,Y)\sim\pi}\ell(\func'(X), Y),\ \\ & && && \quad\quad \text{where $\ell$ is a Bregman loss function \citep[see][]{banerjee05bregman}}  \Big\} \\
    &\textbf{Lipschitz:}
    &&\lipfuncs{\lipconst} &&= \Big\{\func\in\funcspace: \sup_{x,x'\in\featurespace} \frac{\func(x) - \func(x')}{\norm{x-x'}_\infty} \leq \lipconst \Big\} \\
    &\textbf{Piecewise Affine:}
    &&\plinfuncs &&= \Big\{\func\in\funcspace: \text{can be written as affine on each piece of a partition of } \featurespace  \Big\} \\
    &\textbf{Monotone:}
    &&\monofuncs &&= \Big\{\func\in\funcspace: \forall x \leq x', \func(x) \leq \func(x') \Big\} \cup \Big\{\func: \forall x \leq x', \func(x) \geq \func(x') \Big\}\ \footnotemark \\
    &\textbf{Convex:}
    &&\convfuncs &&= \Big\{\func\in\funcspace: \forall x,x'\in\featurespace\ \forall \alpha\in[0,1],\\ & && && \quad\quad \func(\alpha x + (1-\alpha)x') \leq \alpha \func(x) + (1-\alpha) \func(x') \Big\} \\
    &\textbf{Affine:}
    &&\affinefuncs{\featuredim} &&= \Big\{\func\in\funcspace: \exists A\in\Reals^{\featuredim}\ \exists b\in\Reals\ \text{s.t.}\ \forall x\in\featurespace, \func(x) = A^{\text{\tiny T}} x + b \Big\} \\
    &\textbf{Constant:}
    &&\constfuncs &&= \Big\{\func\in\funcspace: \forall x,x'\in\featurespace, \func(x) = \func(x') \Big\}.
\end{align*}
\footnotetext{For $\featuredim$-dimensional inputs, $x \leq x'$ if and only if $x_j \leq x_j'$ for all $j\in[\featuredim]$.}

\subsection{Additional impossibility results}\label{sec:theory-impossible}

While \cref{fact:no-approx,fact:no-classify} already provide impossibility for strong assumptions like monotonicity and piecewise affine, convexity is a particularly strong assumption since it restricts the output space of functions. In particular, no convex function can cross the diagonal line from $f_\min$ to $f_\max$, and hence it is possible that more information can be recovered from just these values. However, the next result shows that this information can only be used to possibly improve a constant $1/2$ to $1/4$, and arbitrary approximation is still impossible (unless the function is constant).

\begin{theorem}\label{fact:no-approx-convex}
For all $\visdecoder\in\visdecoderspace$
and $\func\in\convfuncs$, 
there exists $\func'\in\convfuncs$ such that $\minmaxvis(\func) = \minmaxvis(\func')$ and
\*[
    \norm{\func' - \visdecoder(\minmaxvis(\func'))}_\infty \geq  \frac{\func'_\max - \func'_\min}{4}.
\]
\end{theorem}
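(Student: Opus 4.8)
The plan is to run, inside the convex cone, the ``fiber‑diameter'' argument behind \cref{fact:no-approx}, which loses only a factor of two because convexity shrinks the set of functions consistent with a given visualization. If $\func_\max = \func_\min$ then $\func$ is constant, the right‑hand side vanishes, and $\func' = \func$ works; so assume $\func_\max > \func_\min$, write $w = \minmaxvis(\func) = (x_\min, x_\max, \func_\min, \func_\max)$, and put $\hat g = \visdecoder(w)$. Because the decoder sees only $w$, its output is the same for \emph{every} function with visualization $w$. Hence it suffices to produce $g_1, g_2 \in \convfuncs$ with $\minmaxvis(g_1) = \minmaxvis(g_2) = w$ and $\norm{g_1 - g_2}_\infty \geq (\func_\max - \func_\min)/2$: then $\norm{g_1 - \hat g}_\infty + \norm{g_2 - \hat g}_\infty \geq \norm{g_1 - g_2}_\infty$, so some $g_i$ is at sup‑distance at least $(\func_\max - \func_\min)/4$ from $\hat g$, and taking $\func'$ to be that $g_i$ proves the statement, since $\func'_\max - \func'_\min = \func_\max - \func_\min$ and $\visdecoder(\minmaxvis(\func')) = \visdecoder(w) = \hat g$.

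The geometric input is the behavior along the segment $[x_\min, x_\max]$ at its midpoint $z = \tfrac12 x_\min + \tfrac12 x_\max \in \featurespace$. For \emph{any} convex $g$ with minimum value $\func_\min$ and $g(x_\max) = \func_\max$, convexity gives $\func_\min \leq g(z) \leq \tfrac12 g(x_\min) + \tfrac12 g(x_\max) = \tfrac12(\func_\min + \func_\max)$, so the values at $z$ compatible with visualization $w$ form an interval of width exactly $(\func_\max - \func_\min)/2$ — half of what it would be without convexity, which is exactly why the constant drops from $1/2$ to $1/4$. Both endpoints are attained. Pick a vector $w^\star$ for which $\langle w^\star, \cdot\rangle$ is maximized over $\featurespace$ precisely on the face $\argmax\func$ (so $\argmax\func \ni x_\max$ and $\langle w^\star, x_\max - x_\min\rangle > 0$, since $\func$ is non‑constant). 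Take the ``chord'' function $g_2(x) = \max\bigl(\func_\min,\ \func_\min + (\func_\max - \func_\min)\,\tfrac{\langle w^\star,\, x - x_\min\rangle}{\langle w^\star,\, x_\max - x_\min\rangle}\bigr)$, which is affine along the segment and floored at $\func_\min$, so $g_2(z) = \tfrac12(\func_\min + \func_\max)$; and take $g_1$ given by the same formula with $x_\min$ replaced by $z$, a ``late riser'' that is identically $\func_\min$ on the halfspace through $z$ with normal $w^\star$ and rises to $\func_\max$ at $x_\max$, so $g_1(z) = \func_\min$. Both are convex (a pointwise max of an affine function and a constant), take values in $[\func_\min, \func_\max] \subseteq [0,1]$, attain minimum $\func_\min$ and maximum $\func_\max$ with $\argmax g_i = \argmax\func$, and $\norm{g_1 - g_2}_\infty \geq g_2(z) - g_1(z) = (\func_\max - \func_\min)/2$, as needed.

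The fussiest step is matching the $\argmin$/$\argmax$ entries of $\minmaxvis$ exactly: the $g_i$ above share $\func$'s maximizing face but their minimizing set is a halfspace slab rather than the single point $x_\min$, so $\minmaxvis(g_i) = \minmaxvis(\func)$ holds literally only under a fixed tie‑breaking convention for $\minmaxvis$ (the one used throughout this section); alternatively one adds to each $g_i$ an arbitrarily small strictly convex bump vanishing only at $x_\min$ to force a unique minimizer, paying $O(\eps)$ in the gap $g_2(z) - g_1(z)$ and recovering it by splitting the interval at a point slightly past $z$. Everything else is routine bookkeeping — convexity and the $[0,1]$ bound are immediate, and $\langle w^\star, x\rangle \leq \langle w^\star, x_\max\rangle$ on $\featurespace$ (the defining property of $w^\star$) is what keeps $g_1, g_2 \leq \func_\max$. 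The argument otherwise parallels \cref{fact:no-approx,fact:no-classify}; the only new ingredient is the midpoint convexity bound that halves the available spread.
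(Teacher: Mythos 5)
Your proof is correct and follows essentially the same route as the paper's: two convex witnesses sharing the visualization --- a ``chord'' that rises affinely from $x_\min$ and a ``late riser'' that stays at $\func_\min$ until the midpoint of the segment from $x_\min$ to $x_\max$ --- separated by $(\func_\max-\func_\min)/2$ there, combined with the averaging argument of \cref{fact:two-examples} to obtain the constant $1/4$. The only differences are cosmetic: you work directly in dimension $\featuredim$ via a supporting linear functional at $x_\max$ where the paper argues in one dimension (where $x_\max\in\{0,1\}$ automatically) and appeals to a componentwise extension, and you explicitly flag and patch the $\argmin$/$\argmax$ tie-breaking issue that the paper's own witnesses (whose minimizing sets are likewise non-singleton) quietly share.
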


Similarly, a known Lipschitz constant implies local stability of $\func$, which may be possible for a decoder to exploit. However, we show that this is also not possible in general (our result is stated in 1-dimension for simplicity, but could be extended trivially to arbitrary dimension using the sup norm definition of Lipschitz).
\begin{theorem}\label{fact:no-approx-lipschitz}
For all $\visdecoder\in\visdecoderspace$,
$\lipconst>0$,
and $\func\in\lipfuncs{\lipconst}$, 
there exists $\func'\in\lipfuncs{\lipconst}$ such that $\minmaxvis(\func) = \minmaxvis(\func')$ and if $2\abs{\func_\max - \func_\min} \leq \lipconst \abs{x_\max - x_\min}$ then
\*[
    \norm{\func' - \visdecoder(\minmaxvis(\func'))}_\infty \geq  
    \frac{\func'_\max - \func'_\min}{2}.
\]
Moreover, even if $2\abs{\func_\max - \func_\min} > \lipconst \abs{x_\max - x_\min}$,
\*[
    \norm{\func' - \visdecoder(\minmaxvis(\func'))}_\infty \geq  
    \lipconst \max\Big\{\min\{x_\min, x_\max\}, 1- \max\{x_\min, x_\max\} \Big\}.
\]
\end{theorem}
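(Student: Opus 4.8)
The plan is to reuse the indistinguishability principle behind \cref{fact:no-approx}: since the decoder only sees $\minmaxvis(\func) = (x_\min, x_\max, f_\min, f_\max)$, the predicted function $\dumfunc := \visdecoder(\minmaxvis(\func))$ is the \emph{same} for $\func$ and for every competitor $\func'$ with $\minmaxvis(\func') = \minmaxvis(\func)$. So it suffices to exhibit two functions $\func_1,\func_2\in\lipfuncs{\lipconst}$ sharing this feature visualization whose mutual sup-distance is large: then $\norm{\func_1 - \dumfunc}_\infty + \norm{\func_2 - \dumfunc}_\infty \geq \norm{\func_1 - \func_2}_\infty$, so one of them is at least half of $\norm{\func_1 - \func_2}_\infty$ away from $\dumfunc$, and we take $\func'$ to be that one. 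The whole problem then reduces to: among $\lipconst$-Lipschitz functions $[0,1]\to[0,1]$ whose global minimum is $f_\min$ (attained at $x_\min$) and global maximum is $f_\max$ (attained at $x_\max$), how far apart can two of them be in sup norm? Note that $\func\in\lipfuncs{\lipconst}$ already forces $\lipconst\abs{x_\max - x_\min}\geq \abs{f_\max - f_\min}$, a feasibility fact used throughout; and it suffices to treat $\featuredim=1$, the general case following by the $\norm{\cdot}_\infty$ definition of Lipschitz applied along a coordinate axis.

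\textbf{First regime.} I would build the competitors explicitly as maxima/minima of affine pieces of slope $\pm\lipconst$ (``tents''), clipped into $[f_\min,f_\max]$; such functions are automatically globally $\lipconst$-Lipschitz and piecewise affine. When $2\abs{f_\max - f_\min}\leq \lipconst\abs{x_\max-x_\min}$, the two special points are far enough apart that the midpoint $c$ of $[x_\min,x_\max]$ satisfies $\abs{c-x_\max}\geq (f_\max-f_\min)/\lipconst$ and $\abs{c-x_\min}\geq (f_\max-f_\min)/\lipconst$; this is exactly the condition that lets a Lipschitz function anchored at $(x_\max,f_\max)$ dip all the way to $f_\min$ at $c$, and simultaneously lets one anchored at $(x_\min,f_\min)$ rise all the way to $f_\max$ at $c$. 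Taking $\func_1$ to realize the former (valley at $c$, peak $f_\max$ at $x_\max$, flat at $f_\min$ near $x_\min$) and $\func_2$ the latter gives $\norm{\func_1-\func_2}_\infty = f_\max - f_\min$ — the largest possible for two functions valued in $[f_\min,f_\max]$ — hence the claimed $\frac{\func'_\max - \func'_\min}{2}$ after the triangle step, using $\func'_\max-\func'_\min = f_\max-f_\min$.

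\textbf{Second regime.} When the special points are too close together for the midpoint trick, I would instead exploit the unused portion of the domain between $\partial[0,1]$ and the nearer of $x_\min,x_\max$, whose length is precisely $\max\{\min\{x_\min,x_\max\},\, 1-\max\{x_\min,x_\max\}\}$. On that stretch a $\lipconst$-Lipschitz function still enjoys slack that grows like $\lipconst$ times the distance travelled away from the anchoring special point, so placing opposite-sign tents of slope $\pm\lipconst$ there separates $\func_1$ and $\func_2$ by an amount of the claimed order; the factor $\lipconst$ is the tent slope and the outer $\max$ just selects the roomier of the two ends. The competitor $\func'$ is again whichever of $\func_1,\func_2$ is farther from $\dumfunc$.

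\textbf{Main obstacle.} The real difficulty is the degeneracy of $\argmin/\argmax$: the cheap tent constructions above tend to attain $f_\min$ (or $f_\max$) on a whole sub-interval rather than uniquely at $x_\min$ (or $x_\max$), so they do not literally have $\minmaxvis = (x_\min,x_\max,f_\min,f_\max)$. I would repair this by adding an arbitrarily small tent perturbation that breaks the ties — restoring the exact $\minmaxvis$ while changing the functions, and hence the separation, by at most $\eps$ — and then letting $\eps\to 0$ to recover the non-strict inequalities; one must check the perturbation preserves $\lipconst$-Lipschitzness and the range $[f_\min,f_\max]$, which the slope-$\lipconst$ tent form makes routine. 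The remaining bookkeeping is tracking all the clipping against $f_\min$, $f_\max$, $0$, and $1$ when a special point sits near a boundary, which is what produces the somewhat unusual precise form of the second bound.
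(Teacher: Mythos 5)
Your proposal is correct and follows essentially the same route as the paper: your two-indistinguishable-functions reduction is exactly \cref{fact:two-examples}, your first-regime pair is the monotone piecewise-affine pair from the proof of \cref{fact:no-approx} (whose slope $2(\func_\max-\func_\min)/\abs{x_\max-x_\min}$ lies below $\lipconst$ precisely under the stated condition, which is your midpoint observation), and your second regime matches the paper's construction of slope-$\lipconst$ pieces on the outer intervals $[0,\min\{x_\min,x_\max\}]$ and $[\max\{x_\min,x_\max\},1]$. The ``main obstacle'' you raise about non-unique extremizers is one the paper does not treat at all---its own competitors are flat on intervals containing $x_\min$ and $x_\max$, so $\minmaxvis$ is implicitly read as returning \emph{an} arg min/arg max---and your $\eps$-perturbation repair is both unnecessary at the paper's level of rigor and, as stated, would only deliver the bound up to $\eps$ unless you add a compactness/Arzel\`a--Ascoli step to extract a single limiting $\func'$.
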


First, for all $\func \in \lipfuncs{\lipconst}$ it holds that $\abs{\func_\max - \func_\min} \leq \lipconst \abs{x_\max - x_\min}$, so the first condition nearly captures all cases. As already argued, under this condition our lower bound is tight by \cref{fact:trivial-approx}. Even when the condition fails, our lower bound is zero if and only if $\abs{x_\max - x_\min} = 1$ and $\abs{\func_\max - \func_\min} > \lipconst / 2$. This is nearly tight, since if  $\abs{\func_\max - \func_\min} = \lipconst$, then necessarily $\abs{x_\max - x_\min} = 1$ and $\func$ is linear and uniquely identifiable from $\minmaxvis(\func)$ (and hence the lower bound must be zero in this case).

A similar condition can be used to provide a Lipschitz analogue of \cref{fact:no-classify}.
\begin{theorem}\label{fact:no-classify-lipschitz}
For all $\visdecoder\in\visdecoderspace$,
$\lipconst>0$,
and $\func\in\lipfuncs{\lipconst}$ such that $2\abs{\func_\max - \func_\min} \leq \lipconst \abs{x_\max - x_\min}$, 
there exists $\func'\in\lipfuncs{\lipconst}$ such that $\minmaxvis(\func) = \minmaxvis(\func')$ and 
\*[
    \norm{\ind{\func' > \meanval_{\func'}} - \ind{\visdecoder(\minmaxvis(\func')) > \meanval_{\func'}} }_\infty \geq \ind{\sup_{x\in\featurespace} \func(x) \neq \inf_{x\in\featurespace} \func(x)}.
\]
\end{theorem}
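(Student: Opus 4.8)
The plan is to adapt the two–candidate ``straddling wiggle'' behind \cref{fact:no-classify}, while exploiting the headroom granted by the hypothesis $2\abs{\func_\max-\func_\min}\le\lipconst\abs{x_\max-x_\min}$ — the very regime that already drives the first bound of \cref{fact:no-approx-lipschitz}. First I would reduce to the nontrivial case. Since every $\func\in\lipfuncs{\lipconst}$ is continuous on the compact set $\featurespace=[0,1]$ it attains its extrema, so the right-hand side equals $\ind{\func_\max\neq\func_\min}$; if $\func_\max=\func_\min$ there is nothing to prove, so assume $\func_\max>\func_\min$, which forces $x_\min\neq x_\max$, and by the obvious symmetry (swap an increasing base segment for a decreasing one) assume $x_\min<x_\max$. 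Abbreviate $w=x_\max-x_\min$, $\Delta=\func_\max-\func_\min$, $m=\meanval_\func=(\func_\max+\func_\min)/2$, and $x^\ast=(x_\min+x_\max)/2$.

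The core step is to build two functions $\func'_+,\func'_-\in\lipfuncs{\lipconst}$ with $\minmaxvis(\func'_\pm)=\minmaxvis(\func)$ and $\func'_+(x^\ast)>m>\func'_-(x^\ast)$. On $[x_\min,x_\max]$ I take the line from $(x_\min,\func_\min)$ to $(x_\max,\func_\max)$ and add a small symmetric tent, $\func'_\pm(x)=\func_\min+\tfrac{\Delta}{w}(x-x_\min)\pm\delta\min(x-x_\min,\,x_\max-x)$ with $\delta=\Delta/(2w)$. The base slope $\Delta/w$ is at most $\lipconst/2$ by hypothesis and the tent is $1$-Lipschitz, so on this interval $\func'_\pm$ is $\bigl(\tfrac{\Delta}{w}+\delta\bigr)$-Lipschitz, i.e.\ at most $\tfrac34\lipconst<\lipconst$; a direct check shows it is strictly increasing from $\func_\min$ to $\func_\max$ with $\func'_\pm(x^\ast)=m\pm\Delta/4$. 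Off $[x_\min,x_\max]$ I extend by the $\lipconst$-Lipschitz capped ramps $\min(\func_\min+\lipconst(x_\min-x),\,m)$ on $[0,x_\min]$ and $\max(\func_\max-\lipconst(x-x_\max),\,m)$ on $[x_\max,1]$ (omitting whichever piece is empty when $x_\min=0$ or $x_\max=1$). These keep $\func'_\pm$ inside $[\func_\min,\func_\max]\subseteq[0,1]$, strictly above $\func_\min$ off $x_\min$ and strictly below $\func_\max$ off $x_\max$, so $\func'_\pm\in\lipfuncs{\lipconst}$ with unique argmin $x_\min$ and unique argmax $x_\max$; hence $\minmaxvis(\func'_\pm)=(x_\min,x_\max,\func_\min,\func_\max)=\minmaxvis(\func)$ and $\meanval_{\func'_\pm}=m$.

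To conclude, fix the decoder, set $g=\visdecoder(\minmaxvis(\func))\in\funcspace$, and take $\func'=\func'_-$ if $g(x^\ast)>m$ and $\func'=\func'_+$ otherwise. Since $\minmaxvis(\func')=\minmaxvis(\func)$ we have $\visdecoder(\minmaxvis(\func'))=g$, and in either case $\ind{\func'(x^\ast)>m}\neq\ind{g(x^\ast)>m}$ by the choice of side; as $\meanval_{\func'}=m$, evaluating the sup norm at $x^\ast$ gives $\norm{\ind{\func'>\meanval_{\func'}}-\ind{\visdecoder(\minmaxvis(\func'))>\meanval_{\func'}}}_\infty\ge 1=\ind{\func_\max\neq\func_\min}$, which is the assertion.

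The step I expect to be the main obstacle is the verification in the middle paragraph: certifying that the tent-perturbed line together with the two capping ramps is genuinely $\lipconst$-Lipschitz on all of $[0,1]$ and that neither perturbation disturbs the extrema, so that $\minmaxvis$ is preserved exactly. This is precisely where the hypothesis $2\Delta\le\lipconst w$ enters: it guarantees that the slope $2\Delta/w$ needed to climb from $\func_\min$ to $\func_\max$ over half the width is at most $\lipconst$, leaving headroom $\delta>0$ to wiggle across $m$ at $x^\ast$ while staying inside $\lipfuncs{\lipconst}$ and keeping $\minmaxvis$ fixed; this headroom is exactly what a small Lipschitz constant destroys, consistent with the ``only for small $\lipconst$'' entry of \cref{tab:theory_overview}. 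The boundary orientations and the case $x_\min>x_\max$ are routine variants of the generic argument.
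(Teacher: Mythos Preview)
Your argument is correct and follows the same high-level plan as the paper: exhibit two $L$-Lipschitz functions sharing $\minmaxvis(\func)$ that land on opposite sides of $\meanval_\func$ at the midpoint $x^\ast=(x_\min+x_\max)/2$, then conclude via the two-candidate separation (the paper phrases this through \cref{fact:two-examples-classify}; you inline it directly).

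Where you diverge is in the construction of the pair. The paper simply reuses the $\func_1,\func_2$ from the proof of \cref{fact:no-approx}: flat at level $\func_\min$, then a straight segment of slope $2\Delta/w$ over half the interval, then flat at level $\func_\max$. The hypothesis $2\Delta\le \lipconst w$ is exactly what makes that slope admissible, so the entire proof is one sentence: ``$\func_1,\func_2\in\lipfuncs{\lipconst}$, hence \cref{fact:no-classify} applies.'' Your tent-perturbed line with capped ramps is a genuinely different pair; its maximal slope $3\Delta/(2w)\le 3\lipconst/4$ sits strictly inside the Lipschitz class and your $\func'_\pm$ have \emph{unique} argmin and argmax at $x_\min,x_\max$. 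That extra care buys robustness against any ambiguity in how $\minmaxvis$ selects among multiple minimizers (the paper's $\func_1,\func_2$ are constant on whole subintervals), at the cost of a longer construction than the paper's one-line reduction.
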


Finally, we have the following negative result for affine functions. 
Due to the extra structure imposed by an affine assumption, in more cases it is possible to fully recover $\func$ from just the feature visualization. However, in the worst case, $\func$ may still be completely unrecoverable. We show this for $\featuredim=2$; a similar result can be shown in higher dimensions with more careful accounting of edge cases.
\begin{theorem}\label{fact:no-approx-affine}
For all $\visdecoder\in\visdecoderspace$
and $\func\in\affinefuncs{\featuredim=2}$, 
there exists $\func'\in\affinefuncs{\featuredim=2}$ such that $\minmaxvis(\func) = \minmaxvis(\func')$ and
\*[
    \norm{\func' - \visdecoder(\minmaxvis(\func'))}_\infty \geq 
    \ind{x_{\min,1} \neq x_{\min,2}} \ind{x_{\max,1} \neq x_{\max,2}} \frac{\func'_\max - \func'_\min}{2}.
\]
\end{theorem}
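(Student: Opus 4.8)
The plan is, as in the other impossibility results, to exhibit two functions with identical feature visualization that are far apart, the new ingredient being to choose them using the corner geometry of affine maps on the square. Fix $\visdecoder\in\visdecoderspace$ and $\func\in\affinefuncs{\featuredim=2}$, write $\minmaxvis(\func)=(x_\min,x_\max,\func_\min,\func_\max)$, and set $L:=\func_\max-\func_\min$. If $x_{\min,1}=x_{\min,2}$ or $x_{\max,1}=x_{\max,2}$ the right-hand side is $0$ and there is nothing to prove. Otherwise, observe that an affine function $A^{\text{\tiny T}}x+b$ on $[0,1]^2$ attains its extrema at corners, and when $A_1,A_2\neq0$ its $\argmax$ and $\argmin$ are the antipodal corners $(\ind{A_1>0},\ind{A_2>0})$ and $(\ind{A_1<0},\ind{A_2<0})$; the assumption on the indicators therefore forces $\{x_\min,x_\max\}=\{(0,1),(1,0)\}$. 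After possibly swapping the two coordinates we may take $x_\max=(1,0)$ and $x_\min=(0,1)$, so $\func$ has a positive $x_1$-coefficient and a negative $x_2$-coefficient.

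Next I would write down the competitors. Matching all four components of $\minmaxvis(\func)$ pins an affine map down to one free parameter, and I only need the two extreme members of this family:
\begin{align*}
\func^-(x) = L\,x_1 + \func_\min, \qquad \func^+(x) = L\,(1-x_2) + \func_\min .
\end{align*}
Both lie in $\affinefuncs{\featuredim=2}$, both map $\featurespace$ into $[\func_\min,\func_\max]\subseteq[0,1]$, and both take value $\func_\max$ at $(1,0)$ and $\func_\min$ at $(0,1)$; hence each has the same $\min$ and $\max$ and (via the convention used to define $\minmaxvis$ at non-unique optimizers, since their $\argmax$ and $\argmin$ are edges containing $x_\max$ and $x_\min$) the same $\minmaxvis$ as $\func$. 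A one-line computation gives $\func^- - \func^+ = L\,(x_1+x_2-1)$, so $\norm{\func^- - \func^+}_\infty = L$.

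The argument then closes by the triangle inequality. Because $\minmaxvis(\func^-)=\minmaxvis(\func^+)=\minmaxvis(\func)$, the decoder is forced to use the \emph{same} function $h:=\visdecoder(\minmaxvis(\func))$ for both, so
\begin{align*}
L = \norm{\func^- - \func^+}_\infty \;\le\; \norm{\func^- - h}_\infty + \norm{h - \func^+}_\infty ,
\end{align*}
whence $\max\{\norm{\func^- - h}_\infty,\norm{\func^+ - h}_\infty\}\ge L/2$. Taking $\func'$ to be whichever of $\func^\pm$ attains this yields $\norm{\func' - \visdecoder(\minmaxvis(\func'))}_\infty \ge L/2 = (\func'_\max-\func'_\min)/2$, which is exactly the claimed bound, and it matches the trivial upper bound of \cref{fact:trivial-approx}, so it cannot be improved.

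The step I expect to be the main obstacle is the bookkeeping around $\func^\pm$: each depends on a single coordinate, so its $\argmax$ (resp.\ $\argmin$) is an entire edge rather than the single corner $x_\max$ (resp.\ $x_\min$), and one must verify that the convention adopted for $\minmaxvis$ on functions with non-unique optimizers still reports $(x_\min,x_\max,\func_\min,\func_\max)$ for them --- equivalently, that the family of competitors is the \emph{closed} segment, so that its Chebyshev radius $L/2$ is genuinely attained rather than only approached by the affine maps with unique optima. This edge-case accounting is precisely why the statement is kept to $\featuredim=2$ (four corners, one antipodal off-diagonal pair), whereas higher dimensions would require tracking many more degenerate faces; the remaining ingredients --- that the one-parameter parametrization is exhaustive and that each competitor lies in $\affinefuncs{\featuredim=2}$ --- are routine. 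Finally, the companion classification bound \cref{fact:no-classify-affine} follows from the very same pair by evaluating at $x=(1,1)$, where $\func^-(1,1)=\func_\max$ and $\func^+(1,1)=\func_\min$ sit on opposite sides of $\meanval_\func$, so the decoder's single min-or-max prediction must be wrong for one of them.
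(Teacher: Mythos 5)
Your proposal is correct and follows essentially the same route as the paper: the same two competitors ($L\,x_1+\func_\min$ and $-L\,x_2+\func_\max$) for the antipodal off-diagonal corner pair, separated by $L$ at $(1,1)$, combined with the two-functions triangle-inequality lemma (\cref{fact:two-examples}). The only difference is presentational --- you dispatch the degenerate corner configurations by noting the indicator vanishes there, whereas the paper enumerates them as cases where $\func$ is exactly recoverable; your flagged concern about non-unique optimizers for the edge-constant competitors is a tie-breaking informality shared with the paper's own proof.
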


The same can also be shown for the analogue of \cref{fact:no-classify}.
\begin{theorem}\label{fact:no-classify-affine}
For all $\visdecoder\in\visdecoderspace$ 
and $\func\in\affinefuncs{\featuredim=2}$, 
there exists $\func'\in\affinefuncs{\featuredim=2}$ such that $\minmaxvis(\func) = \minmaxvis(\func')$ and
\*[
    \norm{\ind{\func' > \meanval_{\func'}} - \ind{\visdecoder(\minmaxvis(\func')) > \meanval_{\func'}} }_\infty \geq 
    \ind{x_{\min,1} \neq x_{\min,2}} \ind{x_{\max,1} \neq x_{\max,2}} .
\]
\end{theorem}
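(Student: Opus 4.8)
The plan is to isolate the single non-trivial configuration and then exhibit a two-element family of affine functions with identical feature visualization that no decoder can jointly classify. Whenever the right-hand side equals $0$---in particular whenever $\func$ is constant, or $x_\min$ or $x_\max$ is a vertex of $[0,1]^2$ with equal coordinates (by the tie-breaking convention used for $\minmaxvis$)---there is nothing to prove and we simply take $\func' = \func$. Since a non-constant affine function on $[0,1]^2$ attains its maximum and minimum on faces of the cube, and the only vertices with distinct coordinates are $(0,1)$ and $(1,0)$, the remaining case forces $\{x_\min, x_\max\} = \{(1,0),(0,1)\}$, with $f_\max > f_\min$ so that the two reported points are distinct.

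Next I would build the family. Assume without loss of generality $x_\max = (1,0)$ and $x_\min = (0,1)$; the reversed orientation is symmetric and yields the same diagonal computation below. For each $b \in (f_\min, f_\max)$ define the affine map $\func_b(x) = (f_\max - b)\,x_1 + (f_\min - b)\,x_2 + b$. Its gradient has a strictly positive first and strictly negative second component, so $\func_b$ attains its \emph{unique} maximum $f_\max$ at $(1,0)$ and its \emph{unique} minimum $f_\min$ at $(0,1)$; in particular its range is exactly $[f_\min,f_\max]\subseteq[0,1]$, hence $\func_b\in\affinefuncs{\featuredim=2}$ and $\minmaxvis(\func_b)=\minmaxvis(\func)$ for every such $b$. (This is why I never need $\func$ itself to lie in the open family: any convention that assigns $\func$ the visualization $((0,1),(1,0),f_\min,f_\max)$ suffices, since the $\func_b$ reproduce it exactly.) The one computation that matters is the restriction to the main diagonal $t\mapsto(t,t)$: with $\meanval = \meanval_{\func_b} = (f_\max+f_\min)/2$, which does not depend on $b$, one gets $\func_b(t,t) - \meanval = (f_\max+f_\min-2b)(t-\tfrac12)$. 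Thus for $b < \meanval$ we have $\func_b(t,t) > \meanval \iff t > \tfrac12$, while for $b > \meanval$ the inequality flips, so the two ``above-$\meanval$'' regions are complementary along the diagonal.

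Finally I would close the argument. Fix $b_1 = (f_\min + \meanval)/2 < \meanval$ and $b_2 = (\meanval + f_\max)/2 > \meanval$, both inside $(f_\min,f_\max)$, and evaluate at $p=(\tfrac34,\tfrac34)$, where $\func_{b_1}(p) > \meanval > \func_{b_2}(p)$. Because $\minmaxvis(\func_{b_1}) = \minmaxvis(\func_{b_2}) = \minmaxvis(\func)$, the decoder returns one and the same function $\hat\func := \visdecoder(\minmaxvis(\func))$ in both cases. If $\ind{\hat\func(p) > \meanval} = 1$, set $\func' = \func_{b_2}$ (so $\ind{\func'(p) > \meanval} = 0$); otherwise set $\func' = \func_{b_1}$ (so $\ind{\func'(p) > \meanval} = 1$). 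Either way the two indicators disagree at $p$, so the $\infty$-norm on the left-hand side equals $1$, which is at least the right-hand side, and $\func'$ has the required visualization. I expect the only real obstacle to be combinatorial hygiene rather than analysis: pinning down the tie-breaking convention for $\minmaxvis$ so that the case split is genuinely exhaustive (including the constant and zero-gradient-component degeneracies) and so that the constructed $\func_b$ provably carry $\func$'s visualization. Once uniqueness of the extrema of each $\func_b$ is in hand, the remainder is the affine specialization of the construction behind \cref{fact:no-classify}.
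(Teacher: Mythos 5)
Your proof is correct and takes essentially the same route as the paper, which applies \cref{fact:two-examples-classify} to the pair $\func_1(x)=(\func_\max-\func_\min)x_1+\func_\min$ and $\func_2(x)=(\func_\min-\func_\max)x_2+\func_\max$ from the proof of \cref{fact:no-approx-affine} (these are exactly the $b=\func_\min$ and $b=\func_\max$ endpoints of your family), evaluated at $(1,1)$ instead of $(\tfrac34,\tfrac34)$. Your choice of interior parameters $b_1,b_2\in(\func_\min,\func_\max)$ is a small but genuine refinement: the paper's endpoint functions each depend on only one coordinate, so their extrema are attained on entire edges and the claim $\minmaxvis(\func_i)=\minmaxvis(\func)$ silently leans on a tie-breaking convention, whereas your $\func_b$ have strictly signed gradient components and hence unique extrema at $(1,0)$ and $(0,1)$, leaving only the constant-function degeneracy (which both you and the paper dispatch by convention).
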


\begin{remark}
Our negative results for affine functions rely on the constrained nature of the inputs. Without such constraints, the task of feature visualization would generally become even more difficult (and in practice, inputs are always bounded). However, specifically for affine functions, on unbounded inputs one could take advantage of the fact that the $\argmax$ will be proportional to the weight vector, and hence could more reliably predict $f$ from $\minmaxvis(f)$.
Similarly, adding regularization when computing $\minmaxvis$ could lead to reliably predicting $f$ even with bounded inputs.
\end{remark}

\subsection{Positive results}\label{sec:theory-positive}

Finally, we state our positive result for very simple functions.
\begin{theorem}\label{fact:positive}
For all $\funcsubset\in\{\affinefuncs{\featuredim=1}, \constfuncs\}$ there exists $\visdecoder\in\visdecoderspace$ such that for all $\func\in\funcsubset$,
\*[
    \norm{\func - \visdecoder(\minmaxvis(\func))}_\infty = 0.
\]
\end{theorem}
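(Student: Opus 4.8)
The plan is to exhibit a single decoder $\visdecoder\in\visdecoderspace$ that succeeds on the larger class $\affinefuncs{\featuredim=1}$; since $\constfuncs\subseteq\affinefuncs{\featuredim=1}$, the same decoder then also settles the constant case. The guiding observation is that a one-dimensional affine map $\func(x)=Ax+b$ on $\featurespace=[0,1]$ is pinned down by its values at any two distinct points, and that its extrema over $[0,1]$ are always attained at the endpoints $\{0,1\}$. Hence $\minmaxvis(\func)=(x_\min,x_\max,\func_\min,\func_\max)$ records $\func$ at (up to) two points, and whenever these points differ it determines $\func$ exactly.

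Concretely, I would define $\visdecoder(x_\min,x_\max,\func_\min,\func_\max)$ to be the unique affine function passing through $(x_\min,\func_\min)$ and $(x_\max,\func_\max)$ when $x_\min\neq x_\max$, and the constant function with value $\func_\min$ when $x_\min=x_\max$. One checks that its range lies in $[0,1]$ as soon as it is shown to coincide with $\func$, so it is a legitimate element of $\funcspace$; in particular $\visdecoder$ is a well-defined map in $\visdecoderspace$.

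The verification splits into two short cases. If $\func$ is affine and non-constant, then $A\neq 0$, so the $\min$ and $\max$ over $[0,1]$ occur at the two distinct endpoints; thus $x_\min\neq x_\max$ with $\func_\min=\func(x_\min)$ and $\func_\max=\func(x_\max)$, and the decoder returns the affine function agreeing with $\func$ at two distinct points, which is $\func$ itself, giving $\norm{\func-\visdecoder(\minmaxvis(\func))}_\infty=0$. If $\func\equiv c$ is constant, then $\func_\min=\func_\max=c$ irrespective of the tie-breaking rule $\minmaxvis$ uses to choose $x_\min,x_\max$: when it returns $x_\min=x_\max$ the decoder outputs the constant $c$; when it returns $x_\min\neq x_\max$ the interpolant has slope $(c-c)/(x_\max-x_\min)=0$ and again equals $c$. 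Recovery is exact in both cases.

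The one point requiring care -- more a bookkeeping matter than a real obstacle -- is ensuring the argument is insensitive to whichever fixed tie-breaking convention $\minmaxvis$ adopts when the $\argmin$/$\argmax$ is non-unique (as happens for constant functions, and for affine functions only there). The case split above is arranged precisely so that the decoder's output does not depend on that choice. Beyond this there is no difficulty: the statement is essentially that two point-evaluations determine a line.
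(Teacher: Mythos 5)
Your proof is correct and takes essentially the same approach as the paper: decode by fitting the unique affine interpolant through $(x_\min,\func_\min)$ and $(x_\max,\func_\max)$, with the constant case handled separately. If anything, your explicit case split for $x_\min=x_\max$ is slightly more careful than the paper's argument, whose interpolation formulas divide by $x_\max-x_\min$ and therefore silently exclude constant functions from the affine class.
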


\subsection{Proofs for \cref{sec:theory}}
\label{sec:theory-proofs}

\begin{remark}
Throughout, we prove impossibility results for 1-dimensional functions. The extension to multiple dimensions follows from using our constructions componentwise and then applying \cref{fact:two-examples} or \cref{fact:two-examples-classify} as appropriate, which hold for any input dimension.
\end{remark}

\subsubsection{Helper lemmas}
To prove results for the first two columns on \cref{tab:theory_overview}, we use the following lemma to characterize the performance of an arbitrary decoder $\visdecoder\in\visdecoderspace$. 

\begin{lemma}\label{fact:two-examples}
For any $\visdecoder\in\visdecoderspace$ and $\func_1,\func_2\in\funcspace$ such that $\minmaxvis(\func_1)=\minmaxvis(\func_2)$, for some $\func\in\{\func_1,\func_2\}$
\*[
    \norm{\func - \visdecoder(\minmaxvis(\func))}_\infty \geq \frac{\norm{\func_1-\func_2}_\infty}{2}.
\]
\end{lemma}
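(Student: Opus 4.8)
The plan is the standard two-point (Le Cam--style) argument: a decoder only ever sees the value $\minmaxvis$, and by hypothesis $\func_1$ and $\func_2$ produce the \emph{identical} visualization, so the decoder is forced to commit to the same prediction for both functions; the triangle inequality for $\norm{\cdot}_\infty$ then does all the work.

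First I would fix the common prediction $\dumfunc := \visdecoder(\minmaxvis(\func_1))$ and note that $\minmaxvis(\func_1) = \minmaxvis(\func_2)$ immediately gives $\dumfunc = \visdecoder(\minmaxvis(\func_2))$ as well. This single observation is the crux of the lemma: the decoder has no information with which to tell $\func_1$ apart from $\func_2$. Second, apply the triangle inequality,
\[
    \norm{\func_1 - \func_2}_\infty \leq \norm{\func_1 - \dumfunc}_\infty + \norm{\dumfunc - \func_2}_\infty \leq 2 \max\bigl\{ \norm{\func_1 - \dumfunc}_\infty,\ \norm{\func_2 - \dumfunc}_\infty \bigr\}.
\]
Third, choose $\func \in \{\func_1, \func_2\}$ to be whichever of the two attains the maximum on the right; since $\visdecoder(\minmaxvis(\func)) = \dumfunc$ for either choice, this yields $\norm{\func - \visdecoder(\minmaxvis(\func))}_\infty \geq \norm{\func_1 - \func_2}_\infty / 2$, which is exactly the claim.

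There is essentially no technical obstacle here---the lemma is a repackaging of the triangle inequality---and the only point requiring (minor) care is the bookkeeping that the $\visdecoder(\minmaxvis(\func))$ appearing in the conclusion really is the common value $\dumfunc$, which is guaranteed precisely by the shared-visualization hypothesis. The payoff is that this lemma reduces every impossibility statement in the first two columns of \cref{tab:theory_overview} to a construction task: for the relevant function class $\funcsubset$ and a given $\func\in\funcsubset$, one only needs to exhibit a second $\func'\in\funcsubset$ with $\minmaxvis(\func')=\minmaxvis(\func)$ and $\norm{\func-\func'}_\infty$ at least the target lower bound (e.g., $\func'_\max - \func'_\min$ for \cref{fact:no-approx}), and then invoke this lemma with $\func_1 = \func$ and $\func_2 = \func'$.
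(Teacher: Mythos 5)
Your proposal is correct and is essentially identical to the paper's proof: both fix the common decoder output $\dumfunc$, apply the triangle inequality, and bound the average of the two errors by their maximum. The only cosmetic difference is that the paper evaluates at a point $x$ attaining $\norm{\func_1-\func_2}_\infty$ while you work with the sup norm directly, which is if anything slightly cleaner.
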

\begin{proof}[Proof of \cref{fact:two-examples}]
Let $\dumfunc = \visdecoder(\minmaxvis(\func_1)) = \visdecoder(\minmaxvis(\func_2))$ and let $x$ be such that $\abs{\func_1(x) - \func_2(x)} = \norm{\func_1-\func_2}_\infty$. Then, since mean is less than $\max$,
\*[
    \frac{1}{2}\abs{\func_1(x)-\func_2(x)}
    \leq \frac{1}{2}\abs{\func_1(x)-\dumfunc(x)} + \frac{1}{2}\abs{\dumfunc(x)-\func_2(x)}
    \leq \max_{\func\in\{\func_1,\func_2\}}  \abs{\func(x)-\dumfunc(x)}.
\]
\end{proof}

Then, for any $\funcsubset\subseteq \funcspace$ of interest and any $\func\in\funcsubset$, we simply have to find $\func_1,\func_2\in\funcsubset$ such that $\minmaxvis(\func)=\minmaxvis(\func_1)=\minmaxvis(\func_2)$ and $\norm{\func_1-\func_2}_\infty$ is appropriately large (where ``large'' will depend on $\func$). 

Similarly, we use the following lemma to prove results for the third column of \cref{tab:theory_overview}. 
\begin{lemma}\label{fact:two-examples-classify}
For any $\visdecoder\in\visdecoderspace$ and $\func_1,\func_2\in\funcspace$ such that $\minmaxvis(\func_1)=\minmaxvis(\func_2)$, for some $\func\in\{\func_1,\func_2\}$
\*[
    \norm{\ind{\func > \meanval_\func} - \ind{\visdecoder(\minmaxvis(\func)) > \meanval_\func} }_\infty 
    \geq \norm{\ind{\func_1>\meanval} - \ind{\func_2>\meanval}}_\infty,
\]
where $\meanval = \meanval_{\func_1} = \meanval_{\func_2}$.
\end{lemma}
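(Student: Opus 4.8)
The plan is to follow the template of \cref{fact:two-examples}, replacing its triangle-inequality step with a parity argument for the thresholded targets $\ind{\func > \meanval_\func}$. First I would record the consequences of the hypothesis $\minmaxvis(\func_1) = \minmaxvis(\func_2)$: the two functions share the same minimum and maximum value, so $\meanval_{\func_1} = \meanval_{\func_2}$; call this common value $\meanval$. Moreover, the decoder, fed the same argument, returns a single common function; call it $\dumfunc = \visdecoder(\minmaxvis(\func_1)) = \visdecoder(\minmaxvis(\func_2))$. With these identifications, the left-hand side of the claimed inequality for the choice $\func = \func_i$ is exactly $\norm{\ind{\func_i > \meanval} - \ind{\dumfunc > \meanval}}_\infty$.

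Second, I would dispose of the degenerate case. The right-hand side $\norm{\ind{\func_1 > \meanval} - \ind{\func_2 > \meanval}}_\infty$ is a supremum of quantities valued in $\{0,1\}$, so it is either $0$---in which case the inequality holds trivially for either choice of $\func$, since the left-hand side is a supremum of nonnegative quantities---or it equals $1$ and is \emph{attained}: there exists $x^\star$ with $\ind{\func_1(x^\star) > \meanval} \neq \ind{\func_2(x^\star) > \meanval}$. Picking the labelling so that $\ind{\func_1(x^\star) > \meanval} = 1$ and $\ind{\func_2(x^\star) > \meanval} = 0$, it remains to exhibit some $\func\in\{\func_1,\func_2\}$ with left-hand side $\geq 1$.

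Third, I would case-split on the value $\ind{\dumfunc(x^\star) > \meanval}\in\{0,1\}$. If it is $0$, then $\func = \func_1$ witnesses the bound: $\abs{\ind{\func_1(x^\star) > \meanval} - \ind{\dumfunc(x^\star) > \meanval}} = 1$, hence $\norm{\ind{\func_1 > \meanval} - \ind{\dumfunc > \meanval}}_\infty \geq 1$, which is at least the right-hand side since indicators differ by at most $1$. If it is $1$, then symmetrically $\func = \func_2$ witnesses the bound. In either case some $\func\in\{\func_1,\func_2\}$ achieves the claimed inequality, completing the argument.

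I do not expect a genuine obstacle: the whole content is that a single predictor $\dumfunc$, once thresholded at the \emph{common} midpoint $\meanval$, cannot agree with two functions that already straddle $\meanval$ at some point. The only things to be careful about are (i) checking that $\meanval$ and $\dumfunc$ are genuinely shared by $\func_1$ and $\func_2$ (immediate from $\minmaxvis(\func_1) = \minmaxvis(\func_2)$), and (ii) noting that a $\{0,1\}$-valued supremum equal to $1$ is attained, so a concrete witness $x^\star$ exists. This lemma then feeds the third column of \cref{tab:theory_overview} exactly as \cref{fact:two-examples} feeds the first two.
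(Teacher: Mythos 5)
Your proposal is correct and follows essentially the same route as the paper's proof: identify the common decoder output $\dumfunc$ and common midpoint $\meanval$, dismiss the case where the right-hand side is $0$, and otherwise case-split on $\ind{\dumfunc(x^\star) > \meanval}$ to show one of $\func_1,\func_2$ must disagree with the thresholded decoder at the witness point. Your explicit remark that a $\{0,1\}$-valued supremum equal to $1$ is attained is a small point the paper leaves implicit, but the argument is otherwise identical.
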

\begin{proof}[Proof of \cref{fact:two-examples-classify}]
Let $\dumfunc = \visdecoder(\minmaxvis(\func_1)) = \visdecoder(\minmaxvis(\func_2))$ and let $x$ be such that $\abs{\ind{\func_1(x)>\meanval} - \ind{\func_2(x)>\meanval}} = \norm{\ind{\func_1>\meanval} - \ind{\func_2>\meanval}}_\infty$.

If $\norm{\ind{\func_1>\meanval} - \ind{\func_2>\meanval}}_\infty = 0$ the result holds trivially, so suppose that $\norm{\ind{\func_1>\meanval} - \ind{\func_2>\meanval}}_\infty = 1$. That is, $\ind{\func_1(x)>\meanval} \neq \ind{\func_2(x)>\meanval}$. If $\ind{\dumfunc(x) > \meanval} = \ind{\func_1(x)>\meanval}$, then
\*[
    \norm{\ind{\func_2 > \meanval} - \ind{\visdecoder(\minmaxvis(\func_2)) > \meanval} }_\infty 
    \geq \abs{\ind{\func_2(x)>\meanval} - \ind{\dumfunc(x) > \meanval}}
    = 1.
\]
Otherwise, if $\ind{\dumfunc(x) > \meanval} = \ind{\func_2(x)>\meanval}$, then
\*[
    \norm{\ind{\func_1 > \meanval} - \ind{\visdecoder(\minmaxvis(\func_1)) > \meanval} }_\infty 
    \geq \abs{\ind{\func_1(x)>\meanval} - \ind{\dumfunc(x) > \meanval}}
    = 1.
\]
That is,
\*[
    \max_{\func\in\func_1,\func_2} \norm{\ind{\func > \meanval} - \ind{\visdecoder(\minmaxvis(\func)) > \meanval} }_\infty
    \geq 1
    = \norm{\ind{\func_1>\meanval} - \ind{\func_2>\meanval}}_\infty.
\]
\end{proof}

\subsubsection{Proof of \cref{fact:trivial-approx}}

Let $\visdecoder(x_\min, x_\max, \func_\min, \func_\max) \equiv (1/2) (\func_\max+\func_\min)$ and fix $\func\in\funcspace$.
For any $x$,
\*[
    \func(x) - \frac{\func_\max+\func_\min}{2}
    \leq \func_\max - \frac{\func_\max+\func_\min}{2}
    = \frac{\func_\max-\func_\min}{2}
\]
and
\*[
    \frac{\func_\max+\func_\min}{2} - \func(x)
    \leq \frac{\func_\max+\func_\min}{2} - \func_\min
    = \frac{\func_\max-\func_\min}{2}.
\]
Thus,
\*[
    \abs{\func(x) - \frac{\func_\max+\func_\min}{2}}
    \leq \frac{\func_\max-\func_\min}{2}.
\]
Since $x$ was arbitrary, the result holds.
\manualendproof

\subsubsection{Proof of \cref{fact:no-approx}}
First, suppose $0 \leq x_\min < x_\max \leq 1$.

Define
\*[
    \func_1(x)
    =
    \begin{cases}
    \func_\min & x \in [0,(x_\min+x_\max)/2] \\
    \frac{2(\func_\max-\func_\min)}{x_\max-x_\min} x + \frac{2 \func_\min x_\max - \func_\max x_\min - \func_\max x_\max}{x_\max - x_\min} & x \in [(x_\min+x_\max)/2, x_\max] \\
    \func_\max & x \in [x_\max,1]
    \end{cases}
\]
and
\*[
    \func_2(x)
    =
    \begin{cases}
    \func_\min & x \in [0,x_\min] \\
    \frac{2(\func_\max-\func_\min)}{x_\max-x_\min} x + \frac{\func_\min x_\max + \func_\min x_\min - 2 \func_\max x_\min}{x_\max - x_\min} & x \in [x_\min, (x_\min+x_\max)/2] \\
    \func_\max & x \in [(x_\min+x_\max)/2,1].
    \end{cases}
\]

Since $\minmaxvis(\func_1) = \minmaxvis(\func_2) = \minmaxvis(\func)$, and $\func_1$ and $\func_2$ are both monotone and piecewise affine, the result follows for $\monofuncs$ and $\plinfuncs$ from applying \cref{fact:two-examples} and observing that $\norm{\func_1-\func_2}_\infty \geq \func_\max - \func_\min$ (this occurs at $(x_\min+x_\max)/2$).

If $0 \leq x_\max < x_\min \leq 1$, the same argument applies with $1-\func_1$ and $1-\func_2$.

Finally, when $x_\min = x_\max$, then $\func_\max - \func_\min = 0$ so the result holds trivially.

To prove the result for $\nnfuncs$, note that we imposed no constraints on $\func_\max$ or $\func_\min$. Thus, for any $\func\in\nnfuncs$, we can construct $\func_1$ and $\func_2$. We then use that any piecewise affine function can be exactly represented by a sufficiently large neural network \citep{arora18relu,chen22piecewise} to conclude $\func_1,\func_2\in\nnfuncs$.

The same argument applies to prove the result for $\funcspace$, since clearly $\func_1,\func_2\in\funcspace$.

Finally, for $\ermfuncs$, we must construct appropriate distributions with conditional means $\func_1$ and $\func_2$ respectively (we already noted these are both elements of $\nnfuncs$). 
For simplicity, define the joint distribution by $X \sim \mathrm{Unif}(\featurespace)$ and $Y | X \sim \mathrm{Ber}(\func_j(X))$ for $j\in\{1,2\}$.
\manualendproof

\subsubsection{Proof of \cref{fact:no-classify}}
We use $\func_1$ and $\func_2$ from \cref{fact:no-approx}, and recall that $\meanval = (\func_\min+\func_\max)/2$. Consider when $0 \leq x_\min < x_\max \leq 1$. Then, at $x = (x_\min+x_\max)/2$, $\func_1(x) = \func_\min < \meanval$ and $\func_2(x) = \func_\max > \meanval$, so $\norm{\ind{\func_1>\meanval} - \ind{\func_2>\meanval}}_\infty = 1$. The result then follows by \cref{fact:two-examples-classify}. If $0 \leq x_\max < x_\min \leq 1$, the same argument applies with $1-\func_1$ and $1-\func_2$.

For $\convfuncs$, we use $\func_1$ and $\func_2$ from the proof of \cref{fact:no-approx-convex} (\cref{sec:no-approx-convex}).  
Recall that $\meanval = (\func_\min+\func_\max)/2$. 
Consider when $x_\max = 1$ and $x_\min<1$. Then, at $x = x_\min/4 + 3/4$, $\func_1(x) = \func_\min/4 + 3 \func_\max / 4 > \meanval$ and $\func_2(x) = \meanval$, so $\norm{\ind{\func_1>\meanval} - \ind{\func_2>\meanval}}_\infty = 1$. The result then follows by \cref{fact:two-examples-classify}. If $x_\min = 0$ and $x_\max > 0$, the same argument applies using $\func_1'$ and $\func_2'$ as defined in \cref{sec:no-approx-convex}.

If $x_\min=x_\max$ then $\func_\min=\func_\max$ and hence $\func$ is constant, so the result holds trivially.
\manualendproof

\subsubsection{Proof of \cref{fact:no-approx-convex}}\label{sec:no-approx-convex}

Note that for any $\func\in\convfuncs$, one of $x_\min$ or $x_\max$ are in $\{0,1\}$.

First, consider $x_\max = 1$ and $x_\min<1$.
Define
\[\label{eq:conv-1}
    \func_1(x)
    =
    \begin{cases}
    \func_\min & x \in [0,x_\min] \\
    \frac{\func_\max-\func_\min}{1-x_\min} x + \frac{\func_\min - \func_\max x_\min}{1 - x_\min} & x \in [x_\min, 1]
    \end{cases}
\]
and
\[\label{eq:conv-2}
    \func_2(x)
    =
    \begin{cases}
    \func_\min & x \in [0,(x_\min+1)/2] \\
    \frac{2(\func_\max-\func_\min)}{1-x_\min} x + \frac{2 \func_\min - \func_\max x_\min - \func_\max}{1 - x_\min} & x \in [(x_\min+1)/2, 1].
    \end{cases}
\]
Clearly, $\func_1,\func_2 \in \convfuncs$ (since they are flat then linear with positive slope) and $\norm{\func_1-\func_2}_\infty \geq (\func_\max-\func_\min)/2$ (this occurs at $x=(x_\min+1)/2$).
Since $x_\min=0$ implies that $x_\max = 1$ by convexity, this case also covers $x_\min = 0$.

Second, consider $x_\max = 0$ and $x_\min > 0$.
Using \cref{eq:conv-1,eq:conv-2}, define $g_1(x) = f_1(1-x)$ and $g_2(x) = f_2(1-x)$.
These also satisfy $g_1,g_2 \in \convfuncs$ (since they are linear with negative slope then flat) and $\norm{g_1-g_2}_\infty \geq (\func_\max-\func_\min)/2$ (this occurs at $x=x_\min/2$).
Since $x_\min=1$ implies that $x_\max = 0$ by convexity, this case also covers $x_\min = 1$.

Finally, if $x_\min=x_\max$ then $\func_\min=\func_\max$ and the result holds trivially.
\manualendproof

\subsubsection{Proof of \cref{fact:no-approx-lipschitz}}
When $2\abs{\func_\max-\func_\min} \leq L \abs{x_\max - x_\min}$, the proof of \cref{fact:no-approx} applies since $\func_1, \func_2 \in \lipfuncs{\lipconst}$.

Otherwise, suppose that $0 \leq x_\min < x_\max \leq 1$.
Define
\*[
    \func_1(x)
    =
    \begin{cases}
    \func_\min & x \in [0,x_\min] \\
    \frac{\func_\max-\func_\min}{x_\max-x_\min} x + \frac{\func_\min x_\max - \func_\max x_\min}{x_\max - x_\min} & x \in [x_\min, x_\max] \\
    \func_\max & x \in [x_\max,1]
    \end{cases}
\]
and
\*[
    \func_2(x)
    =
    \begin{cases}
    - \lipconst (x-x_\min) + \func_\min & x \in [0,x_\min] \\
    \frac{\func_\max-\func_\min}{x_\max-x_\min} x + \frac{\func_\min x_\max - \func_\max x_\min}{x_\max - x_\min} & x \in [x_\min, x_\max] \\
    - \lipconst(x - x_\max) + \func_\max & x \in [x_\max,1].
    \end{cases}
\]

Recall that by definition of $\func\in\lipfuncs{\lipconst}$, $\abs{\func_\max-\func_\min} \leq L \abs{x_\max-x_\min}$, so $\func_1,\func_2\in\lipfuncs{\lipconst}$.

If $x_\min > 1-x_\max$, then $\norm{\func_1-\func_2}_\infty \geq L x_\min$ (which is realized at $x=0$), and otherwise $\norm{\func_1-\func_2}_\infty \geq L (1-x_\max)$ (which is realized at $x=1$).

If $0 \leq x_\max < x_\min \leq 1$, the same argument applies with $1-\func_1$ and $1-\func_2$.
\manualendproof

\subsubsection{Proof of \cref{fact:no-classify-lipschitz}}
Since $2\abs{\func_\max-\func_\min} \leq L \abs{x_\max - x_\min}$, the proof of \cref{fact:no-classify} applies because $\func_1, \func_2 \in \lipfuncs{\lipconst}$.
\manualendproof

\subsubsection{Proof of \cref{fact:no-approx-affine}}\label{sec:no-approx-affine}
When $\featuredim=2$, any $\func\in\affinefuncs{\featuredim=2}$ satisfies $\func(x) = a x_1 + b x_2 + c$ for some $a,b,c\in\Reals$.
Given $\minmaxvis(\func) = (x_\min, \func_\min, x_\max, \func_\max)$, the compatible $\func\in\affinefuncs{\featuredim=2}$ are those $\func_c$ such that
\*[
    a
    &= \frac{x_{\max,2} \func_\min - x_{\min,2} \func_\max + (x_{\min,2} - x_{\max,2}) c}{x_{\min,1} x_{\max,2} - x_{\max,1} x_{\min,2} } \\
    b
    &= \frac{-x_{\max,1} \func_\min + x_{\min,1} \func_\max + (x_{\max,1} - x_{\min,1}) c}{x_{\min,1} x_{\max,2} - x_{\max,1} x_{\min,2} },
\]
where $c$ is a free parameter (with the only constraint that $\func_c(x) \in [0,1]$ for all $x\in\featurespace$).

Since $\func$ is affine, $x_\min$ and $x_\max$ must both occur at one of the four corners of $[0,1]^2$. Note that $a$ and $b$ above are undefined for some of these combinations, which we now enumerate.

If $x_\min = x_\max$, then $\func$ is constant and can be recovered exactly.
If $x_\min = (0,0)$, then necessarily $c=\func_\min$, so $\func$ can be recovered exactly. Similarly, if $x_\max = (0,0)$ then necessarily $c=\func_\max$.

Moreover, there are other cases where $\func$ can be recovered.
If $x_\min = (1,1)$ and $x_\max \in \{(1,0), (0,1)\}$, then one of $a$ or $b$ do not depend on $c$ and hence $c$ can be directly recovered. The same is true when $x_\max = (1,1)$.

There are two possibilities left. 

1) $x_\min = (0,1)$ and $x_\max = (1,0)$:
\*[
    a
    &= \func_\max - c \\
    b
    &= \func_\min - c.
\]
Take $c_1 = \func_\min$ and $c_2 = \func_\max$ to get
\*[
    \func_1(x) = (\func_\max - \func_\min)x_1 + \func_\min
\]
and
\*[
    \func_1(x) = (\func_\min - \func_\max)x_2 + \func_\max.
\]
These still have $\minmaxvis(\func) = \minmaxvis(\func_1) = \minmaxvis(\func_2)$, but $\norm{\func_1 - \func_2}_\infty \geq \func_\max - \func_\min$ (this is realized at $x=(1,1)$).

2) $x_\min = (1,0)$ and $x_\max = (0,1)$:
\*[
    a
    &= \func_\min - c \\
    b
    &= \func_\max - c.
\]
Take $c_1 = \func_\min$ and $c_2 = \func_\max$ to get
\*[
    \func_1(x) = (\func_\max - \func_\min)x_2 + \func_\min
\]
and
\*[
    \func_1(x) = (\func_\min - \func_\max)x_1 + \func_\max.
\]
These still have $\minmaxvis(\func) = \minmaxvis(\func_1) = \minmaxvis(\func_2)$, but $\norm{\func_1 - \func_2}_\infty \geq \func_\max - \func_\min$ (this is again realized at $x=(1,1)$).
The result holds by then applying \cref{fact:two-examples}.
\manualendproof

\subsubsection{Proof of \cref{fact:no-classify-affine}}
This follows directly from applying \cref{fact:two-examples-classify} to the functions constructed in the proof of \cref{fact:no-approx-affine} (\cref{sec:no-approx-affine}).
\manualendproof

\subsubsection{Proof of \cref{fact:positive}}
First suppose that $\func\in\affinefuncs{\featuredim=1}$. That is, there exists $a,b$ such that $\func(x) = ax + b$ for all $x$. Given $\minmaxvis(\func)$, define
\*[
   a_\func = \frac{\func_\max - \func_\min}{x_\max - x_\min}
\]
and
\*[
    b_\func = \frac{x_\max\func_\min - x_\min\func_\max}{x_\max-x_\min}.
\]
Set $\visdecoder(\minmaxvis(\func)) = [x \mapsto a_\func x + b_\func]$. Since there is a unique affine function passing through both $(x_\min,\func_\min)$ and $(x_\max,\func_\max)$, and $\visdecoder(\minmaxvis(\func))$ is an affine function passing through both of these, this implies that $\visdecoder(\minmaxvis(\func)) \equiv \func$.

If $\func\in\constfuncs$, then there exists $y \in [0,1]$ such that $\func_\min=\func_\max = y$ and $\func(x) = y$ for all $x$. Define $\visdecoder(\minmaxvis(\func)) \equiv \func_\min$, which implies that $\visdecoder(\minmaxvis(\func)) \equiv \func$. 
\manualendproof

\subsection{Can we move beyond worst-case analyses?}
\label{app:beyond_worstcase}

The theoretical results above are of a worst-case nature (for every claim, we construct a counterexample to refute the claim).
Of course, another interesting question to ask is: How likely are these counterexamples to appear in reality when we generate the function $f$ by training a neural net using SGD? Here, we briefly discuss our results in the context of this average-case perspective.

On the one hand, a strength of our counterexamples is that they can be realized by very simple functions, and hence we are not ``cherry-picking'' convoluted functions that SGD will not learn. Similarly, our counterexamples can be easily extended to a family of functions that are rich in the space of all functions in the model class, again in contrast to having only a single, unrealistic function that works as a counterexample.
On the other hand, it is very possible that while these counterexamples are easily learned by SGD, they are more or less likely depending on, say, the specific training data. Unfortunately, to answer this question, it seems one would have to have a rather refined understanding of the distribution of models learned from training on realistic data via SGD, which would resolve some very large open problems in learning theory along the way. 
We pose it as an open problem to extend our approach of proving negative results for feature visualization by incorporating the learning algorithm.

\subsection{Relationship of theoretical results to psychophysical experiments}
\label{app:relationship_to_experiments}
\citet{borowski2021exemplary} and \citet{zimmermann2021well} performed psychophysical experiments to investigate the tness of feature visualizations for human observers. Both papers find that natural highly activating images are more interpretable (as measured by human prediction performance) compared to feature visualizations. A candidate explanation for this behaviour is our analysis in \cref{sec:viz_vs_natural_analysis}, showing that for last-layer Inception-V1 feature visualizations, those visualizations are processed along very different paths for most of the network (compared to natural images as a baseline).

The task used by \citet{borowski2021exemplary} is related to the third column of \cref{tab:theory_overview}. They asked participants to predict which one of two natural images is strongly activating for a certain unit based on maximally and minimally activating feature visualizations for that unit. This can be seen as an easier version of the task in \cref{tab:theory_overview}: \citet{borowski2021exemplary} did not use random test samples but instead two curated samples, out of which one has extremely high and one has extremely low activations. They find that that humans are able to do this task above chance.  

At first glance, this result seems to contradict our theoretical finding from \cref{fact:no-classify}, which states that reliable prediction is impossible unless additional assumptions about the function are known. However, there is no contradiction: our theory allows for the possibility of a specific function (e.g., a specific neural network unit) and a specific decoder (e.g., a human observer) to be aligned in the sense that predictions about the function can happen to be correct---however, for every function $f$ for which the decoder is correct there is a function $f'$ from the same function family for which the decoder is wrong (in spirit, a case of ``no free lunch'' for feature visualization). If an observer gets significantly above chance in one case, they would pay the price by being significantly below chance in the other case. To the best of our knowledge, there is currently no way for the observer to know in advance whether they're visualizing a function $f$ for which their decoding is aligned or a function $f'$ for which their decoding leads to the wrong conclusions. It is an interesting open question to develop a practical and rigorous approach to distinguish these cases, perhaps relying on additional information such as the data distribution (e.g., does ImageNet lead to benign $f$ more often?) and the training procedure (e.g., does SGD lead to benign $f$ more often?).

\section{Method details}
\label{app:method_details}
\subsection{Classifier training (\cref{subsec:fooling_circuit})}
\label{app:classifier_methods}
For classifier training, we create a dataset by combining $1,281,167$ images from the training set of the ImageNet 2012 dataset and $472,500$ synthetic images. These synthetic images are the (intermediate) results of the feature visualization optimization process. Specifically, for $1,000$ classification units in the last layer of an ImageNet-trained InceptionV1 network, we run the optimization process with the parameters used by \citep{olah2017feature} $35$ times each, resulting in $35,000$ unique optimization trajectories. We logarithmically sample $15$ (intermediate) steps from the optimization trajectory, resulting in $525,000$ synthetic images in total. Finally, we split the synthetic images into $472,500$ $(=90\%)$ training and $52,500$ $(=10\%)$ testing images. Note that we use different units for the two sets. We train a model implementing the simple six layer CNN architecture displayed in \cref{tab:detector_architecture} for $8$ epochs on the aforementioned dataset with an SGD optimizer using a learning rate of $0.01$, momentum of $0.9$ and weight decay of $0.00005$. The classifier achieves a near-perfect accuracy of $99.49\%$ on the held-out test set ($99.66\%$ and $99.31\%$ for natural input and feature visualizations, respectively).

\begin{table}[H]
    \centering
    \begin{tabular}{cccc}
        \toprule
        Type & Size/Channels & Activation & Stride \\
        \midrule
        Conv $3 \times 3$ & 16 & ReLU & $3$ \\
        Conv $5 \times 5$ & 16 & ReLU & $2$ \\
        Conv $5 \times 5$ & 16 & ReLU & $2$ \\
        Conv $5 \times 5$ & 16 & ReLU & $2$ \\
        Conv $5 \times 5$ & 16 & ReLU & $2$ \\
        Conv $3 \times 3$ & 16 & ReLU & $2$ \\
        Flatten & - & - & - \\
        Linear & 1 & - & - \\
        \bottomrule
    \end{tabular}
    \caption{Architecture of the classifier used to detect feature visualizations.}
    \label{tab:detector_architecture}
\end{table}

\subsection{Feature visualization figures (\cref{sec:manipulating_visualizations})}
\label{app:fv_figure_methods}
Throughout the paper, feature visualizations were generated using the \texttt{lucent} library \citep{greentfrapp2023lucent}, version v0.1.8. Per default, we used \texttt{thresholds=(512,)} except for \cref{fig:visualization_trajectory} where the five images at different points in the optimization trajectory are shown (specifically, \texttt{thresholds=(1,8,32,128,512)}). For \cref{fig:arbitrary_visualizations}, we used the thresholds that visually looked best (in line with existing literature: there is no principled way to determine the threshold); specifically \texttt{thresholds=(512,512,512,6,32,6)} for the six visualizations from left to right (for the three rightmost images, higher thresholds produced qualitatively similar yet oversaturated images). In terms of transformations during feature visualization, \texttt{transforms=lucent.optvis.transform.standard_transforms + [center_crop(224, 224)]} was used. The image was parameterized via \texttt{param_f=lambda: lucent.optvis.param.image(224, batch=1)}.

For \cref{fig:arbitrary_visualizations}, a natural image was embedded into the weights of a single convolutional layer, \texttt{torch.nn.Conv2d}, with \texttt{kernel_size=224, stride=1, padding=0, dilation=1, groups=1, bias=True, padding_mode=`zeros')}. To this end, the image was loaded and the layer weights were set to the corresponding image values, divided by $224^2$ to avoid a potential overflow. Architecturally, the layer received the standard image input and its ouptut was appended to the output of the desired layer (\eg \texttt{softmax2_pre_activation_matmul}) where it was used in the role of $D$ from \cref{fig:fooling_circuit}.

    \begin{figure*}[ht]
        \centering
        \includegraphics{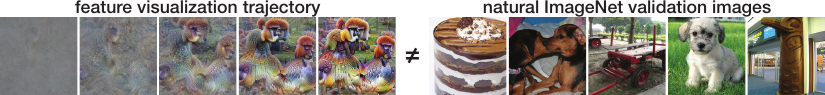}
        \caption{\textbf{Natural vs.\ synthetic distribution shift.} There is a clear distribution shift between feature visualizations (left) and natural images (right). This can be exploited by a classifier when building a fooling circuit. Visualizations at different steps in the optimization process for a randomly selected unit in the last layer of standard, unmanipulated Inception-V1; randomly selected ImageNet validation samples (excluding images containing faces).}
        \label{fig:visualization_trajectory}
    \end{figure*}

\subsection{Sanity check (\cref{sec:viz_vs_natural_analysis})}
\label{app:additional_similarities}
\paragraph{Motivation: relationship between path similarity and Spearman correlation.} In \cref{sec:viz_vs_natural_analysis}, we describe that different processing paths lead to different activation similarity as measured through Spearman correlation. We here attempt to explain this relationship in a bit more detail. For the context of our analysis, we define a path as a (sub-)graph of a directed acyclic graph (DAG, a neural network or sub-network in our case), starting at the input nodes (first layer units) and ending at a single unit (the unit for which the analysis is performed). How can we quantify the overlap between two different paths, layer by layer? If a node is in the subgraph forming the path, the node is assigned a value of 1; if it is not, it is assigned a value of 0. Then, layer-wise overlap can be quantified by the Spearman correlation, which is exactly zero if there is only chance overlap, exactly 1.0 if there is perfect overlap, and exactly -1.0 if the units in a certain layer (corresponding to two different paths) are perfectly anticorrelated. Similarly, in the non-binary case (such as a path formed by activation patterns for natural images vs.\ feature visualization images, which is what we consider for the similarity analysis), the values assigned to the node are simply the activations, and the same analysis can be applied. Since we only care about the path similarity and not about whether this similarity is a linear relationship, Spearman's rank-order correlation is the correct measure to use here (while the Pearson correlation, plotted in \cref{fig:Pearson_normalized_similarity} for comparison, is a measure of a linear relationship).

\paragraph{Methods.}
Performing a full comparison is computationally expensive: Inception-V1's largest layer (\texttt{conv2d0_pre_relu_conv}) contains 802,816 values; computing the Spearman correlation for this takes about one third of a second. Inception-V1 has 138 layers and sub-layers (see \cref{fig:Spearman_absolute_similarity} x-axis labels for a list). Even if we just consider the comparison between natural images vs.\ natural images of a different class, for the ImageNet-1K validation set this amounts to $138$ (= number of layers$) \cdot 50 \cdot 50$ (= number of comparisons between two specific classes of the ImageNet validation split) $\cdot \frac{1000 \cdot 1001}{2} - 1000$ (= number of comparisons when comparing each class with each other class except for itself) $= 172,327,500,000$ comparisons. With 3 comparisons per second, this amounts to about \emph{eighteen hundred years} required to do the full comparison. In order to make this more feasible, we chose the following approach. We randomly selected 10 classes via \texttt{numpy.random.seed(42); 
randomly_selected_class_indices = sorted(numpy.random.choice(1000, 10))}, resulting in \texttt{randomly_selected_class_indices=[20, 71, 102, 106, 121, 270, 435, 614, 700, 860]} and obtained 10 feature visualizations per class. Images that were not correctly classified by the model (wrong top-1 classification) were excluded from the comparison since those images do not constitute natural images for which the corresponding unit is selective for.

From this point onward, when computing the Spearman and Pearson correlations, we only performed every $10$th comparison for natural images vs.\ natural images of the same class; every 100th comparison for natural images vs.\ natural images of a different class, and every 5th comparison for natural images vs.\ feature visualizations of the same class. For Cosine similarity (which is much faster), we performed every single comparison.

\paragraph{Raw and normalized plots.}
For each metric, \emph{absolute} values are plotted in \cref{fig:Spearman_absolute_similarity,fig:Cosine_absolute_similarity,fig:Pearson_absolute_similarity}. For \cref{fig:Spearman_normalized_similarity,fig:Cosine_normalized_similarity,fig:Pearson_normalized_similarity}, \emph{normalized} values are plotted. For these plots, we normalized the data according to the raw absolute values, \ie  such that natural images vs.\ natural images of the same class is set to 1.0 and natural images vs.\ natural images of a different class is set to 0.0 since it makes sense to interpret similarity results relative to those two extreme baselines. A tiny number of layers was excluded from the normalized comparison if the green and black points from the absolute plots differed by strictly less than a threshold of 0.01. To reduce noise in the orange curve (natural images vs.\ feature visualizations of the same class), we smoothed the curve by convolving it with \texttt{scipy.ndimage.convolve} using a window size of 7 and the following uniform weights: \texttt{np.ones(windowsize)/windowsize}. The shaded blue area corresponds to the standard deviation of the orange data points, convolved over a window of of size \texttt{std_windowsize}=5 which is then (for the lower bound of the blue area) subtracted from the orange curve, and (for the upper bound of the blue area) added to the orange curve; thus in total the blue area area vertically extends two standard deviations. The idea behind this is to give a rough visual estimate of the standard deviation range that the orange values have at certain points throughout the network. By itself, it does not provide an indication of statistical significance.

\begin{figure}[H]
\centering
\includegraphics[width=\textwidth]{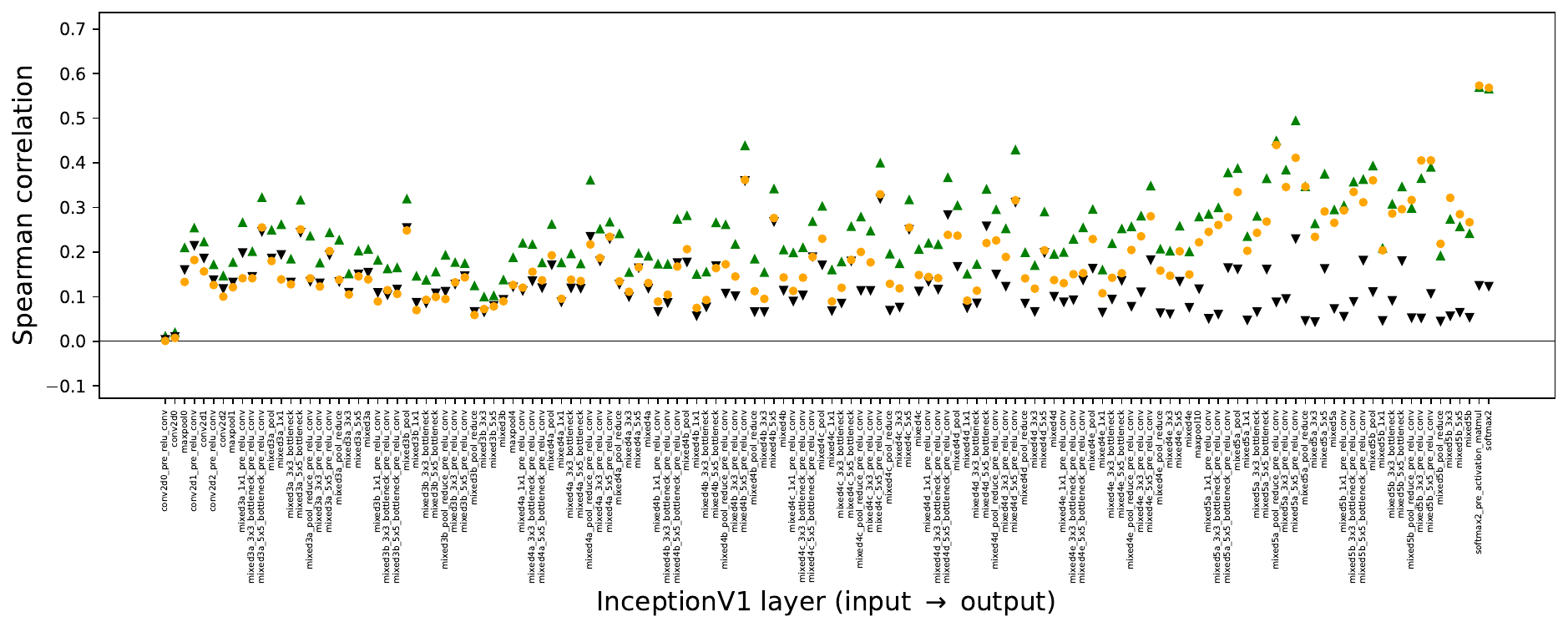}
\caption{Absolute similarity (Spearman).}
\label{fig:Spearman_absolute_similarity}
\end{figure}

\begin{figure}[H]
\centering
\includegraphics[width=\textwidth]{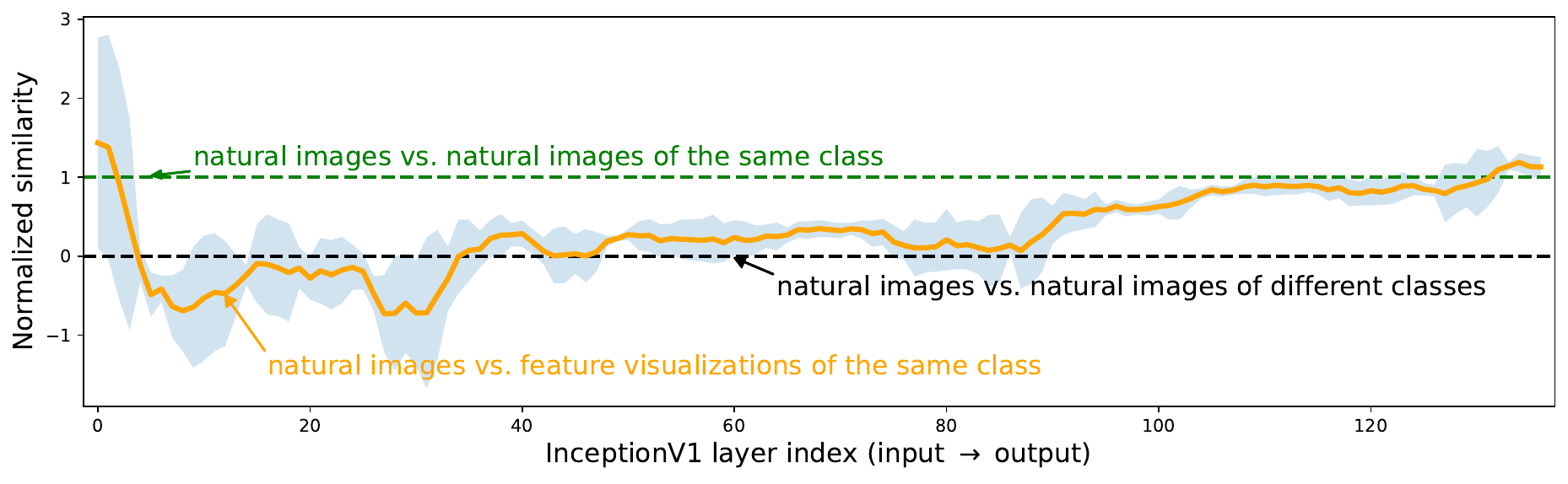}
\caption{Normalized similarity (Cosine).}
\label{fig:Cosine_normalized_similarity}
\end{figure}

\begin{figure}[H]
\centering
\includegraphics[width=\textwidth]{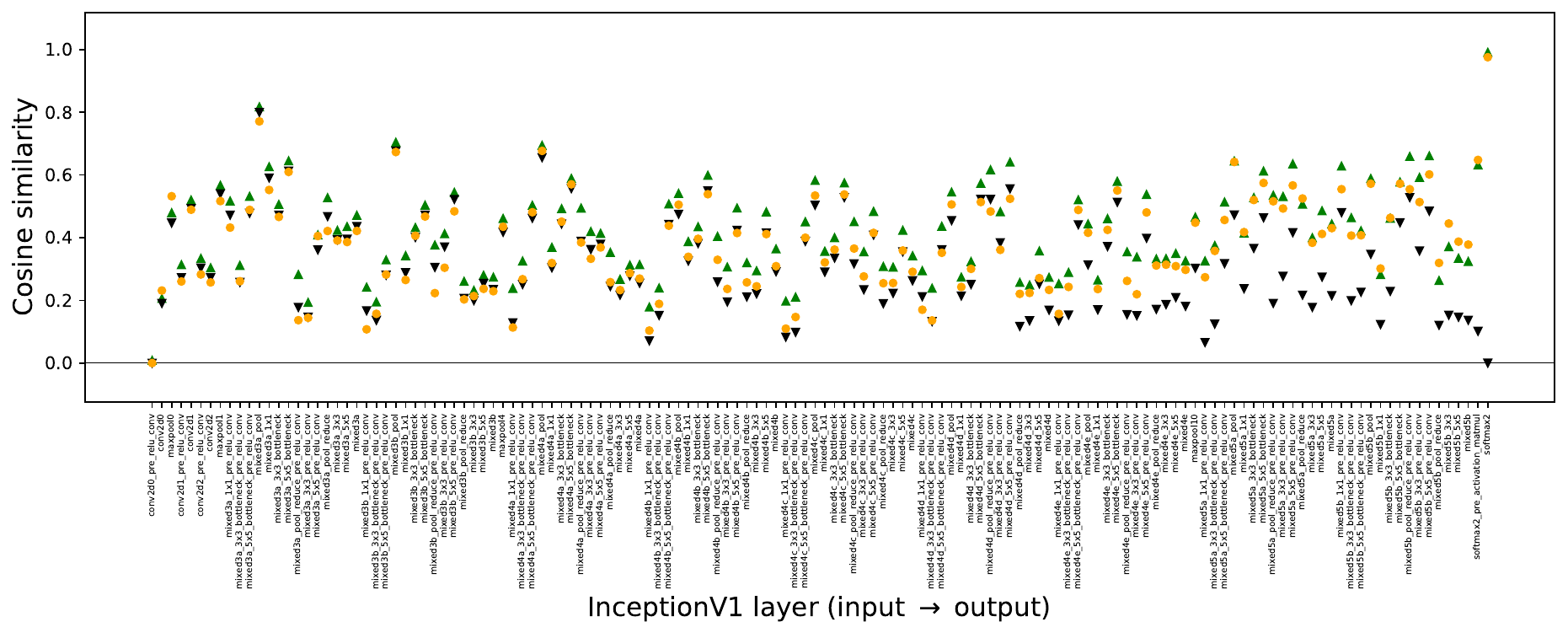}
\caption{Absolute similarity (Cosine).}
\label{fig:Cosine_absolute_similarity}
\end{figure}

\begin{figure}[H]
\centering
\includegraphics[width=\textwidth]{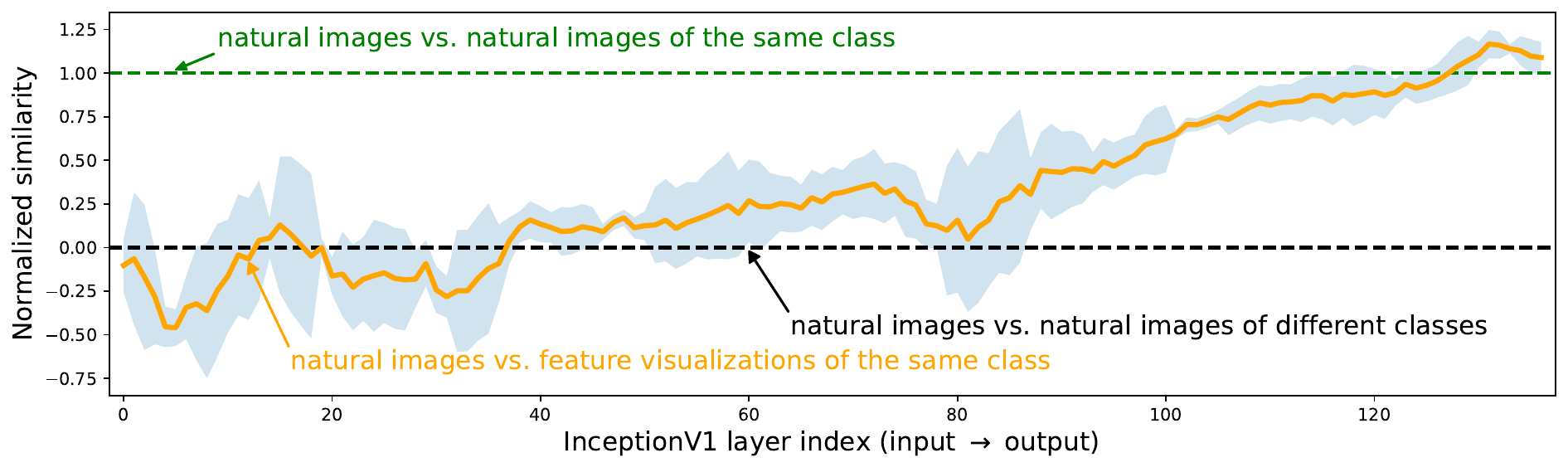}
\caption{Normalized similarity (Pearson).}
\label{fig:Pearson_normalized_similarity}
\end{figure}

\begin{figure}[H]
\centering
\includegraphics[width=\textwidth]{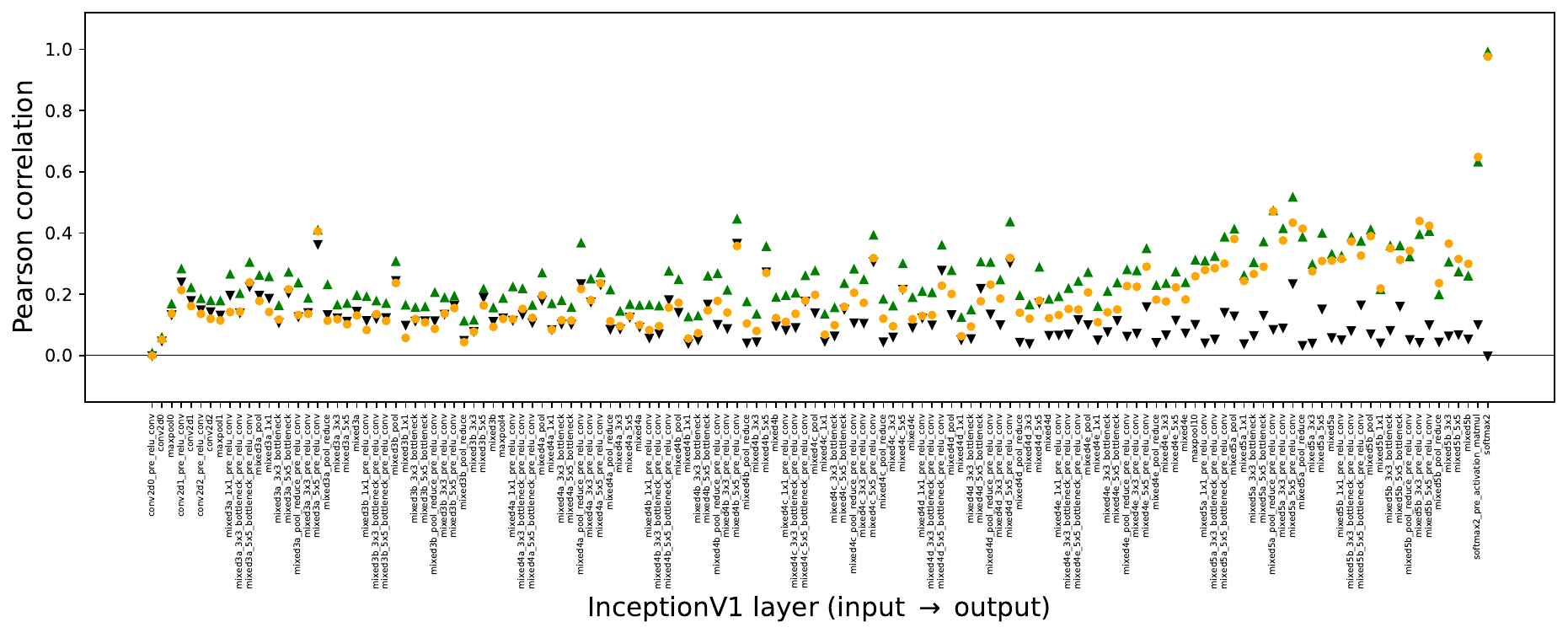}
\caption{Absolute similarity (Pearson).}
\label{fig:Pearson_absolute_similarity}
\end{figure}

\section{Are more linear units easier to interpret?}
\label{sec:linearity}
    Our theory makes a prediction: the simpler a function is, the easier it should be to interpret given a feature visualization. We here empirically test this prediction. As a measure of simplicity, we use \emph{path linearity}: based on a highly activating natural image (start point), we perform feature visualization to arrive at a highly activating optimized image (end point). We then analyze how much the optimization path deviates from linearity by measuring the angle between gradients of the first $n$ steps of the path. This metric (across many such paths) is then compared to human experimental data from \citet{zimmermann2023scale} for the same units, who measured how well humans can interpret different units in Inception-V1. Intriguingly, we find a significant correlation (Spearman's $r=-.36$, $p=.001$) between this human interpretability score and our path linearity measure (lower angle means higher interpretability) especially for the beginning of the trajectory (\eg $n=2$); a plot can be found below. Overall, we interpret this as preliminary evidence in favor of the hypothesis that more linear units, at least at the beginning of the optimization trajectory, might be easier to interpret. An interesting direction for future work would be to enforce higher degrees of linearity, for instance through regularization or the architecture. This is one example of how our theory might be used to develop hypotheses for better feature visualizations.

\paragraph{Methods.}

To measure the interpretability of a unit we use the  experimental data provided by \citet{zimmermann2023scale}. Based on the experimental paradigm by \citet{borowski2021exemplary} and \citet{ zimmermann2021well}, \citet{zimmermann2023scale} tested how well humans can differentiate maximally and minimally activating images for individual units of a CNN when supported with explanations in the form of feature visualizations (see~\cref{app:relationship_to_experiments} for details). We use the experimental data of $84$ units as well as the $M=20$ maximally activating natural dataset samples (from ImageNet) they provided.

\paragraph{Quantifying degree of nonlinearity through gradient angles.}
To measure the linearity of a unit we compute the following quantity for each unit: We start from a maximally activating dataset sample $x^s_i$ and perform feature visualization to iteratively optimize this image to further increase the unit's activation. We use standard hyperparameters and optimize for $N=512$ steps. During optimization we record the normalized gradients with respect to the current image $(\hat g_j(x^s_i))_{j=1, \ldots, N}$ and compute the angle between successive steps:
\begin{align}
    \forall j=1, \ldots, N-1: \quad a_j(x^s_i) := \measuredangle (\hat g_j(x_i), \hat g_{j+1}(x^s_s)).
\end{align}
We take the average over all $M$ maximally activating images and denote the \emph{average gradient path angle} as:
\begin{align}
    \forall j=1, \ldots, N-1: \quad \operatorname{AGPA}_j := \frac{1}{M} \sum_{i=1}^M a_j(x^s_i).
\end{align}
Finally, we denote the average of the average gradient path angle $\operatorname{AGPA}$ as the average gradient angle:
\begin{align}
    \operatorname{AGA} = \frac{1}{N-1} \sum_{i=1}^{N-1} \operatorname{AGPA}_i.
\end{align}

\begin{figure}[tbh]
    \centering
    \begin{subfigure}[t]{0.49\linewidth}
        \centering
        \includegraphics[width=\linewidth]{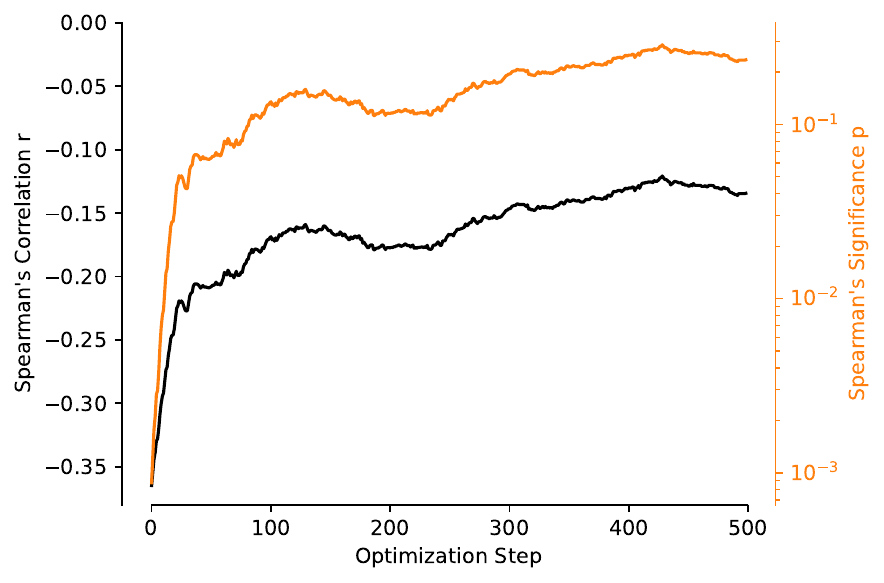}
        \caption{Average gradient path angle $\operatorname{AGPA}_k$. Strikingly, only the first few gradient angles are strongly and significantly correlated with the units' interpretability.}
        \label{fig:googlenet_lineary_analysis_gradient_angle}
    \end{subfigure}
    \hfill
    \begin{subfigure}[t]{0.49\linewidth}
        \centering
        \includegraphics[width=\linewidth]{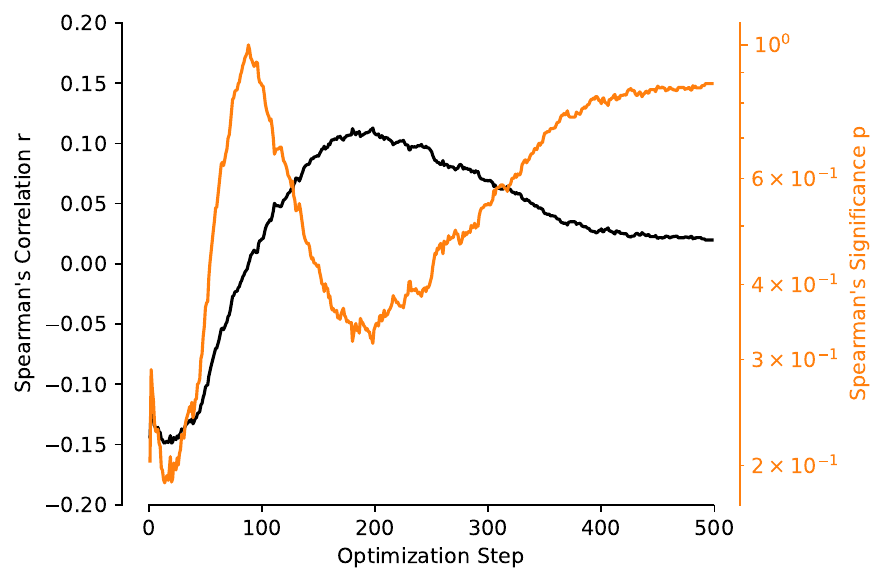}
        \caption{Average line distance path $\operatorname{AGLD}_k$. Interestingly, while we see a anti-correlation at the beginning (cf.~\cref{fig:googlenet_lineary_analysis_gradient_angle}), this turns into a weak correlation. However, these correlations are not significant.}
        \label{fig:googlenet_lineary_analysis_line_distance}
    \end{subfigure}
    \caption{Development of Spearman's rank correlation between the units' interpretability score and the (a) average gradient path angle $\operatorname{AGPA}_k$ and (b) average line distance path $\operatorname{ALDP}_k$ as a function of the number of optimization steps $k$ to consider.}
    \label{fig:googlenet_lineary_analysis}
\end{figure}

To answer our initial question---whether simpler/more linear units are more interpretable---we now measured the rank correlation between the average gradient angle and the interpretability scores by \citet{zimmermann2023scale} based on the paradigm of \citet{zimmermann2021well}. Intriguingly, we find a significant correlation (Spearman's $r=-.36$, $p=.001$) between this human interpretability score and our path linearity measure (lower angle means higher interpretability) for the beginning of the trajectory (\eg $n=2$). Linearity at later steps in the trajectory does not seem to contribute much to human interpretability, thus increasing $n$ to include all 512 steps decreases the overall correlation. The results depending on path length are plotted in  \cref{fig:googlenet_lineary_analysis_gradient_angle}. 

\paragraph{Quantifying degree of nonlinearity through deviations from linear interpolation.}
There are many different ways that could be used to measure path or unit linearity. As a more global measure, we also tested another one: Here, we begin by computing a linear interpolation between the maximally activation data samples we initialize the optimization with $x^s_i$ and the final visualization $x^f_i$:
\begin{align}
    \{\, z \mid x^s_i + \alpha (x^f_i - x^s_i) \quad \forall \alpha \in [0, 1] \,\}.
\end{align}
Next, for each step $j$ of the optimization process we compute the distance of the current image $x_j(x^s_i)$ to to the linear interpolation
\begin{align}
    d_j(x^s_i) = d\left( x_j((x^s_i)), \{\, z \mid x^s_i + \alpha (x^f_i - x^s_i) \quad \forall \alpha \in [0, 1] \,\} \right),
\end{align}
where $d(\cdot, \cdot)$ represents the $\ell_2$ distance. Analogously to the computation above, we then take the mean over the different start images and define this property as the average line distance path
\begin{align}
    \forall j=1, \ldots, N-1: \quad \operatorname{ALDP}_j := \frac{1}{M} \sum_{i=1}^M \frac{d_j(x^s_i)}{d(x^2_i, x^f_i)},
\end{align}
where we normalize the distances from the line with the distance between start and end point of the optimization trajectory to make these values scaleless.
Finally, by averaging again over optimization steps final average line distance:
\begin{align}
    \operatorname{ALD} = \frac{1}{N-1} \sum_{i=1}^{N-1} \operatorname{ALDP}_i.
\end{align}
The lower the $\operatorname{ALD}$ value, the smaller is the deviation of the optimization path from a line. For this global measure, we see a non-significant relation between the average line distance and the interpretability scores (Spearman's $r=0.02$, $p=0.86$). Analogously to the local measure explained above (gradient angle), we zoom into these results again in \cref{fig:googlenet_lineary_analysis_line_distance}. While there might be a weak anti-correlation at the beginning of the optimization path (which later turns into a weak correlation), none of these are significant. 

\paragraph{Interpretation.} We believe there is more to be understood: while it is intriguing that linearity at the beginning of the optimization trajectory is predictive of a human interpretability score, the global measure of linearity (through distance to linear interpolation) does not seem to be predictive and more investigations are needed to fully understand this phenomenon---as we say in the main paper, this can only be considered ``very preliminary evidence''. A promising way forward could be to enforce (or optimize for) different properties in neural networks and measure whether this improves human interpretability.

\section{Do fooling methods change model behavior on OOD input?}
\label{app:ood_input}
We tested both methods from \cref{sec:manipulating_visualizations} on two OOD datasets and validated that neither of them changes the behavior of the models: On ImageNet-V2, the silent unit method shows 100.0\% identical predictions to the unmanipulated model (10,000 out of 10,000); for the fooling circuit we get 99.52\% identical predictions (9,952 out of 10,000). The ~0.5\% different responses approximately match the delta on the standard, unmodified ImageNet validation set (99.49\% identical predictions as reported in \cref{subsec:fooling_circuit}). On NINCO (``No ImageNet Class Objects''), a dedicated dataset with a much stronger distribution shift (zero overlap with ImageNet classes), the silent unit method again achieves 100.0\% identical predictions to the unmanipulated model (5,878 out of 5,878); the fooling circuit method achieves 98.62\% identical predictions (5,797 out of 5,878). This indicates that both methods are fairly robust towards natural input from a different distribution compared to the ImageNet training dataset.

\section{Fooling feature visualizations of adversarially robust models}
    In \cref{sec:manipulating_visualizations}, we presented two proofs of concept that feature visualizations can manipulated: one based on a fooling circuit, and one leveraging silent units. We empirically demonstrated this for two standard vision models. Our theoretical results in \cref{tab:theory_overview} in \cref{sec:theory} indicate that feature visualizations become more reliable for networks with a sufficiently small Lipschitz constant. At the same time, a lower Lipschitz constant is also connected with higher adversarial robustness of models \citep{hein2017formal}. Therefore, a natural question is: Can feature visualizations of (more) robust models still be manipulated? We investigate this question for adversarially trained ResNet50 ($\ell_\infty, \epsilon=4/255$) by \cite{salman2020adversarially}.
    
    \paragraph{Fooling circuit} Integrating a fooling circuit is guaranteed to manipulate the visualizations of a model without changing its overall behavior as long as one can clearly distinguish between feature visualizations and standard dataset examples (see~\cref{subsec:fooling_circuit}). Therefore, to demonstrate the this approach also works for an adversarial robust model, it suffices to show that we can almost surely detect whether an input is a feature visualization. Qualitatively, this is suggested by an analogy of \cref{fig:visualization_trajectory}, i.e., a visual comparison of feature visualizations and dataset samples, in \cref{fig:robust_visualization_trajectory}. Quantitatively, we tested the \emph{same} feature visualization detector that we used for a non-robust Inception-V1 model in \cref{subsec:fooling_circuit} for the robust ResNet50 and see strong generalization results: the binary classifier's accuracy is still very high at $99.19$\,\%.
    
    \begin{figure}[tb]
        \centering
        \includegraphics{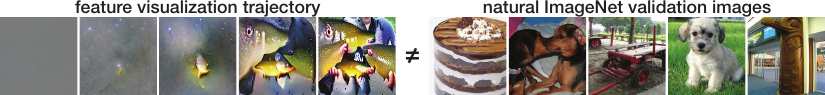}
        \caption{\textbf{Natural vs.\ synthetic distribution shift for a robust model.} As shown above for the Inception-V1 model in \cref{fig:visualization_trajectory}, there is still a clear distribution shift between feature visualizations (left) and natural images (right) an adversarial robust model (ResNet50). Visualizations at different steps in the optimization process for a randomly selected unit in the last layer of standard, unmanipulated but robust ResNet50; randomly selected ImageNet validation samples (excluding images containing faces).}
        \label{fig:robust_visualization_trajectory}
    \end{figure}
    
    \paragraph{Silent units} To apply the methodology presented in \cref{subsec:silent_unit_manipulation} to an adversarial robust model, only a single change needs to be implemented: Namely, we noticed that the ranges of activations caused by feature visualizations and dataset examples, respectively, are closer for a robust than for a non-robust model. For some units, there even exist natural dataset examples that elicit slightly higher activation than feature visualizations. We, therefore, need to adjust how we choose the bias $b$ in \cref{eq:conv-2}: Instead of using a value proportional to the maximal activation value recorded on any natural test sample, we use a value proportional to the $99$th percentile of the activation range for test samples. While this change ensures that we can manipulate the visualizations of more/all units, it comes with a small price: Namely, the network's behavior on natural samples does not remain unchanged (as for a non-robust ResNet50, \cf \cref{subsec:silent_unit_manipulation}) but drops very slightly from $63.924$\,\% to $63.784$\,\%. The analogue of \cref{fig:orthogonal_manipulation_examples} for the robust model, \cref{fig:robust_orthogonal_manipulation_examples}, shows that even a robust model can be manipulated to show near-identical feature visualizations.
    
    \begin{figure}[tb]
        \centering
        \includegraphics{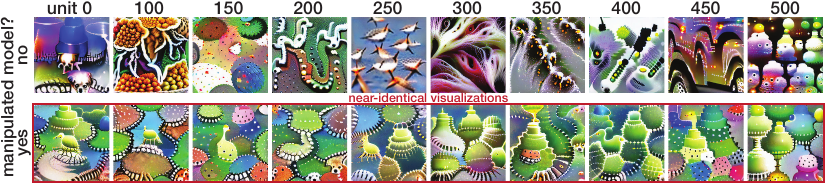}
        \caption{\textbf{Leveraging silent units to produce identical visualizations throughout a layer of a robust model.} Replication of \cref{fig:orthogonal_manipulation_examples} for a robust ResNet50 model. The top row shows feature visualizations for units of a layer (block 4-1, conv 2) in a robust but unmanipulated ResNet-50. For the bottom row, we manipulate the model such that the feature visualizations of all units become near-identical (indicated by the red box).}
        \label{fig:robust_orthogonal_manipulation_examples}
        \vspace{-0.25cm}
    \end{figure}

\section{Do feature visualizations fail the sanity check simply because they are out-of-distribution?}
In \cref{sec:viz_vs_natural_analysis}, we showed that feature visualizations fail a sanity check. An anonymous reviewer asked whether this might be due to the case that they are out-of-distribution (compared to natural images)---an interesting question that we decided to investigate. We therefore performed the analysis for two complementary additional datasets: ImageNet-V2 \citep{recht2019imagenet} which has the same classes as original ImageNet and only a small distribution shift as evidenced by a roughly 10--15\% accuracy drop of standard models, and NINCO \citep{bitterwolf2023or}, a dedicated out-of-distribution detection dataset that specifically contains no classes that are part of original ImageNet. We run the sanity check analysis with those datasets separately by computing the same baselines as before (natural images vs. natural images of the same class; natural images vs. natural images of different classes) with the difference that those natural images are now sampled from the respective OOD dataset, not from ImageNet. We then plot the previously computed similarity between feature visualizations vs. natural ImageNet images of the same class in relationship to those new baselines. The key idea is that if a simple out-of-distribution shift is responsible for strongly decreased processing similarity, then those new baselines should be drastically lower. At the same time, the normalized similarity between feature visualizations and natural ImageNet images (plotted in orange) should be a lot higher than it is in \cref{fig:Spearman_normalized_similarity} since it is now normalized with respect to the out-of-distribution baselines.

As we can see for ImageNet-V2 in \cref{fig:ImageNetV2_Spearman_normalized_similarity} and for NINCO in \cref{fig:NINCO_Spearman_normalized_similarity}, this is \emph{not} the case: even though there is a substantial distribution shift, the similarity between same-class images of those OOD datasets is still a lot higher than the natural-vs-feature-visualization similarity on ImageNet throughout the network (except for the last few layers). We conclude from this analysis that simple distribution shifts are insufficient to explain the different processing paths of feature visualizations---either the distribution shift does not play a role, or the distribution shift from feature visualizations is much larger even than the ImageNet-NINCO distribution shift (keeping in mind that NINCO is a dedicated out-of-distribution detection dataset aimed at providing a much more systematic shift than many other datasets).

\begin{figure}[H]
\centering
\includegraphics[width=\textwidth]{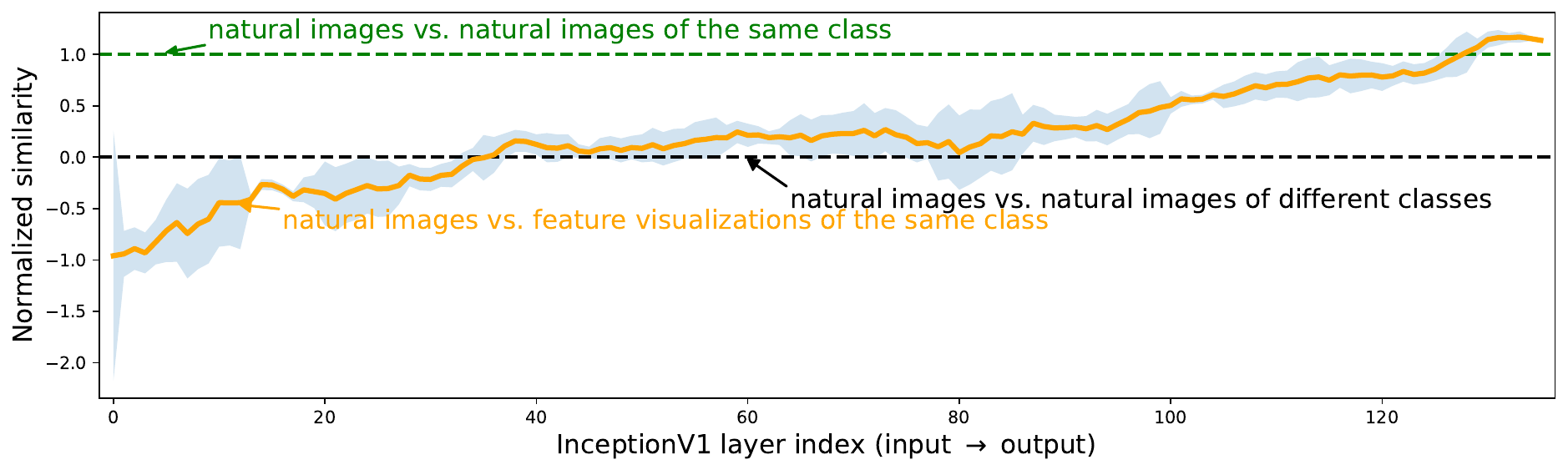}
\caption{Normalized similarity (Spearman) for ImageNet-V2.}
\label{fig:ImageNetV2_Spearman_normalized_similarity}
\end{figure}

\begin{figure}[H]
\centering
\includegraphics[width=\textwidth]{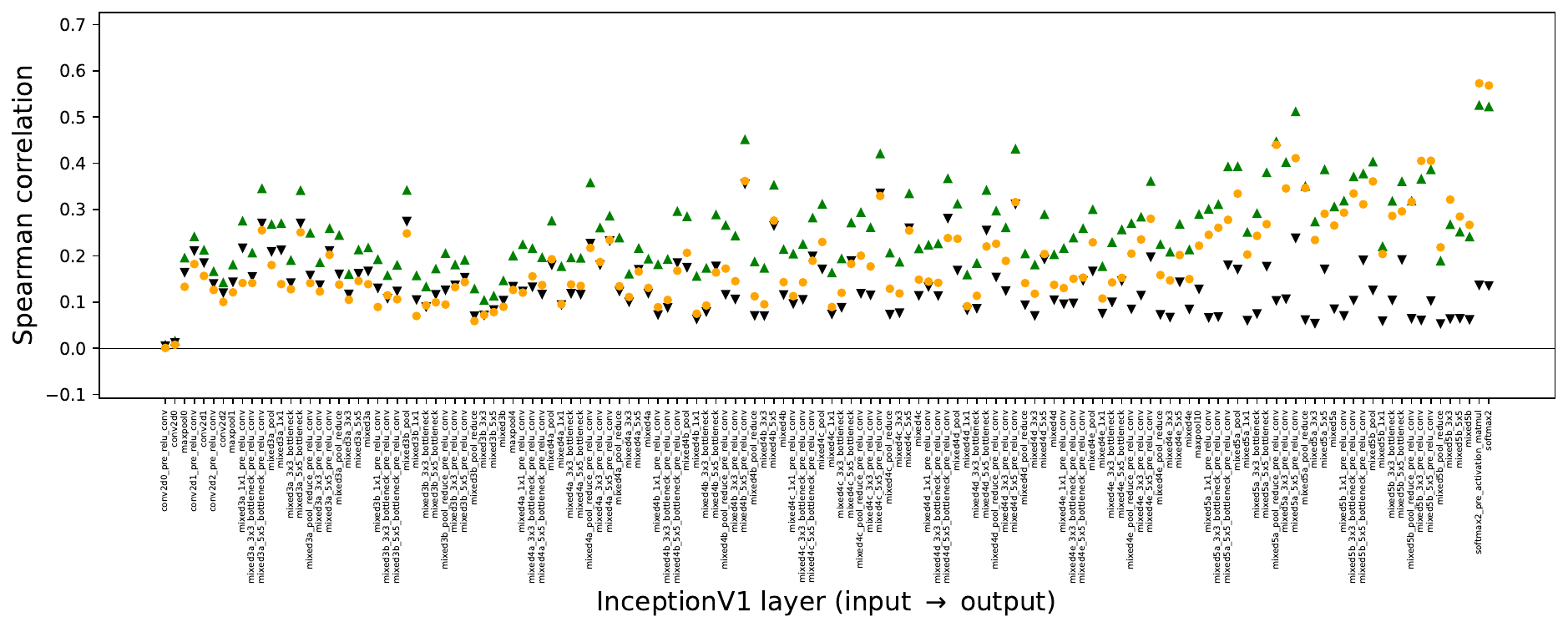}
\caption{Absolute similarity (Spearman) for ImageNet-V2.}
\label{fig:ImageNetV2_Spearman_absolute_similarity}
\end{figure}

\begin{figure}[H]
\centering
\includegraphics[width=\textwidth]{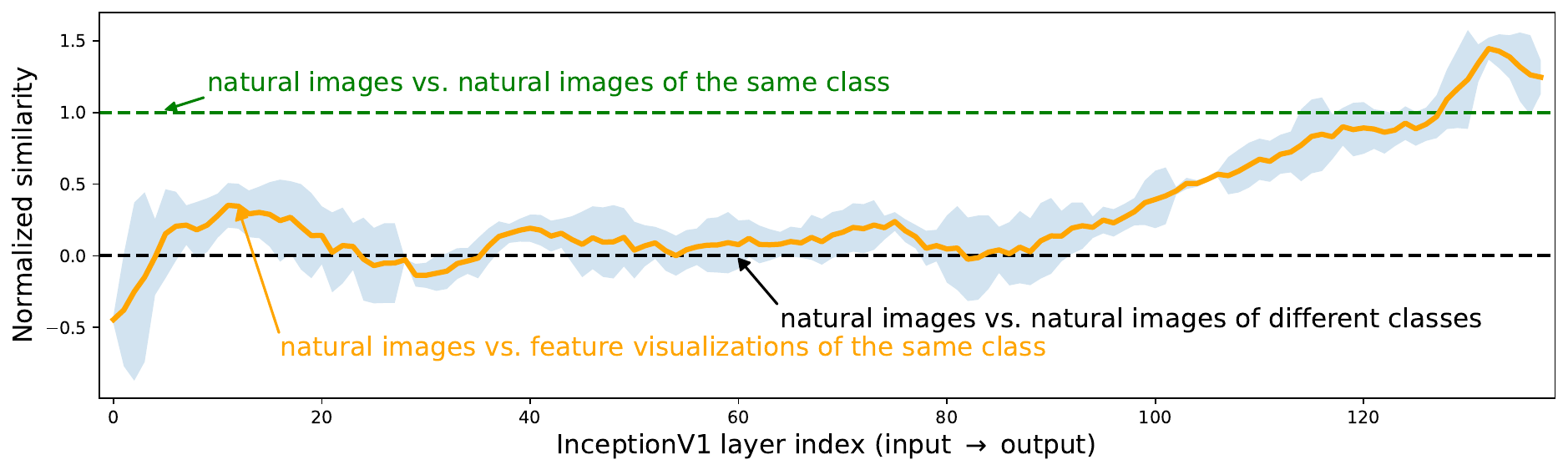}
\caption{Normalized similarity (Spearman) for NINCO.}
\label{fig:NINCO_Spearman_normalized_similarity}
\end{figure}

\begin{figure}[H]
\centering
\includegraphics[width=\textwidth]{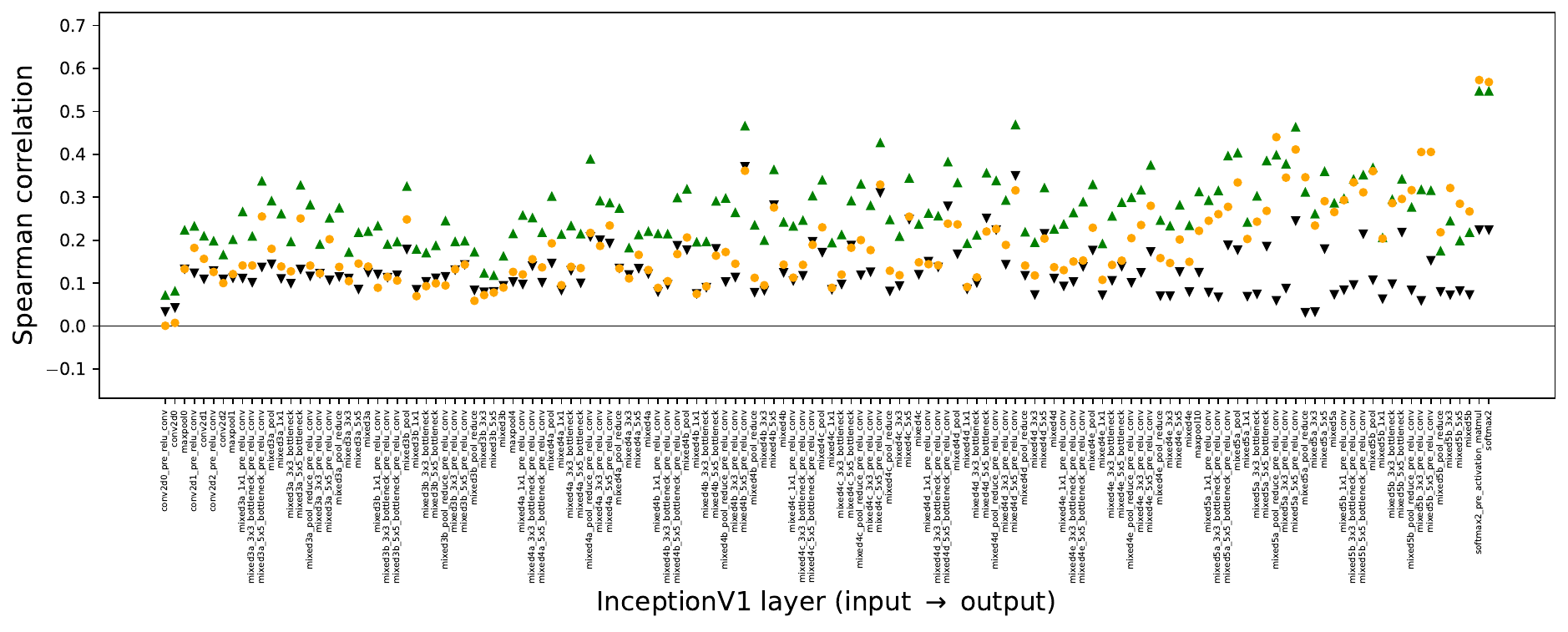}
\caption{Absolute similarity (Spearman) for NINCO.}
\label{fig:NINCO_Spearman_absolute_similarity}
\end{figure}

\section{Silent unit manipulation: sensitivity studies} \label{sec:orthogonal_manipulation_sensitivity_analysis}
    This section extends the empirical results above and investigates the sensitivity of the fooling method based on silent units with respect to different (external) choices: Does the fooling method work for multiple layers at once? Does it work for earlier layers? Does its success depend on how feature visualizations are initialized? 

\paragraph{Fooling visualizations with different initial images.}
    The feature visualization shown above are all generated through gradient descent on the input, starting with a random Gaussian noise image. As the choice of the initial image might influence the success rate of the proposed fooling method we here test another initialization: \Cref{fig:silent_units_layer4-1_natural_seed} shows how visualizations for the same units as in \cref{fig:orthogonal_manipulation_examples} look when initialized with a randomly selected natural seed image as initialization. As shown in the bottom row, for a manipulated model we still obtain the same result, i.e., near-identical visualizations for each unit in the entire layer.
    \begin{figure}[tbh]
        \centering
        \includegraphics[width=0.9\linewidth]{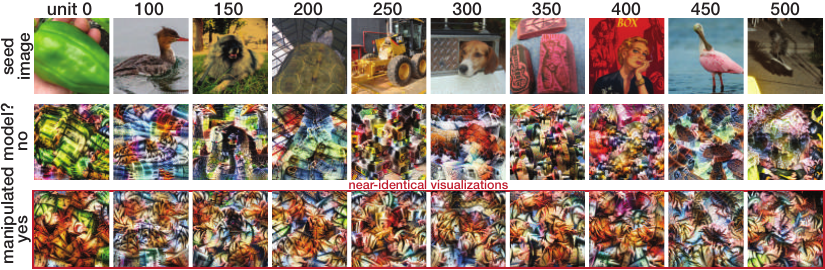}
        \caption{Feature visualizations initialized with a natural seed image can also be fooled.}   
        \label{fig:silent_units_layer4-1_natural_seed}
    \end{figure}

\paragraph{Fooling units from early layers.}
    While \cref{fig:orthogonal_manipulation_examples} demonstrates the success of the fooling method for a mid layer of a ResNet, \cref{fig:silent_units_layer3-2} shows it also works for an earlier layer.
    \begin{figure}[tbh]
        \centering
        \includegraphics[width=0.85\linewidth]{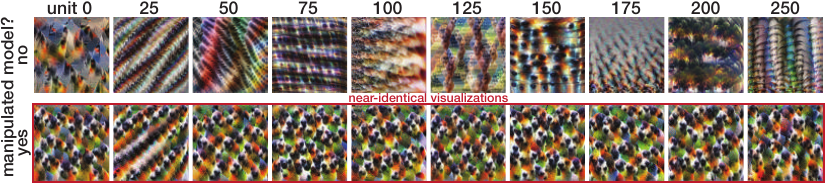}
        \caption{The proposed fooling method does not only work for mid/late layers but also for early ones.}
        \label{fig:silent_units_layer3-2}
    \end{figure}

\paragraph{Fooling units from multiple layers together.}
    Moreover, we find that multiple (earlier) layers (layer3\_2\_conv2 and layer3\_2\_conv2) can be manipulated at the same time, as shown in~\cref{fig:silent_units_layer3-2-and-layer3-5}. 
    \begin{figure}[tbh]
        \centering
        \includegraphics[width=\linewidth]{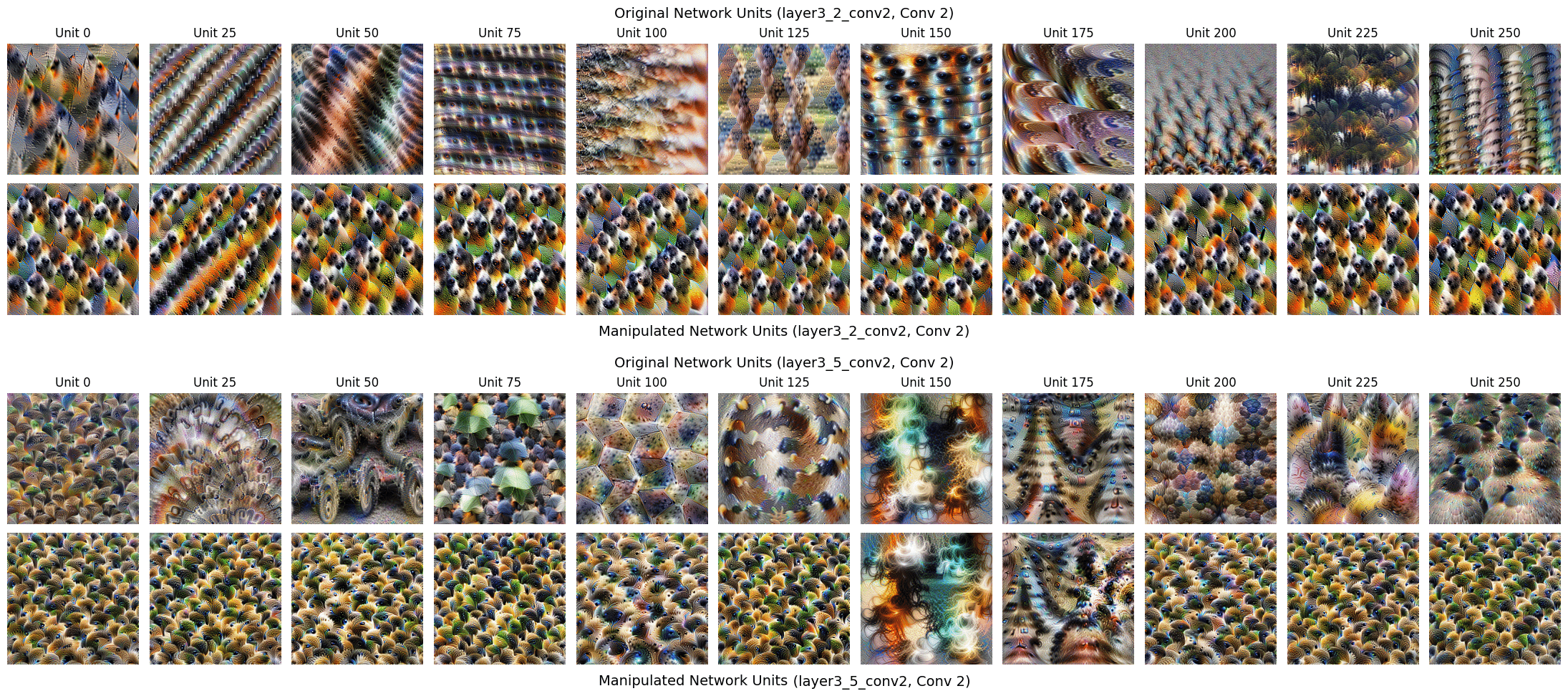}
        \caption{The fooling method can be successfully applied to multiple layers at once.}
        \label{fig:silent_units_layer3-2-and-layer3-5}
    \end{figure}

\section{What happens when a slightly different method for visualization is used which is considered out of distribution by the classifier?}
While it would be fairly easy to train the classifier with more data augmentation / different feature visualizations, our existing classifier is already robust enough to handle variations in the input distribution. We conducted an experiment where we kept the classifier fixed and systematically varied the distribution by changing the transformations applied during the feature visualization optimization approach. The lucent library uses a standard set of transformations including jitter(8), random\_scale, random\_rotation, followed by jitter(4). Even if we switch all of those off, which results in substantially different visualizations, the classifier still reliably detects the feature visualizations with the exact same level of accuracy. Therefore, those changes are insufficient to circumvent our fooling method.

\section{Limitations}
\label{app:limitations}
We see the following potential limitations:
\begin{enumerate}
    \item We design methods that fool feature visualizations. Once it is known that a certain fooling method might be used, it is easy to develop a detection mechanism. That said, the space of potential fooling methods is vast. Therefore, developing a specific detection mechanism would probably lead to a pattern similar to adversarial attacks and defenses: after an attack is developed, a detection/circumvention method defends, which is then again circumvented by a revised attack/fooling method.
    \item The fooling methods that we developed in \cref{sec:manipulating_visualizations} assume bad intent. Most models are developed with good intent. However, we believe that the reliability of interpretability methods should not rely on assuming good intentions. The experiments in \cref{sec:viz_vs_natural_analysis} and the theory from \cref{sec:theory} are independent of good/bad intent assumptions.
    \item No sanity check is perfect, and like any sanity check, ours is just a necessary but not a sufficient condition for a reliable feature visualization. For instance, if the training data contains spurious correlations (all cows are on grass); a unit in the network is selective for grass and not for cows but the feature visualization shows realistic-looking cows on grass (rather than just grass), then the visualizations would pass the sanity check without raising a warning about the spurious correlation present in the visualization. We would love to see more research on sanity checks---to the best of our knowledge we provide the first one which hopefully serves as a motivation for research on both better visualizations and more sanity checks.
    \item The potential assumptions on the function space listed in \cref{tab:theory_overview} are not exhaustive. It is possible that other assumptions enable stronger prediction. Furthermore, our theory is a worst-case analysis. It may be possible to go beyond worst-case analyses---an aspect we discuss in \cref{app:beyond_worstcase}---but the strength of our theoretical counterexamples is that they can be realized by very simple functions, hence we are not ``cherry-picking'' complicated functions that SGD would never learn.
    \item The investigated networks, Inception-V1 and ResNet-50, are of course not exhaustive either. None of our methods is specific to those networks. This means that other networks could be equipped with a fooling circuit, too. At the same time, the empirical results might look different for other networks, which would be an interesting direction to explore in future work.
    \item One could argue that the fooling circuit in \cref{fig:fooling_circuit} only deceives a user when looking at unit $A$, whereas the other units still have their original visualization. That's correct: the fooling circuit manipulates the visualization of $A$ but not of \eg units $D$ or $F$. From a single unit perspective, this is already problematic since it means we can't trust a unit's visualization. It would be interesting avenue for future work to develop networks where every single unit's visualization is misleading.
    \item There is no one definition of what it means to ``understand'' or ``explain'' a neural network, since those are very vague terms. We seek to be precise about our definition and motivate it with expectations about feature visualization stated in the literature (\cref{app:literature_statements}), but we realize that this means not everyone's notion of ``understanding'' / ``explaning'' neural networks can be captured by our definition.
    \item For the silent unit approach, it might be possible to make a feature visualization approach that does not rely on maximizing the neuron output but rather just increasing it to below some chosen threshold. We see a parallel to adversarial attacks and defenses here: given a fixed feature visualization, it is easy to come up with a fixed manipulated model (as we show). Given a fixed manipulated model, it is probably not difficult to circumvent the manipulation technique with an updated feature visualization, and the circle continues---just like for adversarial attacks, where it is easy to defend if the attack is kept fixed. That said, this wouldn't fix the underlying problem and can easily be circumvented through an adaptive attack. Therefore, the adversarial attack community has called for (and largely agreed on) an adaptive approach \citep{tramer2020adaptive}. It is possible that we might see a similar pattern of cat-and-mouse-games for feature visualizations in the years to come.

\end{enumerate}

\section{Image sources}
\label{app:image_sources}
\paragraph{\cref{fig:arbitrary_visualizations}.}
``Girl with a Pearl Earring'' by Johannes Vermeer was downloaded from \href{https://commons.wikimedia.org/wiki/File:1665_Girl_with_a_Pearl_Earring.jpg}{here} and is public domain according to the website. ``Puppies'' (West Highland White Terrier puppies) by Lucie Tylová, Westik.cz was downloaded from \href{https://en.wikipedia.org/wiki/Puppy#/media/File:Westie_pups.jpg
}{here} and is licensed under \href{https://creativecommons.org/licenses/by-sa/3.0/}{CC BY-SA 3.0} according to the website. ``Mona Lisa'' by Leonardo da Vinci was downloaded from \href{https://commons.wikimedia.org/wiki/File:Mona_Lisa,_by_Leonardo_da_Vinci,_from_C2RMF_retouched.jpg}{here} and is public domain according to the website. All three images were cropped to size $224 \times 224$ pixels. The photographs were then embedded in the weights of a single convolutional layer and to some degree recovered by the feature visualization method, subject to distortion by the method's transformations.

\end{document}